\newtheorem{theorem}{Theorem}
\newtheorem{example}{Example}
\newtheorem{remark}{Remark}
\newtheorem{lemma}{Lemma}
\newtheorem{proposition}{Proposition}
\newtheorem{Corollary}{Corollary}
\newtheorem{simulation}{Simulation}
\newenvironment{customassumption}[1]
  {\innercustomthm}
  {\endinnercustomthm}
\newcommand{\p}{{\rm I}\kern-0.18em{\rm P}}
\newcommand{\1}{{\rm 1}\kern-0.24em{\rm I}}
\newcommand{\E}{{\rm I}\kern-0.18em{\rm E}}
\newcommand{\R}{{\rm I}\kern-0.18em{\rm R}}
\newcommand\numberthis{\addtocounter{equation}{1}\tag{\theequation}}
\newcommand{\scolor}[1]{\textcolor{red}{#1}}
\DeclareMathOperator*{\argmin}{arg\,min}
\title{\Large \textsc{\bf{Asymmetric error control under imperfect supervision: a label-noise-adjusted Neyman-Pearson umbrella algorithm }}}
\begin{document}

%\author[1]{}
\author[1]{Shunan Yao}
\author[2]{Bradley Rava}
\author[2,3]{Xin Tong}
\author[2,3]{Gareth James}
\affil[1]{\footnotesize Department of Mathematics, Dana and David Dornsife College of Letters, Arts and Sciences, University of Southern California.}
\affil[2]{\footnotesize Department of Data Sciences and Operations, Marshall School of Business, University of Southern California. }
\affil[3]{To whom correspondence should be addressed. xint@marshall.usc.edu, gareth.james@marshall.usc.edu}

\maketitle

\begin{abstract}
    Label noise in data has long been an important problem in supervised learning applications as it affects the effectiveness of many widely used classification methods. Recently, important real-world applications, such as medical diagnosis and cybersecurity, have generated renewed interest in 
    %call for solutions to address label noise problems under 
    the Neyman-Pearson (NP) classification paradigm, which constrains the more severe type of error (e.g., the type I error) under a preferred level while minimizing the other (e.g., the type II error). However, there has been little research on the NP paradigm under label noise. It is somewhat surprising that even when common NP classifiers ignore the label noise in the training stage, they are still able to control the type I error with high probability. However, the price they pay is excessive conservativeness of the type I error and hence a significant drop in power (i.e., $1 - $ type II error).  Assuming that domain experts provide lower bounds on the corruption severity, we propose the first theory-backed algorithm that adapts most state-of-the-art classification methods to the training label noise under the NP paradigm. The resulting classifiers not only control the type I error with high probability under the desired level but also improve power. 
    
  {\small \bf KEY WORDS}: label noise, classification, Neyman-Pearson (NP) paradigm, type I error, umbrella algorithm.  
    
\end{abstract}

%\newpage 
\section{Introduction}

%\textcolor{red}{Shunan, this is my overall suggestion: as the paper spans more than 50 pages, do not edit a place only locally.  As the first author of this paper, you should frequently read through the whole thing. Then your edits will be more coherent. Also, at this point, you should be read again reviewers questions and make sure that the updates have responded to their concerns, even through some responses haven't been filled out yet. Moreover, please read all my comments first before you address any particular problem. In this way, you have a feeling about how much work is needed overall. I understand that the real data might further consume sometime, but you can work on other things while the code is running.  }

Most classification methods assume a perfectly labeled training dataset. Yet, it is estimated that in real-world databases around five percent of labels are incorrect \citep{Orr1998, Reman1998}.  Labeling errors might come from insufficient guidance to human coders, poor data quality, or human mistakes in decisions, among others \citep{Brazdil.Konolige.1990, hickey1996noise, Brodley.Fridel.1999}. Specifically, in the medical field, a 2011 survey of more than $6{,}000$ physicians found that half said they encountered diagnostic errors at least once a month \citep{MacDonald2011}. The existence of labeling errors in training data is often referred to as \textit{label noise}, \textit{imperfect labels} or \textit{imperfect supervision}. It belongs to a more general \textit{data corruption} problem, which refers to ``anything which obscures the relationship between description and class" \citep{hickey1996noise}.

The study of label noise in supervised learning has been a vibrant field in academia. \textit{On the empirical front}, researchers have found that  some statistical learning methods such as quadratic discriminant analysis \citep{lachenbruch1979note} and k-NN \citep{okamoto1997average}, can be greatly affected by label noise and have accuracy seriously reduced, while other methods, such as linear discriminant analysis \citep{lachenbruch1966discriminant}, are more label noise tolerant. Moreover, one can modify AdaBoost \citep{cao2012noise}, perceptron algorithm \citep{khardon2007noise} and neural networks \citep{sukhbaatar2014learning}, so that they are more tolerant to  label noise. Data cleansing techniques were also developed, such as in \cite{guyon1996discovering} and \cite{brodley1999identifying}. \textit{On the theoretical front}, \cite{natarajan2013learning} provided a guarantee for risk minimization in the setting of convex surrogates. \cite{manwani2013noise} proved label noise tolerance of risk minimization for certain types of loss functions, and \cite{ghosh2015making} extended the result by considering more loss types. \cite{liu2015classification} proposed learning methods with importance-reweighting which can minimize the risk.  \cite{blanchard_flaska_handy_pozzi_scott_2016} studied intensely the \textit{class-conditional corruption model}, a model that many works on label noise are based on. In particular, theoretical results about parameter estimation and consistency of classifiers under this model were presented in their work. Most recently, \cite{cannings2020classification} derived innovative theory of excess risk for general classifiers.

%\textcolor{red}{As requested by the reviewers, some more details on the literature."more details about what is known, and what approaches are taken in these works would better set the contributions of this paper into context"} \textcolor{purple}{The reviewers may questions are about the theory papers part.  So we just need to add summary to these ones.  But after numerical section is done.  This part will need to improve. (1) Natarajan 2013 employs a CCN model which is different from ours. And Their theory assumes known rates.  (2)   }

In many classification settings, one type of error may have far worse consequences than the other. For example, a biomedical diagnosis/prognosis that misidentifies a benign tumor as malignant will cause distress and potentially unnecessary medical procedures, but the alternative, where a malignant tumor is classified as benign, will have far worse outcomes. Other related predictive applications include cybersecurity and finance.  Despite great advances in the label-noise classification literature, to our knowledge, no classifier has been constructed to deal with this asymmetry in error importance under label noise so as to control the level of the more severe error type.
%controlled on the population level at some desirable target.}     %these methods are designed to minimize the overall error rate. However, this approach implicitly assumes that all errors are equally bad, 

In this paper, we concentrate on the classification setting involving both mislabeled outcomes and error importance asymmetry. The Neyman-Pearson (NP) paradigm \citep{cannon2002learning, scott2005neyman}, which controls the false-negative rate (FNR, a.k.a., type I error\footnote{Note that type I error in our work is defined to be the conditional probability of misclassifying a 0 instance as class 1. Moreover, we code the more severe class as class 0. In the disease diagnosis example, the disease class would be class 0. }) under some desired level while minimizing the false-positive rate (FPR, a.k.a., type II error), provides a natural approach to this problem. However, to the best of our knowledge, there has been no work that studies how label noise issues affect the control of the more severe FNR.
%false-negative rate (FNR, a.k.a., type I error\footnote{Note that type I error in our work is defined to be the conditional probability of misclassifying a 0 instance as class 1. Moreover, we code the more severe class as class 0. In the disease diagnosis example, the disease class would be class 0. }) in biomedical diagnosis/prognosis.  Given prevalent inaccuracies of historical diagnosis in medical records and the importance of FNR control, we are motivated to design algorithms that train on corrupted labels but correctly control the population-level FNR (regarding the true labels) with high probability. Critical predictive applications in other fields, such as cybersecurity and finance, also call for such a solution.  
%
%To construct classifiers that controls the FNR, the Neyman-Pearson (NP) paradigm \citep{cannon2002learning, scott2005neyman}, which controls FNR under some desired level while minimizing false-positive rate (FPR, a.k.a., type II error), is the natural formulation. 
We show that if one trains a standard NP classifier on corrupted labels (e.g., the NP umbrella algorithm \citep{tong2018neyman}), then the actual achieved FNR is far below the control target, resulting in a very high, and undesirable, FPR. 

%This problem motivates us to devise a new label-noise-adjusted umbrella algorithm that corrects for the labeling errors to produce a lower FPR while still controlling the FNR. The construction of such an algorithm is challenging because we must identify the optimal correction level without any training data from the uncorrupted distribution. To address this challenge, we employ a common class-conditional noise model and derive the population-level difference between type I error regarding the true labels and that regarding the corrupted ones. Based on this difference, we propose a sample-based correction term that, even without observing any uncorrupted labels, can correctly adjust the NP umbrella algorithm to significantly reduce the FPR while still controlling the FNR.  

%%

This problem motivates us to devise a new label-noise-adjusted umbrella algorithm that corrects for the labeling errors to produce a lower FPR while still controlling the FNR. The construction of such an algorithm is challenging because we must identify the optimal correction level without any training data from the uncorrupted distribution. To address this challenge, we employ a common class-conditional noise model and derive the population-level difference between the type I errors of the true and corrupted labels. Based on this difference, we propose a sample-based correction term that, even without observing any uncorrupted labels, can correctly adjust the NP umbrella algorithm to significantly reduce the FPR while still controlling the FNR.  

Our approach has several advantages. First, it is the first theory-backed methodology in the label noise setting to control population-level type I error (i.e., FNR) regarding the true labels. Concretely, we can show analytically that the new algorithm produces classifiers that have a high probability of controlling the FNR below the desired threshold with a FPR lower than that provided by the original NP umbrella algorithm.    Second, when there are no labeling errors, our new algorithm reduces to the original NP algorithm.  Finally, we demonstrate on both simulated and real-world data, that under the NP paradigm the new algorithm dominates the original unadjusted one and competes favorably against existing methods which handle label noise in classification.

The rest of the paper is organized as follows. In Section \ref{sec:model}, we introduce some notation and a corruption model to study the label noise. In Section \ref{sec:method}, we demonstrate the ineffectiveness of the original NP umbrella algorithm under label noise and propose a new label-noise-adjusted version.  The validity and the high-probability type I error control property of the new algorithm are established in Section \ref{sec:theory}. Simulation and real data analysis are conducted in Section \ref{sec:sim_and_real_data}, followed by a Discussion section.  All proofs, additional numerical results, and technical results are relegated to the Appendix. %\textcolor{red}{After checking Section 3, come back and check this paragraph.}

%\textcolor{red}{Will come back to this paragraph and see whether the wording ``the new algorithm" is good enough. Or we need to make it more complicated and claim two versions. Probably argue that Algorithm 1 is a special case of Algorithm 2 is a way to go.} 

%\textcolor{purple}{change all literature review to past tense. }

\section{Notation and Corruption Model}\label{sec:model}

Let $(X,Y,\tilde{Y})$ be a random triplet, where $X \in \mathcal{X} \subset \R^d$ represents features, $Y \in \{0,1\}$ encodes the true class labels and $\tilde{Y} \in \{0,1\}$ the corrupted ones. Note that in our setting, we cannot observe $Y$; the observations come from $(X,\tilde{Y})$.    Denote $X^0 \triangleq X \mid (Y=0)$ and $X^1\triangleq X \mid (Y=1)$. Similarly, denote $\tilde{X}^0 \triangleq X \mid (\tilde{Y}=0)$ and $\tilde{X}^1\triangleq X \mid (\tilde{Y}=1)$. Denote by $\p$ and $\E$ generic probability measure and expectation whose meanings depend on the context.   For any Borel set $A\subset \mathcal{X}$, we denote
\[
\begin{aligned}
P_0(A)&= \p(X \in A|Y=0)\,,\text{ } P_1(A)= \p(X \in A|Y=1)\,, \\
\tilde{P}_0(A)&= \p(X \in A|\tilde{Y}=0)\,,\text{ } \tilde{P}_1(A)= \p(X \in A|\tilde{Y}=1)\,. \\
\end{aligned}
\]
Then, we denote by $F_0\,, F_1\,, \tilde{F}_0$ and $\tilde{F}_1$ their respective distribution functions and by $f_0\,,f_1\,,\tilde{f}_0$ and $\tilde{f}_1$ the density functions, assuming they exist. Moreover, for a measurable function $T : \mathcal{X} \rightarrow \R$, we denote, for any $z \in \R$,
\[
\begin{aligned}
F^{T}_0(z) &= P_0(T(X) \leq z)\,,\text{ }
F^{T}_1(z) = P_1(T(X) \leq z)\,, \\
\tilde{F}^{T}_0(z) &=\tilde{P}_0(T(X) \leq z)\,,\text{ }
\tilde{F}^{T}_1(z)= \tilde{P}_1(T(X) \leq z)\,. \\
\end{aligned}
\]

Since the effect of, and adjustment to, the label noise depend on the type and severity of corruption, we need to specify a corruption model to work with. Our choice for this work is the \textit{class-conditional noise (contamination) model}, which is specified in the next assumption.

\begin{customassumption}{1}\label{assumption:mixture}
There exist constants $m_0, m_1 \in [0,1]$ such that for any Borel set $A\subset \mathcal{X}$,
\begin{align}\label{eqn:model}
    \tilde{P}_0(A) = m_0P_0(A) + (1-m_0)P_1(A)\, \text{ }\text{ and }\text{ }
    \tilde{P}_1(A) = m_1P_0(A) + (1-m_1)P_1(A)\,.
\end{align}
Furthermore, assume $m_0 > m_1$ but both quantities can be unknown. Moreover, let $m_0^\#, m_1^\# \in [0,1]$ be known constants such that $m_0^\# \geq m_0$ and $m_1^\#  \leq m_1$.
\end{customassumption}

%The following is a canonical example for Assumption \ref{assumption:mixture}:
\begin{example}\label{ex: gmm}[An example of Assumption \ref{assumption:mixture}]
Let $X^0 \sim \mathcal{N}(\mu_0, \sigma^2)$ and $X^1 \sim \mathcal{N}(\mu_1, \sigma^2)$, where $\mu_0, \mu_1 \in \R$ and $\sigma > 0$. Then $\tilde{F}_0(z) = m_0\Phi(\frac{z-\mu_0}{\sigma}) + (1-m_0)\Phi(\frac{z-\mu_1}{\sigma})$ and $\tilde{F}_1(z) = m_1\Phi(\frac{z-\mu_0}{\sigma}) + (1-m_1)\Phi(\frac{z-\mu_1}{\sigma})$, where $\Phi(\cdot)$ is the distribution function of $\mathcal{N}(0,1)$. With the choice of $\mu_0 = 0$, $\mu_1 = 1$, $\sigma = 1$, $m_0 = 0.9$, and $m_1 = 0.05$, the density functions $f_0$, $\tilde f_0$, $f_1$ and $\tilde f_1$ are plotted in Figure \ref{fig:mixture_density}.
\end{example}

\begin{figure}
\begin{center}
    \includegraphics[width = \textwidth, height = 7cm]{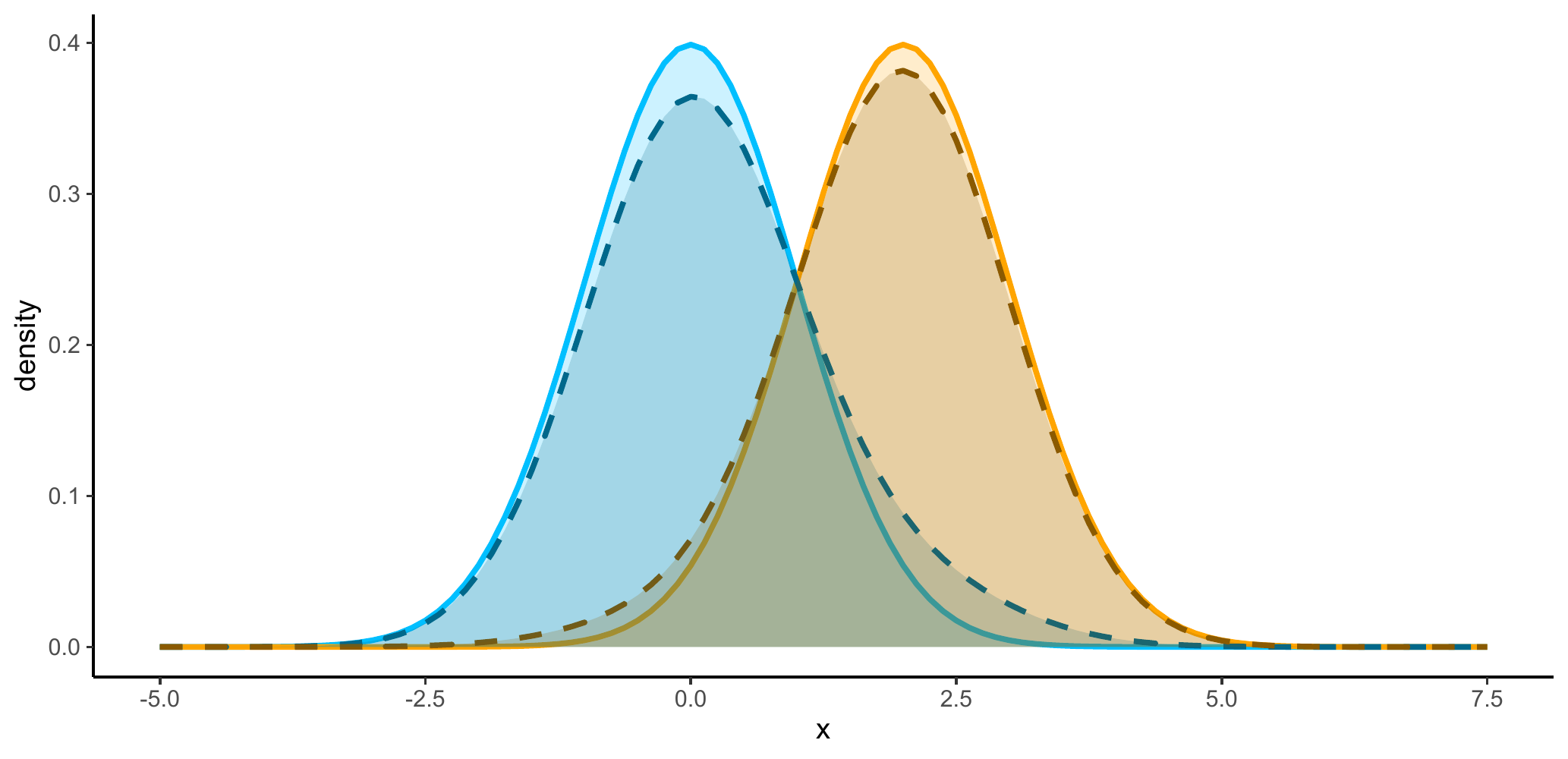}
    \caption{Density plots in Example \ref{ex: gmm}. True (lighter and solid) and corrupted (darker and dashed). }
    \label{fig:mixture_density}
\end{center}
\end{figure}

Note that equation \eqref{eqn:model} specifies perhaps the simplest model for label noise in supervised learning. Here, $m_0$ and $m_1$ represent the severity of corruption levels. Concretely, $m_0$ can be interpreted as the proportion of true $0$ observations among corrupted $0$ observations, and $m_1$  the proportion of true $0$ observations among corrupted $1$ observations. The assumption $m_0 > m_1$  means that corrupted class $0$ resembles true class $0$ more than corrupted class $1$ does, and that  corrupted class $1$ resembles true class $1$ more than corrupted class $0$ does.  However, this assumption does not mean that corrupted class $0$ resembles true class $0$ more than it resembles true class $1$ (i.e, $m_0>1/2$) or that corrupted class $1$ resembles true class $1$ more than it resembles true class $0$ (i.e., $m_1 < 1/2$). Note that by the way our model is written, $m_0=1$ and $m_1=0$ correspond to the no label noise situation; as such, the roles of $m_0$ and $m_1$ are not symmetric. Hence, the assumptions  $m_0^\# \geq m_0$ and $m_1^\#  \leq m_1$ mean that we know some \textit{lower bounds} of the corruption levels.

The class-conditional label noise model  has been widely adopted in the literature  \citep{ natarajan2013learning, liu2015classification,blanchard_flaska_handy_pozzi_scott_2016}.  We note here that the assumption $m_0 >m_1$ aligns with the \textit{total noise assumption} $\pi_0 + \pi_1 < 1$ in \cite{blanchard_flaska_handy_pozzi_scott_2016} as $\pi_0$ and $\pi_1$ in their work correspond to $1-m_0$ and $m_1$ in Assumption 1, respectively. In \cite{natarajan2013learning} and \cite{liu2015classification}, the label noise was modeled through the label flipping probabilities: $\mu_i = \p(\tilde Y = 1-i|Y=i)$, $i = 0, 1$. This alternative formulation is related to our formulation via Bayes' rule.
An in-depth study of the class-conditional label noise model, including mutual irreducibility and identifiability, was presented in \cite{blanchard_flaska_handy_pozzi_scott_2016}. Moreover, \cite{blanchard_flaska_handy_pozzi_scott_2016} developed a noisy label trained classifier based on weighted cost-sensitive surrogate loss and established its consistency. Similarly, \cite{natarajan2013learning} provided two methods to train classifiers, both relying on classification-calibrated surrogate loss; bounds for respective excess risks of these two methods were also given. % and it was shown that both methods achieve risk minimization. 
Moreover, \cite{liu2015classification} proposed an importance reweighting method and extended the result in \cite{natarajan2013learning} to all surrogate losses. Other than \cite{blanchard_flaska_handy_pozzi_scott_2016}, which briefly discussed the NP paradigm at the population level, in all aforementioned papers, though loss functions vary, the goal of classification is to minimize the overall risk. Our work focuses on the NP paradigm. Moreover, we focus on high probability control on the type I error based on finite samples,  in contrast to asymptotic results in the literature.

%\textcolor{purple}{point to discuss: from flip rates to corruption levels; the details of derivation. a system of linear equations with two variables?}

%\textcolor{red}{Shunan, this paragraph still needs to be updated. Also need to talk about Natarajan and Blanchard papers here. Also, I feel that the current sentence to summarize Liu and Tao is not the best way. Also, should add something to contrast their missions with ours}

In this work, we take the perspective that the domain experts can provide under-estimates of corruption levels.  In the literature, there are existing methods to estimate these levels.  For example, \cite{liu2015classification} and \cite{blanchard_flaska_handy_pozzi_scott_2016} developed methods to estimate $\pi_i$'s and $\mu_i$'s, and showed consistency of their estimators.  In numerical studies, we apply the method in \cite{liu2015classification} to estimate $m_0$ and $m_1$\footnote{Note that though their method targets at $\mu_i$'s, estimates of $m_i$'s in equation \eqref{eqn:model} can be constructed from those of $\mu_i$'s by the Bayes' theorem.}. % with the knowledge of $\p(\tilde Y = 0)$ and $\p(\tilde Y = 1)$ which can be approximated by empirical proportions of corrupted classes. 
Numerical evidence shows that using these estimators in our proposed algorithm fails to establish a high probability control of the true type I error. In fact, even using consistent and unbiased estimators of $m_0$ and $m_1$ as inputs of our proposed algorithm would not be able to control the true type I error with high probability. One such case is  demonstrated in Simulation \ref{sim:normal_estimator} of the Appendix,  where estimators for $m_0$ and $m_1$ are normally distributed and centered at the true values. To have high probability control on the true type I error, we do need the ``under-estimates'' of corruption levels as in Assumption \ref{assumption:mixture}.

\section{Methodology}\label{sec:method}

In this section, we first formally introduce the Neyman-Pearson (NP) classification paradigm and review the NP umbrella algorithm \citep{tong2018neyman} for the uncorrupted label scenario (Section \ref{sec:3.1}).  Then we provide an example demonstrating that in the presence of label noise, naively implementing the NP umbrella algorithm leads to excessively conservative type I error. i.e., type I error much smaller than the control target $\alpha$. We analyze and capitalize on this phenomenon, and present new  noise-adjusted versions of the NP umbrella algorithm, Algorithm \ref{alg:adj_umbrella} for known corruption levels  (Section \ref{sec:adj_alg}) and Algorithm $1^{\#}$ for unknown corruption levels  (Section \ref{sec:adj_alg_unknown}). Algorithm \ref{alg:adj_umbrella} can be considered as a special case of Algorithm $1^{\#}$: $m_0^{\#}=m_0$ and $m_1^{\#}=m_1$. %, but a separate introduction of the former is easier to present our algorithm development.    %\textcolor{red}{edit the previous sentence after finishing the whole section.}
%\textcolor{red}{change the first paragraph as we have Section 3.3 now.}

A few additional notations are introduced to facilitate our discussion.  A classifier $\phi: \mathcal{X}\rightarrow\{0,1\}$ maps from the feature space to the label space. The (population-level) type I and II errors of $\phi(\cdot)$ regarding the \textit{true} labels (a.k.a., true type I and II errors) are respectively  $R_0(\phi) = P_0(\phi(X) \neq Y)$ and $R_1(\phi)= P_1(\phi(X) \neq Y)$.  The (population-level) type I and II errors of $\phi(\cdot)$ regarding the \textit{corrupted} labels (a.k.a., corrupted type I and II errors) are respectively $\tilde{R}_0(\phi) = \tilde{P}_0(\phi(X) \neq \tilde{Y})$ and $\tilde{R}_1(\phi) = \tilde{P}_1(\phi(X) \neq \tilde{Y})$. In verbal discussion in this paper, \textit{type I error} without any suffix refers to type I error regarding the true labels.

\subsection{The NP umbrella algorithm without label noise}\label{sec:3.1}

The NP paradigm \citep{cannon2002learning, scott2005neyman} aims to mimic the  NP oracle 
\[
\phi^*_{{\alpha}} \in \argmin_{\phi:\  R_0(\phi) \leq {\alpha}} R_1(\phi)\,,
\]
where $\alpha\in(0,1)$ is a user-specified level that reflects the priority towards the type I error. In practice, with or without label noise, based on training data of finite sample size, it is usually impossible to ensure $R_0(\cdot)\leq \alpha $ almost surely. Instead, we aim to control the type I error with high probability. Recently, the NP umbrella algorithm \cite[]{tong2018neyman} has attracted significant attention\footnote{At the time of writing, the NP umbrella package has been downloaded over $35{,}000$ times.}. This algorithm works in conjunction with any score based classification method (e.g., logistic regression, support vector machines, or random forest) to compress a $d$-dimensional feature measurement to a $1$-dimensional score, and then threshold the score to classify. 
%Common examples include logistic regression, support vector machines, or random forests.
%\textcolor{red}{
%A score based classification method  first compresses a $d$-dimensional feature measurement to a $1$-dimensional score, and then thresholds the score to classify. The NP umbrella algorithm does not interfere with the score construction stage, but focus on choosing a proper threshold.}  
Specifically, \textit{given a (score based) classification method}, the NP umbrella algorithm uses a model-free order statistics approach to decide the threshold,  attaining a high probability control on type I error with minimum type II error \textit{for that method}.  Moreover, when coupling with a classification method that matches the underlying data distribution, the NP umbrella algorithm also achieves a diminishing excess type II error, i.e.,  $R_1(\hat\phi_{\alpha}) - R_1(\phi^*_{\alpha})\rightarrow 0$.  For example, \cite{Tong.Xia.Wang.Feng.2020} showed that under a linear discriminant analysis (LDA) model, an LDA classifier with the score threshold determined by the NP umbrella algorithm satisfies both the control on type I error and a diminishing excess type II error\footnote{These two properties together were coined as the \textit{NP oracle inequalities} by  \cite{rigollet2011neyman}. Classifiers with these properties were constructed with non-parametric assumptions in \cite{tong2013plug} and \cite{zhao2016neyman}. }.  Next we will review the implementation of the NP umbrella algorithm.  

%In view of this, \cite{rigollet2011neyman} proposed that a good classifier $\hat\phi_{\alpha}$ under the NP paradigm should satisfy \textit{NP oracle inequalities}:  it holds  simultaneously with high probability  that (i) $R_0(\hat \phi)\leq \alpha$ and (ii) the excess type II error $R_1(\hat\phi_{\alpha}) - R_1(\phi^*_{\alpha})$ diminishes with explicit rates. Classifiers that satisfy these inequalities were constructed under the nonparametric \citep{tong2013plug, zhao2016neyman} and parametric \citep{Tong.Xia.Wang.Feng.2020} assumptions.

%This departure forgoes an ambition to derive an excess type II error bound, because no distributional assumptions are assumed and there is no guarantee that a user-preferred classification method will actually match the underlying data distribution.

%The NP umbrella algorithm has attracted attention from practitioners who routinely just use a handful of classification methods. At the time when this article is written, the number of downloads of the companying package has exceeded $20,000$. In the same spirit, we will pursuit an ``umbrella" idea to adapt popular classification methods to the NP paradigm in the presence of label noise. A \textit{fundamental challenge} here is that we can only train our classifiers using \textit{corrupted} data, but would like to have type I error regarding the \textit{true} labels controlled.  

Let $\mathcal{S}^0 = \{X_j^0\}_{j=1}^{M_0}$ and $\mathcal{S}^1 = \{X_j^1\}_{j=1}^{M_1}$, respectively be the \textit{uncorrupted} observations in classes 0 and 1, where $M_0$ and $M_1$ are the number of observations from each class\footnote{Note that the uncorrupted data $\mathcal{S}^0$ and $\mathcal{S}^1$ are not available in our present label noise setting and we only use them here for review purposes.}. Then, given a classification method (i.e., base algorithm, e.g., logistic regression), the NP umbrella algorithm is implemented by randomly splitting the class 0 data $\mathcal{S}^0$ into two parts: $\mathcal{S}^0_{\text{b}}$ and $\mathcal{S}^0_{\text{t}}$. The first part,  $\mathcal{S}^0_{\text{b}}$, together with $\mathcal{S}^1$, is used to train the \textit{base} algorithm, while the second part  $\mathcal{S}^0_{\text{t}}$ determines the {\it threshold} candidates. Specifically, we train a base algorithm with scoring function $\hat T(\cdot)$ (e.g., the sigmoid function in logistic regression) using $\mathcal{S}^0_{\text{b}}\, \cup\,  \mathcal{S}^1$, apply $\hat T(\cdot)$ on  $\mathcal{S}^0_{\text{t}}$ ($|\mathcal{S}^0_{\text{t}}| = n$)  to get threshold candidates  $\{t_1, \ldots, t_n\}$, and sort them in an increasing order $\{t_{(1)}, \ldots, t_{(n)}\}$. Then the NP umbrella algorithm proposes classifier $\hat{\phi}_{k_*}(\cdot) = \1\{\hat{T}(\cdot) > t_{(k_*)}\}$, where 
\begin{equation}
\nllabel{eqn.kstar}
    k_*   = \text{min}\left\{k\in\{1, \ldots, n\}:\sum_{j=k}^{n} {n \choose j}(1-{\alpha})^j{\alpha}^{(n-j)}\leq \delta \right\}\,,
\end{equation} 
in which $\delta$ is a user-specified  tolerance probability of the type I error exceeding $\alpha$. The key to this approach is that \cite{tong2018neyman} established, for all $\hat \phi_k (\cdot) = \1\{\hat T (\cdot)> t_{(k)}\}$ where $k\in\{1, \ldots, n\}$,  it holds $\p(R_0(\hat \phi_{k})>\alpha)
\leq \sum_{j=k}^{n}  {n \choose j}(1-{\alpha})^j{\alpha}^{(n-j)}$, where $\p$ corresponds to random draws of $\mathcal{S}^0$ and $\mathcal{S}^1$, as well as potential randomness in the classification method (e.g., random forest), and the inequality becomes an equality when $\hat T$ is continuous almost surely.  In view of this inequality and the definition for $k_*$, we have $\p(R_0(\hat \phi_{k_*})>\alpha)\leq \delta$, and $\hat \phi_{k_*}$ achieves the smallest type II error among the $\hat \phi_{k}$'s that respect the $(1-\delta)$ probability control of the type I error. We call this algorithm  the \textit{original} NP umbrella algorithm to contrast with the newly developed versions.

%\textcolor{red}{update Figure 2, so that it has a panel b, which implements the new algorithm.}

\subsection{Algorithm \ref{alg:adj_umbrella}: label-noise-adjusted NP umbrella algorithm with known corruption levels}\label{sec:adj_alg}

Returning to our errors in labels problem leads one to ask what would happen if we were to directly apply  the original NP umbrella algorithm to the label noise setting? The results are mixed. While this algorithm successfully controls type I error, it tends to be massively conservative, leading to very low type I errors, but high type II errors. The next example illustrates this phenomenon.  
\begin{figure}[t]
\begin{center}
    \includegraphics[width = \textwidth, height = 7cm]{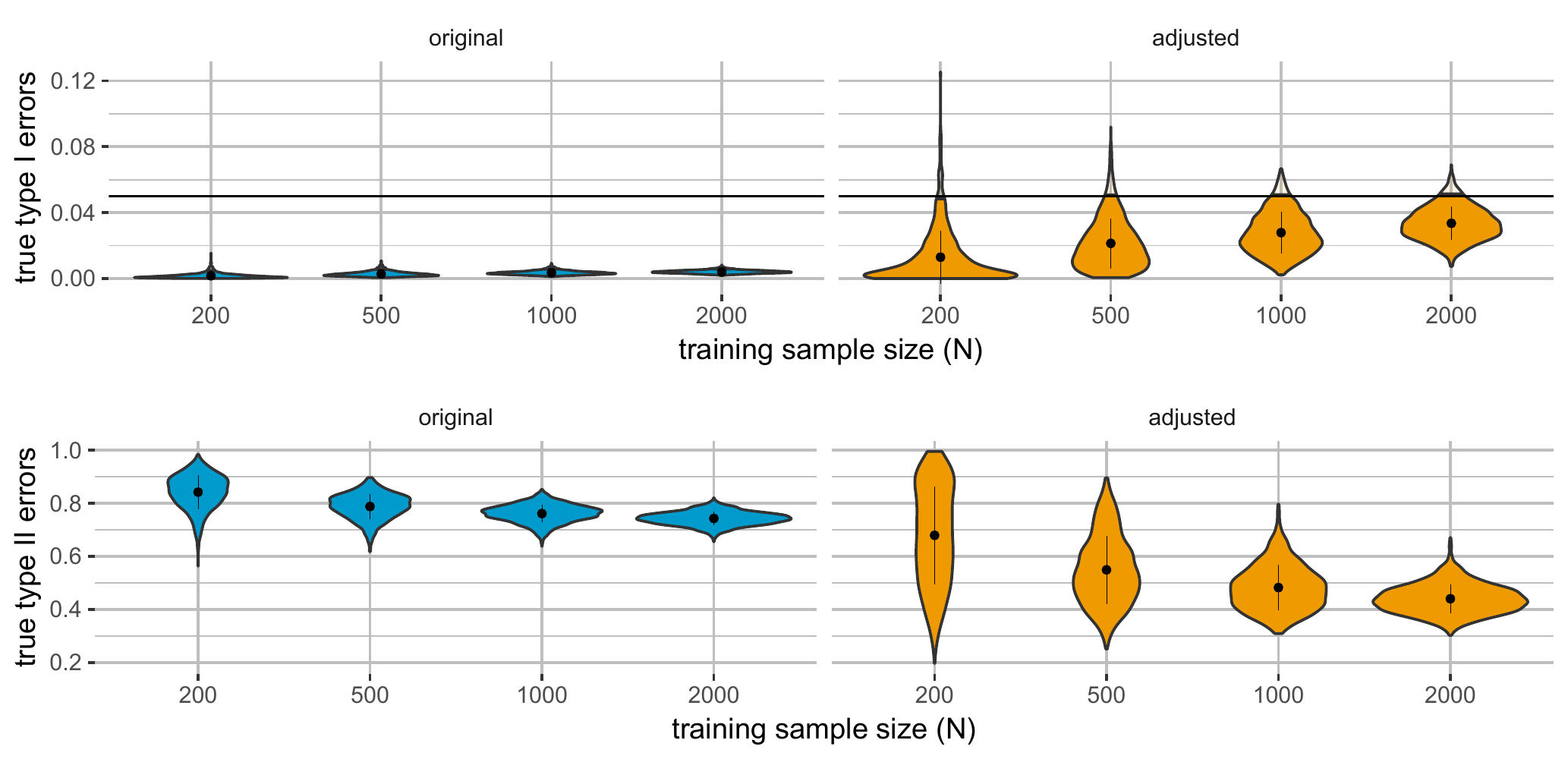}
    \caption{The original NP umbrella algorithm vs. a label-noise-adjusted version  for Example \ref{ex:trivial_classifier}.  The plots in the left panel (blue) are the true type I and II errors for the original NP umbrella algorithm. The plots in the right panel (orange) are the true type I and II errors for the label-noise-adjusted NP umbrella algorithm with known corruption levels. The black dot and vertical bar in every violin represent mean and standard deviation, respectively. In the top row, the horizontal black line is $\alpha = 0.05$ and the boundaries between lighter and darker color in each violin plot mark the $1-\delta =95\%$ quantiles. }\label{fig:trivial}
\end{center}
\end{figure}

\begin{example}\label{ex:trivial_classifier}
Let $X^0 \sim \mathcal{N}(0,1)$ and $X^1 \sim \mathcal{N}(2,1)$, $m_0 = 0.85$, $m_1 = 0.15$, $\alpha = 0.05$ and $\delta = 0.05$. For simplicity, we use the identity scoring function: $\hat{T}(X) = X$. We generate $N\in\{200, 500, 1000, 2000\}$ corrupted class $0$ observations and train a classifier $\hat \phi_{k_*}(\cdot)$ based on them.  Due to normality, we can analytically calculate the type I and II errors regarding the true labels. The above steps are repeated $1{,}000$ times for every value of $N$ to graph the violin plots of both errors as shown in the left panel of Figure \ref{fig:trivial}. Clearly, all the achieved true type I errors are much lower than the control target $\alpha$ and true type II errors are very high \footnote{To make a contrast, we also plot in the right panel of Figure \ref{fig:trivial}  the true type I and II errors of $\hat \phi_{k^*}(\cdot)$, the classifier constructed by the  label-noise-adjusted NP umbrella algorithm with known corruption levels to be introduced in the next section. The details to generate $\hat \phi_{k^*}(\cdot)$'s are skipped here, except we reveal that corrupted class 1 observations, in addition to the corrupted class 0 observations, are also needed to construct the thresholds.}.

% Then, we generate $N$ corrupted class $0$ observations and $N$ corrupted class $1$ observations and compute $\hat \phi_{k^*}(\cdot)$, the classifier generated by label-noise-adjusted NP umbrella algorithm which will be later in this section. The details are included in Algorithm \ref{alg:adj_umbrella} with modifications that we omit set $\tilde{\mathcal{S}}_{\text{b}}^0$ and $\tilde{\mathcal{S}}_{\text{b}}^1$. That is, we directly set $\hat{T}(X) = X$, divide $200$ corrupted $0$ observations evenly into $\tilde{\mathcal{S}}_{\text{t}}$ and $\tilde{\mathcal{S}}_{\text{e}}^0$, and use all $200$ corrupted class $1$ observations as $\tilde{\mathcal{S}}_{\text{e}}^1$.
%  As a remark, the oracle NP classifier for the uncorrupted model is $\phi(X) = \1\{X > 1.64\}$, with type I error equalling $0.05$ and type II error equalling $0.36$. As shown in Figure \ref{fig:trivial}, $\hat{\phi}_{k^*}$ controls true type I errors with high probability while greatly reduces true type II errors compared with $\hat{\phi}_{k_*}$.

\end{example}

The phenomenon illustrated in the left panel of Figure \ref{fig:trivial} is not a contrived one. Indeed, under the class-conditional noise model (i.e., Assumption \ref{assumption:mixture}), at the same threshold level,  the tail probability of corrupted class $0$ is greater than that of true class $0$ since the corrupted $0$ distribution is a mixture of true $0$ and $1$ distributions.  
% Hence, even if we have a corrupted sample of infinity size and therefore would be able to construct the corrupted level-$
% \alpha$ NP oracle, this corrupted oracle would have true type I error smaller than the target $\alpha$.  
Figure \ref{fig:intuition} provides further illustration. In this figure, the black vertical line ($x=2.52$) marks the threshold of the classifier $\1\{X>2.52\}$ whose corrupted type I error (i.e., the right tail probability under the orange dashed curve) is $0.05$.  In contrast, its true type I error (i.e., the right tail probability under the blue solid curve) is much smaller.

\begin{figure}
\begin{center}
    \includegraphics[width = \textwidth, height = 7cm,]{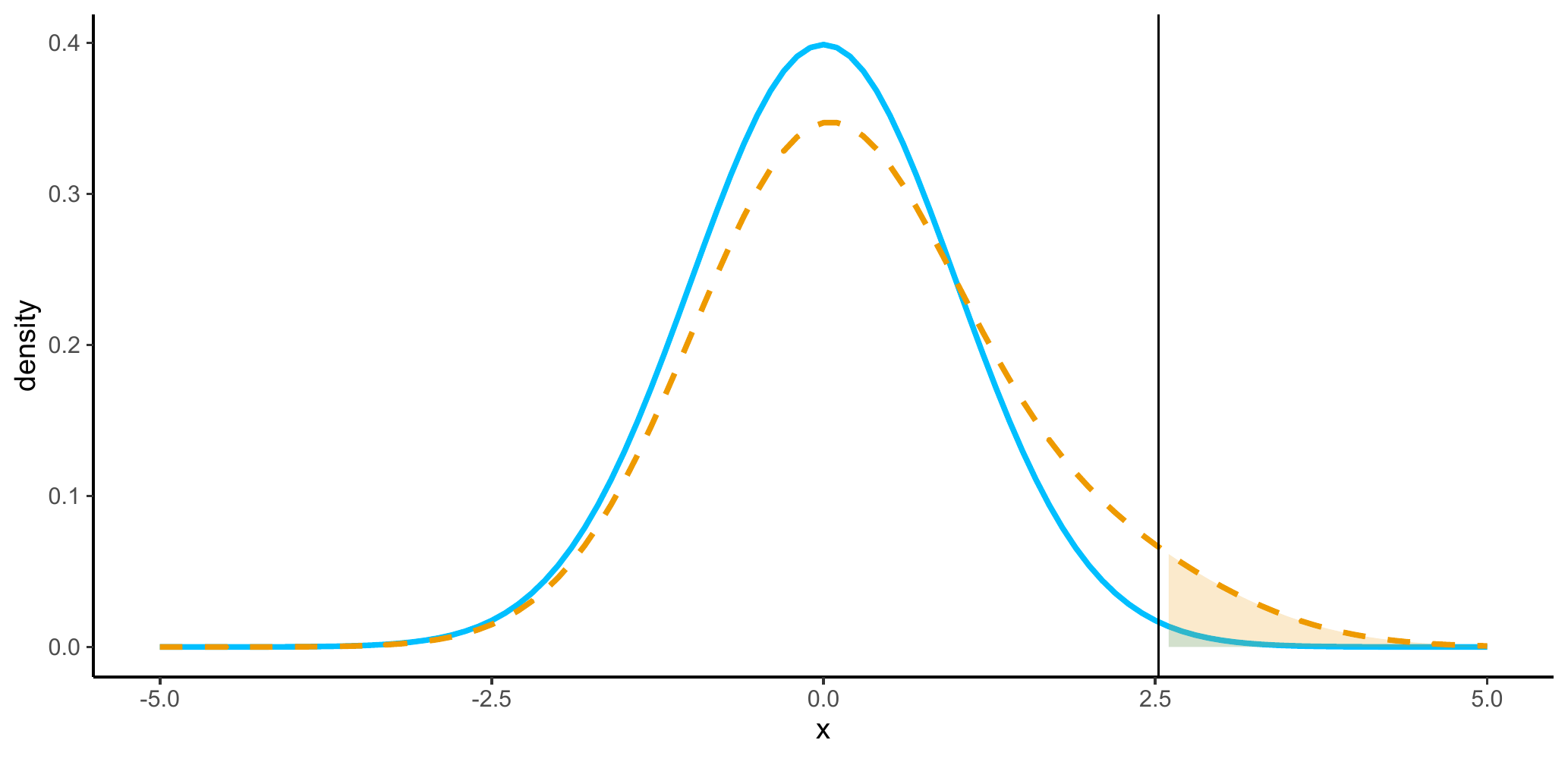}
    \caption{The blue solid curve is the density of true class $0$ (i.e., $\mathcal{N}(0,1)$) and the orange dashed curve is the density of corrupted class $0$ (i.e., a mixture of $\mathcal{N}(0,1)$ and $\mathcal{N}(2,1)$ with $m_0 = 0.85$). The black vertical line  marks the threshold of the classifier $\1\{X > 2.52 \}$ whose corrupted type I error is $0.05$.  }\label{fig:intuition}
\end{center}
\end{figure}

The above observation motivates us to create new label-noise-adjusted NP umbrella algorithms by carefully studying the discrepancy between true and corrupted type I errors, whose population-level relation is channeled by the class-conditional noise model and can be estimated based on data with corrupted labels alone. We will first develop a version for known corruption levels (i.e., Algorithm \ref{alg:adj_umbrella}) and then a variant for unknown corruption levels (i.e., Algorithm $1^{\#}$). Although the latter variant is suitable for most applications, we believe that presenting first the known corruption level version streamlines the reasoning and presentation.   
%We delay the detailed rationale and theory-backing to Section \ref{sec:theory} and present here the implementation of the new algorithm. 

For methodology and theory development, we assume the following sampling scheme. Let $\tilde{\mathcal{S}}^0 = \{\tilde{X}_j^0\}_{j=1}^{N_0}$ be \textit{corrupted} class $0$ observations and $\tilde{\mathcal{S}}^1 = \{\tilde{X}^1_j\}_{j=1}^{N_1}$ \textit{corrupted} class $1$ ones. The sample sizes $N_0$ and $N_1$ are considered to be non-random numbers, and we assume that all observations in $\tilde{\mathcal{S}}^0$ and $\tilde{\mathcal{S}}^1$ are independent. Then, we divide $\tilde{\mathcal{S}}^0$ into \textit{three} random disjoint non-empty subsets. The first two parts  $\tilde{\mathcal{S}}^0_{\text{b}}$ and  $\tilde{\mathcal{S}}^0_{\text{t}}$  are used to train the {\it base} algorithm and determine the {\it threshold} candidates, respectively. The third part  $\tilde{\mathcal{S}}^0_{\text{e}}$ is used to {\it estimate} a correction term to account for the label noise. Similarly, we randomly divide $\tilde{\mathcal{S}}^1$ into \textit{two} disjoint non-empty subsets: $\tilde{\mathcal{S}}^1_{\text{b}}$ and $\tilde{\mathcal{S}}^1_{\text{e}}$. 
 %Furthermore, we denote $\tilde{\mathcal{S}}_{\text{b}} = \tilde{\mathcal{S}}_{\text{b}}^0 \cup \tilde{\mathcal{S}}^1_{\text{b}}$, which contains both corrupted class $0$ and $1$ observations. 
 
 %The subindex `b' indicates that $\tilde{\mathcal{S}}_{\text{b}}$ is for training a base algorithm (i.e., a scoring function), `t' indicates that $\tilde{\mathcal{S}}_{\text{t}}$ is for thresholding, and `e' indicates that $\tilde{\mathcal{S}}^0_{\text{e}}$ and $\tilde{\mathcal{S}}^1_{\text{e}}$ are researved for some estimation. 
% For $\delta \in (0, 1)$ and $k\in\{1, \ldots, n\}$, let $\alpha_{k,\delta}$ be such that
%\begin{align}\label{eqn:alpha_k_delta}
%    \sum_{j=k}^n{n \choose j}\alpha_{k,\delta}^{n-j}(1-\alpha_{k,\delta})^j = \delta\,.
%\end{align}
 
Let $\hat{T}(\cdot)$ be a scoring function trained on $\tilde{\mathcal{S}}_{\text{b}}=\tilde{\mathcal{S}}_{\text{b}}^0 \cup \tilde{\mathcal{S}}^1_{\text{b}}$.  
%We denote $\tilde n  = |\tilde{\mathcal{S}}_{\text{b}}|$, $n = |\tilde{\mathcal{S}}_{\text{t}}|$, $n_0 = |\tilde{\mathcal{S}}^0_{\text{e}}|$, and $n_1 = |\tilde{\mathcal{S}}^1_{\text{e}}|$. \textcolor{red}{consider moving the previous sentence to section 4} 
We apply $\hat T(\cdot)$ to elements in $\tilde{\mathcal{S}}^0_{\text{t}}$ and sort them in an increasing order: $\{t_{(1)}, \ldots, t_{(n)}\}$, where $n = |\tilde{\mathcal{S}}^0_{\text{t}}|$\footnote{In Appendix \ref{sec:sampling_scheme_summary}, we summarize the notations related to the sampling scheme for the readers' convenience.}. These will serve as the threshold candidates, just as in the original NP umbrella algorithm. However, instead of $k_*$, the label-noise-adjusted NP umbrella algorithm with known corruption levels will take the order $k^*$ defined by
$$
k^* =  \min\{k\in\{1, \ldots, n\}:\alpha_{k,\delta} - \hat{D}^+(t_{(k)}) \leq \alpha \}\,,
$$   
where $\alpha_{k,\delta}$\footnote{The existence and uniqueness of $\alpha_{k,\delta}$ are ensured by Lemma \ref{lemma:existence_of_alpha_k_delta} in the Appendix.} satisfies
\begin{align}\label{eqn:alpha_k_delta}
    \sum_{j=k}^n{n \choose j}\alpha_{k,\delta}^{n-j}(1-\alpha_{k,\delta})^j = \delta\,,
\end{align}
$\hat{D}^+(\cdot) =\hat D(\cdot)\vee 0:= \max(\hat D(\cdot), 0)$ and $\hat D(\cdot ) = \frac{1-m_0}{m_0-m_1}\left(\hat{\tilde{F}}_0^{\hat{T}}(\cdot) - \hat{\tilde{F}}_1^{\hat{T}}(\cdot)\right)$, in which $\hat{\tilde{F}}_0^{\hat{T}}(\cdot)$ and $\hat{\tilde{F}}_1^{\hat{T}}(\cdot)$ are empirical estimates of $\tilde{F}_0^{\hat{T}}(\cdot)$ and $\tilde{F}_1^{\hat{T}}(\cdot)$ based on $\tilde{\mathcal{S}}^0_{\text{e}}$ and $\tilde{\mathcal{S}}^1_{\text{e}}$, respectively.

\begin{algorithm}[htb!]
\caption{Label-noise-adjusted NP Umbrella Algorithm with known corruption levels
\label{alg:adj_umbrella}}
\SetKw{KwBy}{by}
\SetKwInOut{Input}{Input}\SetKwInOut{Output}{Output}
\SetAlgoLined

\Input{$\tilde{\mathcal{S}}^0$: sample of corrupted $0$ observations \\
$\tilde{\mathcal{S}}^1$: sample of corrupted $1$ observations \\
$\alpha$: type I error upper bound, $0 < \alpha < 1$\\
$\delta$: type I error violation rate target, $0 < \delta < 1$ \\
$m_0$: probability of a corrupted class 0 sample being of true class 0 \\
$m_1$: probability of a corrupted class 1 sample being of true class 0  \\
}

$\tilde{\mathcal{S}}^0_{\text{b}}$, $\tilde{\mathcal{S}}^0_{\text{t}}$,$\tilde{\mathcal{S}}^0_{\text{e}} \leftarrow$ random split on $\tilde{\mathcal{S}}^0$ 

$\tilde{\mathcal{S}}^1_{\text{b}}$, $\tilde{\mathcal{S}}^1_{\text{e}} \leftarrow$ random split on $ \tilde{\mathcal{S}}^1$

$\tilde{\mathcal{S}}_{\text{b}} \leftarrow \tilde{\mathcal{S}}^1_{\text{b}} \cup \tilde{\mathcal{S}}^0_{\text{b}}$ \tcp*{combine $\tilde{\mathcal{S}}^0_{\text{b}}$ and $\tilde{\mathcal{S}}^1_{\text{b}}$ as $\tilde{\mathcal{S}}_{\text{b}}$}

$\hat{T}(
\cdot) \leftarrow$ \textsf{base classification algorithm}$(\tilde{\mathcal{S}}_{\text{b}})$ \tcp*{train a scoring function on $\tilde{\mathcal{S}}_{\text{b}}$}

%$n\leftarrow|\mathcal{S}_{\text{t}}|$ \tcp*{denote $n$ as the size of $\mathcal{S}_{\text{t}}$}

%$\{\tilde{X}^0_1,\tilde{X}^0_2,\ldots,\tilde{X}^0_{n}\} \leftarrow \mathcal{S}_{\text{t}}$ \tcp*{write $\mathcal{S}_{\text{t}}$ in terms of its entries}

$\mathcal{T}_{\text{t}} = \{t_1, t_2, \ldots, t_{n}\} \leftarrow \hat T(\tilde{\mathcal{S}}^0_{\text{t}})$ \tcp*{apply $\hat{T}$ to every entry in $\tilde{\mathcal{S}}_{\text{t}}$}

$\{t_{(1)}, t_{(2)}, \ldots, t_{(n)}\} \leftarrow \textsf{sort}(\mathcal{T}_{\text{t}})$
	   
$\mathcal{T}^0_{\text{e}} \leftarrow \hat{T}(\tilde{\mathcal{S}}^0_{\text{e}})$

$\mathcal{T}^1_{\text{e}} \leftarrow \hat{T}(\tilde{\mathcal{S}}^1_{\text{e}})$ \tcp*{apply $\hat{T}$ to all elements in $\tilde{\mathcal{S}}^0_{\text{e}}$ and $\tilde{\mathcal{S}}^1_{\text{e}}$}

\For{$k$ in $\{1,\dots,n\}$}{

$\alpha_{k,\delta} \leftarrow \textsf{BinarySearch}(\delta, k, n)$ \tcp*{compute $\alpha_{k, \delta}$ through binary search}

$\hat{\tilde{F}}_0^{\hat{T}}(t_{(k)}) \leftarrow \left|\mathcal{T}^0_{\text{e}}\right|^{-1}\cdot\sum_{t \in \mathcal{T}^0_{\text{e}}}\1\{t \leq t_{(k)}\}$ 

$\hat{\tilde{F}}_1^{\hat{T}}(t_{(k)}) \leftarrow \left|\mathcal{T}^1_{\text{e}}\right|^{-1}\cdot\sum_{t \in \mathcal{T}^1_{\text{e}}}\1\{t \leq t_{(k)}\}$ \tcp*{compute the empirical distributions}

$\hat{D}(t_{(k)}) \leftarrow \frac{1-m_0}{m_0-m_1}\left(\hat{\tilde{F}}_0^{\hat{T}}(t_{(k)}) - \hat{\tilde{F}}_1^{\hat{T}}(t_{(k)})\right)$ \tcp*{compute an estimate of $\tilde{R}_0 - R_0$}

$\hat{D}^+(t_{(k)}) \leftarrow \hat{D}(t_{(k)}) \vee 0$ \tcp*{if $\hat{D}(t_{(k)})$ is negative, then set it to $0$}
}

$k^* \leftarrow \min\{k\in\{1, \ldots, n\}:\alpha_{k,\delta} - \hat{D}^+(t_{(k)}) \leq \alpha \}$ \tcp*{select the order}

$\hat{\phi}_{k^*}(\cdot) \leftarrow \1\{\hat{T}(\cdot) > t_{(k^*)}\}$ \tcp*{construct an NP classifier}
 
\Output{$\hat{\phi}_{k^*}(\cdot)$}
\end{algorithm}

The entire construction process of $\hat \phi_{k^*}(\cdot) = \1\{\hat T(\cdot) > t_{(k^*)}\}$ is summarized and detailed in Algorithm \ref{alg:adj_umbrella}. In this algorithm, to solve $\alpha_{k,\delta}$, we use a binary search subroutine (Algorithm \ref{alg:bi_search} in Appendix \ref{sec:binary_search}) on the function $x \mapsto \sum_{j=k}^n{n\choose k}x^{n-j}(1-x)^j$, leveraging its strict monotone decreasing property in $x$. Interested readers are referred to the proof of Lemma \ref{lemma:existence_of_alpha_k_delta} in the Appendix for further reasoning. Currently we randomly split $\tilde{\mathcal{S}}^0$ and $\tilde{\mathcal{S}}^1$ respectively  into three and two equal sized subgroups. %in Algorithm \ref{alg:adj_umbrella}, we split them into even pieces. 
An optimal splitting strategy could be a subject for future research.

The key to the new algorithm is $\hat D^+(\cdot)$, which adjusts for the label corruption. Indeed, the original NP umbrella algorithm can be seen as a special case of our approach where $\hat D^+(\cdot)=0$. The numerical advantage of the new algorithm is demonstrated in the right panel of Figure \ref{fig:trivial} and in Section \ref{sec:sim_and_real_data}. 
We will prove in the next section that the \textit{label-noise-adjusted NP classifier}  $\hat{\phi}_{k^*}(\cdot) = \1\{\hat{T}(\cdot) > t_{(k^{*})}\}$ controls true type I error with high probability while avoiding the  excessive conservativeness of the original NP umbrella algorithm. Note that in contrast to the deterministic order $k_*$ in the original NP umbrella algorithm, the new order $k^*$ is random, calling for much more involved technicalities to establish the theoretical properties of $\hat{\phi}_{k^*}(\cdot)$.

\subsection{Algorithm $1^{\#}$: label-noise-adjusted NP umbrella algorithm with unknown corruption levels}\label{sec:adj_alg_unknown}

% For most applications in practice, accurate corruption levels $m_0$ and $m_1$ are inaccessible. To address this, we propose Algorithm $1^{\#}$, a variant of Algorithm \ref{alg:adj_umbrella} that uses estimated $m_0$ and $m_1$ (i.e., $m_0^\#$ and $m_1^\#$) as inputs. Specifically, when estimating $\tilde R_0 - R_0$, Algorithm $1^{\#}$ uses  $\hat{D}_\#(t_{(k)}) = \frac{1-m_0^\#}{m_0^\#-m_1^\#}\left(\hat{\tilde{F}}_0^{\hat{T}}(t_{(k)}) - \hat{\tilde{F}}_1^{\hat{T}}(t_{(k)})\right)$ and $\hat{D}^+_\#(t_{(k)}) = \hat{D}_\#(t_{(k)}) \vee 0$. Then, Algorithm $1^{\#}$ delivers the NP classifier $\hat{\phi}_{k^*_\#}(\cdot) = \1\{\hat{T}(\cdot) > t_{(k^*_\#)}\}$, where $k^*_\# = \min\{k\in\{1, \ldots, n\}:\alpha_{k,\delta} - \hat{D}^+_\#(t_{(k)}) \leq \alpha \}$. Due to the similarity with Algorithm \ref{alg:adj_umbrella}, we do not re-produce the other steps of Algorithm $1^{\#}$ to write it out in a full algorithm format.  

For most applications in practice, accurate corruption levels $m_0$ and $m_1$ are inaccessible.
To address this, we propose Algorithm $1^{\#}$, a simple variant of Algorithm \ref{alg:adj_umbrella} that replaces $m_0$ and $m_1$ with estimates $m_0^{\#}$ and $m_1^{\#}$. In all other respects the two algorithms are identical. Specifically, when estimating $\tilde R_0 - R_0$, Algorithm $1^{\#}$ uses  $\hat{D}_\#(t_{(k)}) = \frac{1-m_0^\#}{m_0^\#-m_1^\#}\left(\hat{\tilde{F}}_0^{\hat{T}}(t_{(k)}) - \hat{\tilde{F}}_1^{\hat{T}}(t_{(k)})\right)$ and $\hat{D}^+_\#(t_{(k)}) = \hat{D}_\#(t_{(k)}) \vee 0$. Then, Algorithm $1^{\#}$ delivers the NP classifier $\hat{\phi}_{k^*_\#}(\cdot) = \1\{\hat{T}(\cdot) > t_{(k^*_\#)}\}$, where $k^*_\# = \min\{k\in\{1, \ldots, n\}:\alpha_{k,\delta} - \hat{D}^+_\#(t_{(k)}) \leq \alpha \}$. Due to the similarity with Algorithm \ref{alg:adj_umbrella}, we do not re-produce the other steps of Algorithm $1^{\#}$ to write it out in a full algorithm format.  

Rather than supplying unbiased estimates for $m_0$ and $m_1$, we will demonstrate that it is important that $m_0^{\#}$ and $m_1^{\#}$ are under-estimates of the corruption levels (i.e., $m_0^{\#}\geq  m_0$ and $m_1^{\#}\leq m_1$ as in Assumption \ref{assumption:mixture}). In this work, we assume that domain experts supply these under-estimates.  While it would be unrealistic to assume that these experts know $m_0$ and $m_1$ exactly, in many scenarios one can provide accurate bounds on these quantities. It would be interesting to investigate data-driven estimators that have such a property for future work.

\section{Theory}\label{sec:theory}

In this section, we first elaborate the rationale behind Algorithm \ref{alg:adj_umbrella}  (Section \ref{sec:3.2}), and then show that under a few technical conditions, this new algorithm induces  well-defined classifiers whose type I errors are bounded from above by the desired level with high probability (Section  \ref{sec:theoretical properties}). Then we establish a similar result for its unknown-corruption-level variant, Algorithm $1^{\#}$ (Section \ref{sec: algorithm 2 property}).

\subsection{Rationale behind Algorithm \ref{alg:adj_umbrella}  }\label{sec:3.2}

\begin{proposition}\label{prop:umbrella}
Let $\hat{T}(\cdot)$ be a scoring function (e.g., sigmoid function in logistic regression) trained on $\tilde{\mathcal{S}}_{\emph{\text{b}}}$. Applying $\hat{T}(\cdot)$ to every element in $\tilde{\mathcal{S}}^0_{\emph{\text{t}}}$, we get a set of scores. Order these scores and denote them by $\{t_{(1)}, t_{(2)}, \ldots, t_{(n)}\}$,  in which $t_{(1)} \leq t_{(2)} \leq \ldots \leq t_{(n)}$. Then, for any $\alpha \in (0, 1)$ and $k \in \{1,2,\ldots,n\}$, the classifier $\hat{\phi}_k(\cdot) = \1\{\hat{T}(\cdot) > t_{(k)}\}$ satisfies
\[
\p\left(\tilde{R}_0(\hat{\phi}_k) > {\alpha}\right) \leq \sum_{j=k}^{n} {n \choose j}(1-{\alpha})^j{\alpha}^{(n-j)}\,,
\]
%for any $\alpha \in (0,1)$. When $\hat{T}(X)$ is a continuous random variable, the bound is tight.
in which $\p$ is regarding the randomness in all training observations, as well as additional randomness if we adopt certain random classification methods (e.g., random forest). Moreover, when $\hat T(\cdot)$ is continuous almost surely, the above inequality obtains the equal sign. 
\end{proposition}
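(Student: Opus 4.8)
The plan is to condition on the base-training sample $\tilde{\mathcal{S}}_{\text{b}}$ (and on any internal randomness of the base classifier), which freezes the scoring function $\hat T$, and then to recognize the statement as the classical order-statistics lemma underlying the NP umbrella algorithm, only with the corrupted null distribution $\tilde P_0$ in place of $P_0$. Once $\hat T$ is fixed, the scores $T_i := \hat T(X_i)$ for $X_i \in \tilde{\mathcal{S}}^0_{\text{t}}$ are i.i.d.\ with common c.d.f.\ $\tilde F_0^{\hat T}$ (here I use that $\tilde{\mathcal{S}}^0_{\text{t}}$ is drawn from $\tilde P_0$ independently of $\tilde{\mathcal{S}}_{\text{b}}$), and $t_{(1)} \le \cdots \le t_{(n)}$ are their order statistics. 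Since the right-hand side of the claimed bound does not depend on $\hat T$, it suffices to prove the inequality conditionally on $\hat T$ and then take expectations.

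Next I would rewrite the error event as a binomial-tail event. Observe that $\tilde R_0(\hat\phi_k) = \tilde P_0(\hat T(X) > t_{(k)}) = 1 - \tilde F_0^{\hat T}(t_{(k)})$, where $X \sim \tilde P_0$ is a generic draw independent of the data. Introduce the generalized quantile $c := \inf\{t \in \R : \tilde F_0^{\hat T}(t) \ge 1-\alpha\}$; monotonicity and right-continuity of $\tilde F_0^{\hat T}$ together with $\alpha \in (0,1)$ make $c$ finite and give $\{t : \tilde F_0^{\hat T}(t) \ge 1-\alpha\} = [c,\infty)$. Hence $\{\tilde R_0(\hat\phi_k) > \alpha\} = \{\tilde F_0^{\hat T}(t_{(k)}) < 1-\alpha\} = \{t_{(k)} < c\}$, and $t_{(k)} < c$ holds exactly when at least $k$ of the $T_i$ fall strictly below $c$. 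Thus, conditionally on $\hat T$, $\p(\tilde R_0(\hat\phi_k) > \alpha) = \p(B \ge k)$, where $B := \#\{i : T_i < c\} \sim \mathrm{Bin}(n,p)$ with $p = \tilde P_0(\hat T(X) < c) = \sup_{t < c} \tilde F_0^{\hat T}(t) \le 1-\alpha$.

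To finish, I would invoke monotonicity of the binomial upper tail: the map $p \mapsto \sum_{j=k}^{n} \binom{n}{j} p^j (1-p)^{n-j}$ is non-decreasing on $[0,1]$, so $\p(B \ge k) = \sum_{j=k}^{n} \binom{n}{j} p^j (1-p)^{n-j} \le \sum_{j=k}^{n} \binom{n}{j} (1-\alpha)^j \alpha^{n-j}$; averaging over $\hat T$ yields the proposition. For the equality claim, if $\hat T$ is continuous almost surely under $\tilde P_0$ then $\tilde F_0^{\hat T}$ is continuous, so $\sup_{t<c}\tilde F_0^{\hat T}(t) = \tilde F_0^{\hat T}(c)$, which is then forced to equal $1-\alpha$; the binomial tail is evaluated exactly at $p = 1-\alpha$, giving equality for every realization of $\hat T$ and hence unconditionally.

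The main obstacle, and really the only subtle point, is handling possible atoms of $\hat T$ under $\tilde P_0$: one cannot assume an exact $(1-\alpha)$-quantile exists, so the argument must run through the generalized inverse $c$, must identify the correct success probability as $p = \sup_{t<c}\tilde F_0^{\hat T}(t)$ and verify $p \le 1-\alpha$ (strict inequality here is exactly the source of the conservativeness the paper later exploits), and must use monotonicity of the binomial tail in place of an exact identity. Everything else is routine bookkeeping, mirroring the uncorrupted-label argument of \cite{tong2018neyman} verbatim.
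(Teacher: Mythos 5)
Your proof is correct, and it follows essentially the same order-statistics argument that the paper invokes by reference to \cite{tong2018neyman}: condition on $\hat T$, reduce the violation event $\{\tilde R_0(\hat\phi_k)>\alpha\}$ to a binomial tail via the generalized $(1-\alpha)$-quantile of $\tilde F_0^{\hat T}$, and use monotonicity of the binomial tail in $p$, with the continuous case giving $p=1-\alpha$ exactly. The careful handling of atoms through $p=\sup_{t<c}\tilde F_0^{\hat T}(t)\le 1-\alpha$ is the right way to make the inequality rigorous without extra assumptions, and it correctly explains where the slack comes from when $\hat T$ is not almost surely continuous.
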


Recall that $\tilde R_0(\cdot)$ denotes  type I error regarding the \textit{corrupted} labels. We omit a proof for Proposition \ref{prop:umbrella} as it follows the same proof as its counterpart in \cite{tong2018neyman}. For $\alpha, \delta \in (0, 1)$, recall that the original  NP umbrella algorithm selects $k_*   = \text{min}\{k\in\{1, \ldots, n\}:\sum_{j=k}^{n} {n \choose j}(1-{\alpha})^j{\alpha}^{(n-j)}\leq \delta \}$.  The smallest $k$ among all that satisfy $\sum_{j=k}^{n} {n \choose j}(1-{\alpha})^j{\alpha}^{(n-j)}\leq \delta$  is desirable because we also wish to minimize the type II error. There is a sample size requirement for this order statistics approach to work because a finite order $k_*$ should exist.  Precisely, an order statistics approach works if the last order does; that is $(1-\alpha)^n \leq \delta$. This translates to Assumption \ref{assumption:sample_size} on $n$, the sample size of $\tilde{\mathcal{S}}^0_{\text{t}}$. This is a mild requirement. For instance, when $\alpha = \delta = 0.05$, $n$ should be at least $59$.

\begin{customassumption}{2}\label{assumption:sample_size}
$n \geq \lceil\log\delta/\log(1 - \alpha)\rceil$, in which $\lceil \cdot \rceil$ denotes the ceiling function.  
\end{customassumption}

In view of Proposition \ref{prop:umbrella}, the choice of $k_*$ guarantees $\p\left(\tilde{R}_0(\hat{\phi}_{k_*}) \leq  {\alpha}\right) \geq 1-\delta$. In other words, if we were to ignore the label noise presence and apply the original NP umbrella algorithm, the type I error regarding the \textit{corrupted} labels, $\tilde R_0$, is controlled under level $\alpha$ with probability at least $1- \delta$. Moreover, the achieved $\tilde R_0$ is usually not far from $\alpha$ when the sample size $n$ is much larger than the lower bound requirement.   However, this is not our main target; what we really want is to control $R_0$. Example~\ref{ex:trivial_classifier} in Section~\ref{sec:3.1} convincingly demonstrates that in the presence of label noise, the achieved $R_0$ after naive implementation of the original NP umbrella algorithm  can be much lower than the control target $\alpha$. This is no exception.  To aid in analyzing the gap between $R_0$ and $\tilde R_0$, we make the following assumption.

\begin{customassumption}{3}\label{assumption:separability}
The scoring function $\hat{T}$ is trained such that $\tilde{F}_0^{\hat{T}}(z) > \tilde{F}_1^{\hat{T}}(z)$ for all $z \in \R$ with probability at least $1 - \delta_1(n_{\emph{b}})$, where $n_{\emph{b}} = |\tilde{\mathcal{S}}_{\emph{b}}|$ and $\delta_1(n_{\emph{b}})$ converges to $0$ as $n_{\emph{b}}$ goes to infinity.
\end{customassumption}

Loosely, Assumption \ref{assumption:separability} means that the scoring function trained on corrupted data still has the ``correct direction." For any classifier of the form $\hat{\phi}_c(\cdot) = \1\{\hat{T}(\cdot) > c\}$, Assumption \ref{assumption:separability} implies that with probability at least $1- \delta_1(n_{\text{b}})$,  $\tilde{P}_0(\hat{\phi}_c(X) = 0) > \tilde{P}_1(\hat{\phi}_c(X) = 0)$, which means that a corrupted class $0$ observation is more likely to be classified as $0$ than a corrupted class $1$ observation is.  Interested readers can find a concrete example that illustrates this mild assumption  in the Appendix \ref{sec:example_3} (Example \ref{ex:separability}). Now we are ready to describe the discrepancy between $R_0$ and $\tilde R_0$.

\begin{lemma}\label{lemma:general_gap}
Let $\hat T$ be a scoring function trained on $\tilde{\mathcal{S}}_{\emph{\text{b}}}$ and  $\hat{\phi}_c(\cdot) = \1\{\hat{T}(\cdot) > c\}$ be a classifier that thresholds the scoring function at $c\in \R$. Denote $D(c) = \tilde{R}_0(\hat{\phi}_c) - R_0(\hat{\phi}_c)$. Then, under Assumptions \ref{assumption:mixture}, \ref{assumption:sample_size} and \ref{assumption:separability}, for given $\alpha$ 
%(i.e., type I error upper bound) 
and $\delta$,
%(i.e., type I error violation rate target), 
it holds that %$\inf_{c \in \R}D(c) \geq 0$ with probability $1 - \delta_1(\tilde{n})$ and
\begin{align*}
  \p\left(\inf_{c \in \R}D(c) \geq 0\right)\geq 1- \delta_1(n_{\emph{b}})\,\text{ }\text{and }\text{ }  \p\left(R_0(\hat{\phi}_{k_*}) > \alpha - D(t_{(k_*)})\right) \leq \delta +  \delta_1(n_{\emph{b}})\,,
\end{align*}
where $k_*$ and $\delta$ are related via equation \eqref{eqn.kstar}.
Moreover, we have
\begin{equation}
\label{eqn.d}
    D(c) = M\left(\tilde{F}_0^{\hat{T}}(c) - \tilde{F}_1^{\hat{T}}(c)\right)\,, 
\end{equation}
where $M = (1-m_0)(m_0-m_1)^{-1}$. 

%Furthermore, let $D_\#(c) = M_\# \left(\tilde{F}^{\hat{T}}_0(c) - \tilde{F}_1^{\hat{T}}(c)\right)$ where $M_\# = (1-m^{\#}_0)(m_0^{\#} - m_1^{\#})^{-1}$. Then,
%\begin{align*}
%    \p\left(R_0(\hat{\phi}_{k_*}) > \alpha - D_\#(t_{(k_*)})\right) \leq \delta +  \delta_1(n_{\emph{b}})\,.
%\end{align*}
\end{lemma}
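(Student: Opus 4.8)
The plan is to first nail down the closed form \eqref{eqn.d} for $D$, and then read off the two probabilistic statements from that identity together with Proposition~\ref{prop:umbrella}.

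\textbf{Step 1 (identification of $D(c)$).} I would condition on $\tilde{\mathcal{S}}_{\text{b}}$ so that the trained score $\hat T$ is a fixed measurable map and the sublevel sets $A_c \triangleq \{x\in\mathcal{X}:\hat T(x)\le c\}$ are deterministic Borel sets. On $\{\tilde Y=0\}$ the event $\hat\phi_c(X)\neq\tilde Y$ is exactly $\{\hat T(X)>c\}$, so $\tilde R_0(\hat\phi_c)=1-\tilde F_0^{\hat T}(c)$ and, similarly, $R_0(\hat\phi_c)=1-F_0^{\hat T}(c)$; hence $D(c)=F_0^{\hat T}(c)-\tilde F_0^{\hat T}(c)$. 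Applying Assumption~\ref{assumption:mixture} to the set $A_c$ gives $\tilde F_0^{\hat T}(c)=m_0F_0^{\hat T}(c)+(1-m_0)F_1^{\hat T}(c)$ and $\tilde F_1^{\hat T}(c)=m_1F_0^{\hat T}(c)+(1-m_1)F_1^{\hat T}(c)$. Subtracting, $\tilde F_0^{\hat T}(c)-\tilde F_1^{\hat T}(c)=(m_0-m_1)\bigl(F_0^{\hat T}(c)-F_1^{\hat T}(c)\bigr)$, while the first identity rearranges to $F_0^{\hat T}(c)-\tilde F_0^{\hat T}(c)=(1-m_0)\bigl(F_0^{\hat T}(c)-F_1^{\hat T}(c)\bigr)$. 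Since $m_0>m_1$ I can divide, obtaining $D(c)=\frac{1-m_0}{m_0-m_1}\bigl(\tilde F_0^{\hat T}(c)-\tilde F_1^{\hat T}(c)\bigr)=M\bigl(\tilde F_0^{\hat T}(c)-\tilde F_1^{\hat T}(c)\bigr)$, which is \eqref{eqn.d}; this holds for every realization of $\tilde{\mathcal{S}}_{\text{b}}$, hence unconditionally.

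\textbf{Step 2 (sign of $D$, first inequality).} Observe $M=(1-m_0)/(m_0-m_1)\ge 0$ because $m_1<m_0\le 1$. On the event of Assumption~\ref{assumption:separability}, which has probability at least $1-\delta_1(n_{\text{b}})$, one has $\tilde F_0^{\hat T}(z)-\tilde F_1^{\hat T}(z)>0$ for \emph{all} $z\in\R$; by \eqref{eqn.d} this forces $D(c)\ge 0$ for all $c$, hence $\inf_{c\in\R}D(c)\ge 0$. This yields $\p\bigl(\inf_{c\in\R}D(c)\ge 0\bigr)\ge 1-\delta_1(n_{\text{b}})$.

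\textbf{Step 3 (type I error bound).} Assumption~\ref{assumption:sample_size} ensures $(1-\alpha)^n\le\delta$, so $k_*$ in \eqref{eqn.kstar} is well defined. By the very definition of $D$, $D(t_{(k_*)})=\tilde R_0(\hat\phi_{k_*})-R_0(\hat\phi_{k_*})$, so the event $\{R_0(\hat\phi_{k_*})>\alpha-D(t_{(k_*)})\}$ coincides exactly with $\{\tilde R_0(\hat\phi_{k_*})>\alpha\}$. Proposition~\ref{prop:umbrella} applied with $k=k_*$ together with the defining property of $k_*$ gives $\p\bigl(\tilde R_0(\hat\phi_{k_*})>\alpha\bigr)\le\sum_{j=k_*}^{n}\binom{n}{j}(1-\alpha)^j\alpha^{n-j}\le\delta\le\delta+\delta_1(n_{\text{b}})$, which is the second inequality. (The slack $\delta_1(n_{\text{b}})$ in the statement is not actually needed here; it appears if one prefers the alternative route $\p(\,\cdot\,)\le\p\bigl(\,\cdot\,\cap\{\inf_cD(c)\ge0\}\bigr)+\delta_1(n_{\text{b}})$, bounding the first term on the right by $\delta$ as above.) The only delicate point in the whole argument is the conditioning on $\tilde{\mathcal{S}}_{\text{b}}$ in Step~1, which is what licenses invoking Assumption~\ref{assumption:mixture} — stated for fixed Borel sets — at the data-dependent set $A_c$, and the fact that the ``for all $z$'' quantifier in Assumption~\ref{assumption:separability} is precisely what is required to pass from pointwise nonnegativity of $D$ to control of $\inf_{c}D(c)$.
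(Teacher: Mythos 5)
Your proof is correct and follows essentially the same route as the paper's: it derives \eqref{eqn.d} via Assumption~\ref{assumption:mixture} (the paper delegates this to Lemma~\ref{lemma:corollary_for_assumption_1}), obtains nonnegativity of $D$ from Assumption~\ref{assumption:separability}, and invokes Proposition~\ref{prop:umbrella} for the tail bound. Your parenthetical observation that the event $\{R_0(\hat\phi_{k_*})>\alpha-D(t_{(k_*)})\}$ is tautologically identical to $\{\tilde R_0(\hat\phi_{k_*})>\alpha\}$, so that the second inequality in fact holds with $\delta$ alone and the $\delta_1(n_{\text{b}})$ slack is superfluous there, is a valid and slightly sharper reading than the paper's own proof, which restricts to the Assumption~\ref{assumption:separability} event before invoking Proposition~\ref{prop:umbrella} and thereby incurs the extra $\delta_1(n_{\text{b}})$ term unnecessarily.
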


%\textcolor{red}{Shunan, In the last line of the proof of Lemma 1, quantities related to Algorithm 2 appears.  It should not be that because Lemma 1 is only about Algorithm 1.  If you need the last line of the proof somewhere later, please put it in the exact place where you need it and quote Lemma 1.} 

%\textcolor{purple}{double check that the deleted part in the proof is NOT actually needed.}

Note that $D(c)$ measures the discrepancy between the \textit{corrupted} type I error  and the \textit{true} type I error  of the classifier $\hat \phi_c(\cdot)$.  
Lemma \ref{lemma:general_gap} implies that with high probability, $\hat{\phi}_{k_*}(\cdot)$ has $R_0$, the type I error regarding \textit{true} labels, under a level that is smaller than the target value $\alpha$, and that the gap is measured by $D(t_{(k_*)})$. It is important to note that $D(c)$ is solely a function of the distributions of the corrupted data, and does not require any knowledge of the uncorrupted scores, so we are able to estimate this quantity from our observed data.

As argued previously, excessive  conservativeness in type I error is not desirable because it is usually associated with a high type II error.  Therefore, a new NP umbrella algorithm should adjust to the label noise, so that the resulting classifier respects the true type I error control target, but is not excessively  conservative.   Motivated by Lemma \ref{lemma:general_gap}, our central plan is to choose some less conservative (i.e., smaller) order than that in the original NP umbrella algorithm, in view of the difference between $R_0$ and $\tilde R_0$. Recall that $\delta\in(0,1)$ is the target type I error violation rate. In the presence of label noise, we do not expect to control at this precise violation rate, but just some number around it.

For any $\hat \phi_k(\cdot)$, under Assumptions \ref{assumption:mixture}, \ref{assumption:sample_size} and \ref{assumption:separability}, Lemma \ref{lemma:general_gap} implies $\tilde{R}_0(\hat{\phi}_k) \geq R_0(\hat{\phi}_k)$ with probability at least $1 - \delta_1(n_{\text{b}})$. Note that the $\delta_1(n_{\text{b}})$ term is small and asymptotically $0$; we will ignore it in this section when motivating our new strategy. With this simplification,  $\tilde{R}_0(\hat{\phi}_k)$ is always greater than $R_0(\hat{\phi}_k)$, as illustrated in Figure~\ref{fig:typeIerrorbound}. The definition of $\alpha_{k, \delta}$ in equation \eqref{eqn:alpha_k_delta} and Proposition \ref{prop:umbrella} imply with probability at least $1 - \delta$, $\alpha_{k,\delta} \geq \tilde{R}_0(\hat{\phi}_k)$, which corresponds to the green region (the region on the right) in Figure \ref{fig:typeIerrorbound}. Since we only need $1 - \delta$ probability control on $R_0$, it suffices to control $R_0$ corresponding to this region. Combining the results $\alpha_{k,\delta} \geq \tilde{R}_0(\hat{\phi}_k)$ and $\tilde{R}_0(\hat{\phi}_k) \geq R_0(\hat{\phi}_k)$, we have the inequalities $\alpha_{k,\delta} \geq \alpha_{k,\delta} - D(t_{(k)}) \geq R_0(\hat{\phi}_k)$ on our interested region (Recall $D(t_{(k)}) = \tilde{R}_0(\hat{\phi}_k) - R_0(\hat{\phi}_k)$). By the previous argument, $\alpha_{k, \delta}$ can be used as an upper bound for $R_0$, but to have a good type II error, a better choice is clearly the smaller $\alpha_{k,\delta} - D(t_{(k)})$.  So if $D(t_{(k)})$ were a known quantity, we can set the order to be $\tilde k^* = \min\{k\in\{1\ldots, n\}: \alpha_{k,\delta} - D(t_{(k)}) \leq \alpha\}$ and propose a classifier $\hat{\phi}_{\tilde k^*}(\cdot) = \1\{\hat{T}(\cdot) > t_{(\tilde k^*)}\}$.  This is to be compared with the order $k_*$ chosen by the original NP umbrella algorithm, which can be equivalently expressed as $k_* = \min\{k\in\{1\, \ldots, n\}:\alpha_{k, \delta}\leq \alpha\}$ (Lemma \ref{lemma:existence_of_alpha_k_delta} in the Appendix).  Then we have $\tilde k^* \leq k_*$, and so $\hat \phi_{\tilde k^*}(\cdot)$ is less conservative than $\hat \phi_{k_*}(\cdot)$ in terms of type I error.  

\begin{figure}
\begin{center}
    \includegraphics[width = \textwidth, height = 8cm]{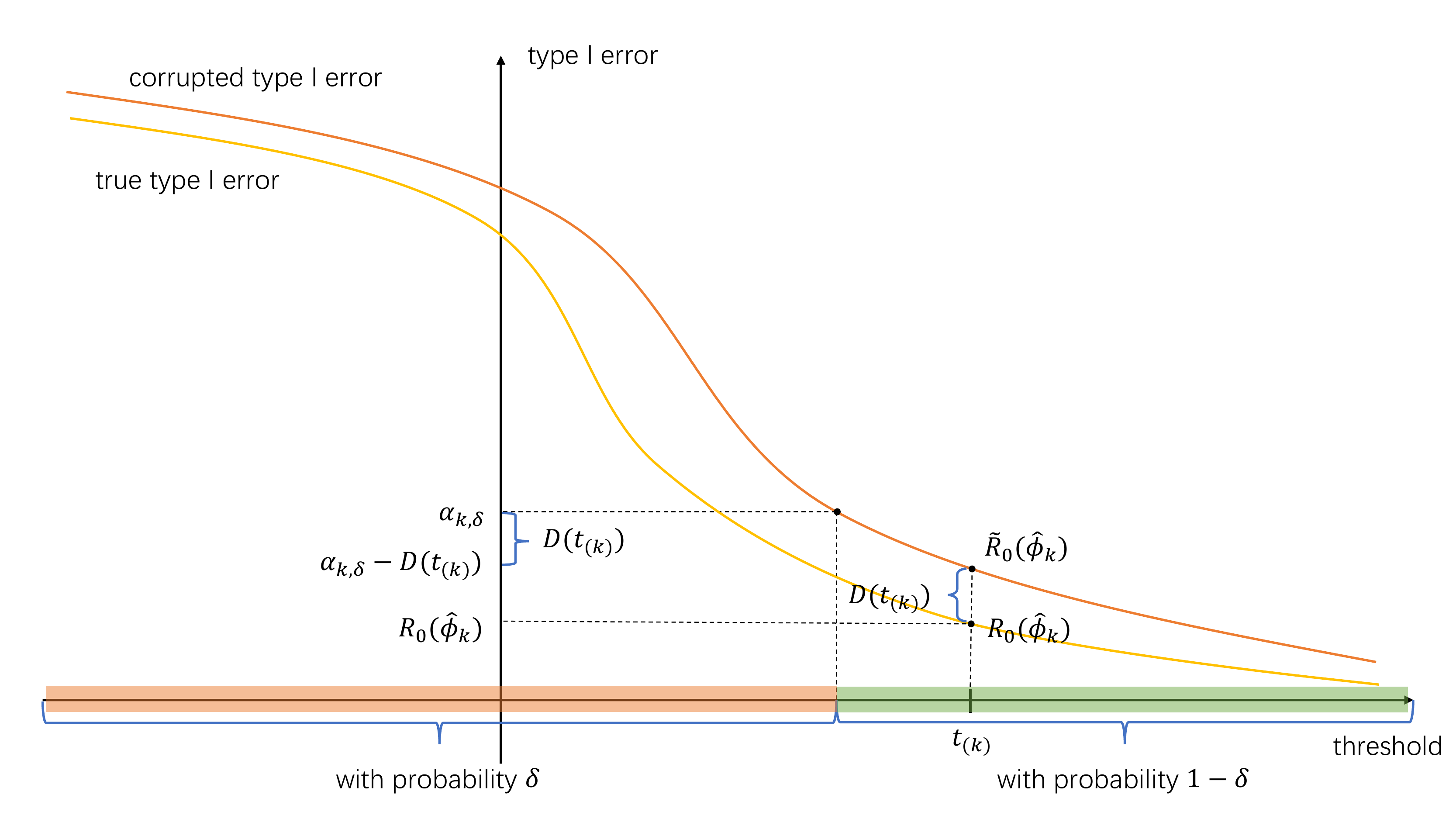}
    \caption{A cartoon illustration of $1-\delta$ probability upper bound of type I error. }\label{fig:typeIerrorbound}
\end{center}
\end{figure}

However, $\hat \phi_{\tilde k^*}(\cdot)$ is not accessible because $D$ is unknown.  Instead we estimate $D$ by replacing $\tilde{F}^{\hat{T}}_0$ and $\tilde{F}_1^{\hat{T}}$ in \eqref{eqn.d} with their empirical distributions $\hat{\tilde{F}}^{\hat{T}}_0$ and $\hat{\tilde{F}}^{\hat{T}}_1$, which are calculated using $\tilde{\mathcal{S}}^0_{\text{e}}$ and $\tilde{\mathcal{S}}^1_{\text{e}}$, i.i.d. samples from the corrupted $0$ and $1$ observations. Note that these estimates are independent of $\tilde{\mathcal{S}}_{{\text{b}}}$ and $\tilde{\mathcal{S}}^0_{{\text{t}}}$.
%in Lemma~\ref{lemma:general_gap} has $D(c) = (1-m_0)(m_0-m_1)^{-1}(\tilde{F}^{\hat{T}}_0(c) - \tilde{F}_1^{\hat{T}}(c))$ for any $c \in \R$, so we can estimate $D$ by replacing $\tilde{F}^{\hat{T}}_0$ and $\tilde{F}_1^{\hat{T}}$ with empirical distributions $\hat{\tilde{F}}^{\hat{T}}_0$ and $\hat{\tilde{F}}^{\hat{T}}_1$ 
%where $\hat{\tilde{F}}^{\hat{T}}_0$ and $\hat{\tilde{F}}^{\hat{T}}_1$ are computed on data independent of $\tilde{\mathcal{S}}_{{\text{b}}}$ and $\tilde{\mathcal{S}}_{{\text{t}}}$. For this purpose, we have reserved $\tilde{\mathcal{S}}^0_{\text{e}}$, a collection of i.i.d. corrupted $0$ observations and $\tilde{\mathcal{S}}^1_{\text{e}}$, a collection of i.i.d. corrupted $1$ observations.   
For a given $\hat{T}$, we  define for every $c \in \R$, 
\begin{align*}
    \hat{D}(c) = \frac{1-m_0}{m_0-m_1}\left(\hat{\tilde{F}}^{\hat{T}}_0(c) - \hat{\tilde{F}}^{\hat{T}}_1(c)\right)\, \text{ and }\, k^{**} = \min\{k\in\{1, \ldots, n\}:\alpha_{k,\delta} - \hat{D}(t_{(k)}) \leq \alpha - \varepsilon\}\,,
\end{align*}
in which a small $\varepsilon>0$ is introduced to compensate for the randomness of $\hat{D}$ in the theory proofs. For simulation and real data, we actually just use $\varepsilon = 0$.  Finally, \textit{the proposed  new label-noise-adjusted NP classifier with known corruption levels is  $\hat{\phi}_{k^*}(\cdot) = \1\{\hat{T}(\cdot) > t_{(k^{*})}\}$}, in which  $k^*$ is a small twist from $k^{**}$ by replacing $\hat D$ with its positive part.  The construction of $\hat \phi_{k^*}(\cdot)$ was detailed in Algorithm \ref{alg:adj_umbrella}.

We have two comments on the implementation of Algorithm \ref{alg:adj_umbrella}.   First, though the $\varepsilon$ compensation for the randomness is necessary for the theory proof, our empirical results suggest almost identical performance between $\varepsilon=0$ relative to any small $\varepsilon$, so we recommend setting $\varepsilon$ to $0$
%, as shown by the comparison between the additional numerical results in Appendix \ref{appendix:numerical}, where we set a small positive $\varepsilon$, and the numerical results in Section \ref{sec:sim_and_real_data}, where we set $\varepsilon=0$. 
for simplicity, and we do not use the $\varepsilon$ compensation in Algorithm \ref{alg:adj_umbrella}. Second, in the order selection criterion of $k^*$ in Algorithm \ref{alg:adj_umbrella}, we use $\hat{D}^+ =\hat{D}\vee 0 :=  \max(\hat{D}, 0)$ instead of $\hat{D}$, because empirically, although highly unlikely, $\hat{D}$ can be negative, which results in $\min\{k\in\{1, \ldots, n\}:\alpha_{k,\delta} - \hat{D}(t_{(k)}) \leq \alpha\} \geq \min\{k\in\{1, \ldots, n\}:\alpha_{k,\delta} \leq \alpha\}$. In this case, the new order could be greater than $k_*$. Since we aim to reduce the conservativeness of the original NP umbrella algorithm, the possibility of $k^* \geq k_*$ will reverse this effort and worsen the conservativeness. To solve this issue, we force the empirical version of $D$ to be non-negative by replacing $\hat{D}$ with $\hat{D}^+$ in Algorithm \ref{alg:adj_umbrella}.

\subsection{Theoretical properties of Algorithm \ref{alg:adj_umbrella} }\label{sec:theoretical properties}

In this subsection, we first formally establish that Algorithm \ref{alg:adj_umbrella} gives rise to valid classifiers (Lemma \ref{lemma:alpha_k_delta}) and then show that these classifiers have the true type I errors controlled under $\alpha$ with high probability (Theorem \ref{thm:adjustment}).

\begin{lemma}\label{lemma:alpha_k_delta}
Under Assumption \ref{assumption:sample_size}, $k^* = \min\{k\in\{1, \ldots, n\}:\alpha_{k,\delta} - \hat{D}^+(t_{(k)}) \leq \alpha\}$ in Algorithm \ref{alg:adj_umbrella} exists. Moreover, this label-noise-adjusted order is no larger than that chosen by the original NP umbrella algorithm; that is  $k^* \leq k_*$. 
\end{lemma}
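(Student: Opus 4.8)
The plan is to obtain both assertions from two ingredients already on the table: the construction $\hat D^+(\cdot)=\hat D(\cdot)\vee 0\geq 0$, and the fact --- part of Lemma \ref{lemma:existence_of_alpha_k_delta} --- that the original NP umbrella order admits the equivalent description $k_*=\min\{k\in\{1,\ldots,n\}:\alpha_{k,\delta}\leq\alpha\}$ and that, under Assumption \ref{assumption:sample_size}, this minimum is attained. Concretely, Lemma \ref{lemma:existence_of_alpha_k_delta} ensures each $\alpha_{k,\delta}$ is well defined; taking $k=n$, equation \eqref{eqn:alpha_k_delta} collapses to $(1-\alpha_{n,\delta})^n=\delta$, i.e. $\alpha_{n,\delta}=1-\delta^{1/n}$, and Assumption \ref{assumption:sample_size} is precisely the statement $(1-\alpha)^n\leq\delta$, which rearranges to $\alpha_{n,\delta}\leq\alpha$. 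Hence $n$ lies in $\{k:\alpha_{k,\delta}\leq\alpha\}$, this set is non-empty, and $k_*$ is a well-defined element of $\{1,\ldots,n\}$.

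Next comes the one-line core of the argument. Since $\hat D^+(t_{(k)})\geq 0$ for every $k$, we have $\alpha_{k,\delta}-\hat D^+(t_{(k)})\leq\alpha_{k,\delta}$ for every $k$. Evaluating at $k=k_*$ and using $\alpha_{k_*,\delta}\leq\alpha$ yields $\alpha_{k_*,\delta}-\hat D^+(t_{(k_*)})\leq\alpha$, so $k_*$ belongs to the index set $\{k\in\{1,\ldots,n\}:\alpha_{k,\delta}-\hat D^+(t_{(k)})\leq\alpha\}$ whose minimum defines $k^*$. That set is therefore a non-empty subset of $\{1,\ldots,n\}$, so $k^*$ exists; and because $k_*$ is a member of it, $k^*\leq k_*$. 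This establishes both claims.

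There is no real obstacle here --- the statement is essentially a non-negativity/monotonicity observation. The only point requiring care is bookkeeping: making sure the tail-probability definition of $k_*$ in \eqref{eqn.kstar} and the $\alpha_{k,\delta}$ form agree, and that Assumption \ref{assumption:sample_size} is exactly what makes the relevant minimum attainable. Both are dispatched by invoking Lemma \ref{lemma:existence_of_alpha_k_delta}, after which everything rests on the trivial inequality $\hat D^+\geq 0$ and the already-noted strict monotonicity of $x\mapsto\sum_{j=k}^n\binom{n}{j}x^{n-j}(1-x)^j$.
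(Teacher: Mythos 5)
Your proof is correct and follows essentially the same route as the paper's: both rely on Lemma \ref{lemma:existence_of_alpha_k_delta} for the equivalent characterization $k_* = \min\{k : \alpha_{k,\delta} \leq \alpha\}$ and its non-emptiness under Assumption \ref{assumption:sample_size}, then use the non-negativity of $\hat D^+$ to conclude that $k_*$ lies in the index set defining $k^*$, giving both existence and $k^* \leq k_*$. The only cosmetic difference is that you evaluate the inequality at $k = k_*$ directly whereas the paper phrases the same step as a subset inclusion between the two index sets.
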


Lemma \ref{lemma:alpha_k_delta} implies that Algorithm \ref{alg:adj_umbrella} reduces the excessive  conservativeness of the original NP umbrella algorithm on the type I error by choosing a smaller order statistic as the threshold. %Concretely, because $k^*\leq k_*$,  we have $t_{(k^*)} \leq t_{(k_*)}$. Therefore, $R_0(\hat{\phi}_{{k^*}}) \geq R_0(\hat{\phi}_{{k_*}})$ and $R_1(\hat{\phi}_{{k^*}}) \leq R_1(\hat{\phi}_{{k_*}})$, i.e., reduction of conservativeness and increase in power. 
Moreover, if there is no label noise, i.e., when $m_0 = 1$ and $m_1 = 0$, we have  $k^*=\min\{k\in\{1,\ldots, n\}: \alpha_{k,\delta} \leq \alpha\} = k_*$. That is, Algorithm \ref{alg:adj_umbrella} reduces to the original NP umbrella algorithm.

Another important question is whether Algorithm \ref{alg:adj_umbrella} can control the true type I error with high probability.  The following condition is assumed for the rest of this section.
\begin{customassumption}{4}\label{assumption:regularity}
The scoring function $\hat{T}$ is trained from a class of functions $\mathcal{T}$ such that the density functions for both $\hat{T}(\tilde{X}^0)$ and $\hat{T}(\tilde{X}^1)$ exist for every $\hat{T} \in \mathcal{T}$. Then, we denote these two densities by  $\tilde f_0^{\hat{T}}$ and  $\tilde f_1^{\hat{T}}$, respectively. Furthermore, $\sup_{\hat{T} \in \mathcal{T}}\|\tilde{f}_0^{\hat{T}}\vee\tilde{f}_1^{\hat{T}}\|_\infty \leq C$ and $\inf_{\hat{T} \in \mathcal{T}}\inf_{z \in \mathcal{D}_{\hat{T}}}\tilde{f}^{\hat{T}}_0(z) > c$ for some positive $c$ and $C$ with probability $1-\delta_2(n_{\emph{b}})$, where $\mathcal{D}_{\hat T}$ is the support of $\tilde{f}^{\hat{T}}_0$ and is a closed interval, and $\delta_2(n_{\emph{b}})$ converges to $0$ as $n_{\emph{b}}$ goes to infinity.
\end{customassumption}

%Note that  Assumption \ref{assumption:regularity} is a suite of a technical assumptions that we take for technical convenience in establishing the next theorem. In particular, we assume the existence of densities $\tilde f_0^{\hat{T}}$ and  $\tilde f_1^{\hat{T}}$, which holds if $\tilde X^0$ and $\tilde X^1$ have densities and $\hat T(\cdot)$ is smooth. Moreover, we assume that with high probability, both the densities are uniformly bounded from above and $\tilde{f}^{\hat{T}}_0(\cdot)$ is bounded uniformly from below.  

Note that  Assumption \ref{assumption:regularity} summarizes assumptions that we make for technical convenience in establishing the next theorem. In particular, we assume the existence of densities $\tilde f_0^{\hat{T}}$ and  $\tilde f_1^{\hat{T}}$, which holds if $\tilde X^0$ and $\tilde X^1$ have densities and $\hat T(\cdot)$ is smooth. Moreover, we assume that with high probability, both the densities are uniformly bounded from above and $\tilde{f}^{\hat{T}}_0(\cdot)$ is bounded uniformly from below.

Recall that in Algorithm \ref{alg:adj_umbrella}, we set $k^* = \min\{k\in\{1, \ldots, n\}:\alpha_{k,\delta} - \hat{D}^+(t_{(k^*)}) \leq \alpha\}$ without an $\varepsilon$ term. Setting $\varepsilon = 0$ intuitively seems reasonable since, when the sample size is small, the sets $\{ k\in\{1, \ldots, n\}:\alpha_{k,\delta} - \hat{D}^+(t_{(k^*)}) \leq \alpha - \varepsilon\}$ and $\{k\in\{1, \ldots, n\}:\alpha_{k,\delta} - \hat{D}^+(t_{(k^*)}) \leq \alpha\}$ agree with high probability, and, when the sample size is large, concentration of random variables takes effect so there is little need for compensation for randomness. Our simulation results further reinforce this intuition. However, we include an $\varepsilon$ term in the next theorem as this is required in our proof for the theory to hold. 

\begin{theorem}\label{thm:adjustment}
Under Assumptions \ref{assumption:mixture}, \ref{assumption:sample_size}, \ref{assumption:separability} and \ref{assumption:regularity}, the classifier $\hat{\phi}_{k^*}(\cdot)$, given by Algorithm \ref{alg:adj_umbrella} with $k^* = \min\{k\in\{1, \ldots, n\}:\alpha_{k,\delta} - \hat{D}^+(t_{(k)}) \leq \alpha - \varepsilon\}$, satisfies
\begin{align*}
    \p\left(R_0(\hat{\phi}_{k^*}) > \alpha\right) \leq \delta + \delta_1(n_{\emph{b}}) + \delta_2(n_{\emph{b}})+ 2e^{-8^{-1}nM^{-2}C^{-2}c^2\varepsilon^2} + 2e^{-8^{-1}n^0_{\emph{e}}M^{-2}\varepsilon^2} + 2e^{-8^{-1}n^1_{\emph{e}}M^{-2}\varepsilon^2}\,,
\end{align*}
in which $n_{\emph{\text{b}}}=|\tilde{\mathcal{S}}_{\emph{\text{b}}}|$, $n = |\tilde{\mathcal{S}}^0_{\emph{\text{t}}}|$, $n^0_{\emph{e}} = |\tilde{\mathcal{S}}^0_{\emph{\text{e}}}|$, and $n^1_{\emph{e}} = |\tilde{\mathcal{S}}^1_{\emph{\text{e}}}|$.
\end{theorem}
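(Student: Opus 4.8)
The plan is to peel off a handful of high‑probability ``good'' events, condition on their intersection, and there bound $R_0(\hat{\phi}_{k^*})$ deterministically; the failure probability of each good event then contributes one term on the right‑hand side, while Assumption~\ref{assumption:sample_size} guarantees that all the orders below exist (as in Lemma~\ref{lemma:alpha_k_delta}). I would use four events. $\Omega_1$: the event of Assumption~\ref{assumption:separability} on which $\tilde{F}_0^{\hat{T}}>\tilde{F}_1^{\hat{T}}$ pointwise, so by \eqref{eqn.d} $D(\cdot)\ge 0$ and the truncation $\hat{D}^+=\hat{D}\vee 0$ never hurts. $\Omega_2$: the event of Assumption~\ref{assumption:regularity}, on which the score densities obey the stated bounds, making $D$ Lipschitz with constant of order $MC$ and $(\tilde{F}_0^{\hat{T}})^{-1}$ Lipschitz with constant $1/c$ on the support of $\tilde{f}_0^{\hat{T}}$. $E$: the event $\sup_c|\hat{D}(c)-D(c)|\le\varepsilon/2$, obtained by conditioning on $\tilde{\mathcal{S}}_{\text{b}}$ (hence on $\hat{T}$) and applying the Dvoretzky--Kiefer--Wolfowitz (DKW) inequality to each of $\hat{\tilde{F}}_0^{\hat{T}},\hat{\tilde{F}}_1^{\hat{T}}$ together with $D=M(\tilde{F}_0^{\hat{T}}-\tilde{F}_1^{\hat{T}})$, so that $\p(E^c)\le 2e^{-8^{-1}n^0_{\text{e}}M^{-2}\varepsilon^2}+2e^{-8^{-1}n^1_{\text{e}}M^{-2}\varepsilon^2}$. $G$: the event $|U_{(k)}-k/n|\le c\varepsilon/(4MC)$ for every $k$, where $U_{(k)}:=\tilde{F}_0^{\hat{T}}(t_{(k)})$ are the order statistics of $n$ i.i.d.\ uniforms, again by conditional DKW, so that $\p(G^c)\le 2e^{-8^{-1}nM^{-2}C^{-2}c^2\varepsilon^2}$.

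Next I would use Lemma~\ref{lemma:general_gap}: $R_0(\hat{\phi}_c)=\tilde{R}_0(\hat{\phi}_c)-D(c)$, and $R_0(\hat{\phi}_c)=1-F_0^{\hat{T}}(c)$ is non‑increasing in $c$. On $\Omega_1\cap E$ the definition of $k^*$ forces $\hat{D}^+(t_{(k^*)})\ge\alpha_{k^*,\delta}-\alpha+\varepsilon$, and $\hat{D}^+(t_{(k^*)})\le D(t_{(k^*)})+\varepsilon/2$, hence $D(t_{(k^*)})\ge\alpha_{k^*,\delta}-\alpha+\varepsilon/2$; combined with Lemma~\ref{lemma:general_gap} this reduces the target $R_0(\hat{\phi}_{k^*})\le\alpha$ to showing that $\{1-U_{(k^*)}>\alpha_{k^*,\delta}+\varepsilon/2\}$ is unlikely. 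The remaining difficulty is that $k^*$ is data‑dependent, so I would sandwich it between \emph{deterministic} (given $\hat{T}$) orders: on $\Omega_2\cap G$, transferring the DKW bound on $|U_{(k)}-k/n|$ through the Lipschitz constants of $D$ and $(\tilde{F}_0^{\hat{T}})^{-1}$ gives $|D(t_{(k)})-D(\bar t_{(k)})|\le\varepsilon/2$ uniformly in $k$, where $\bar t_{(k)}:=(\tilde{F}_0^{\hat{T}})^{-1}(k/n)$, and this together with $E$ pins $\hat{D}^+(t_{(k)})$ around $D(\bar t_{(k)})$; feeding that into the definition of $k^*$ shows $\bar k\le k^*\le\bar k'$ for two deterministic orders, $\bar k=\min\{k:\alpha_{k,\delta}-D(\bar t_{(k)})\le\alpha\}$ and $\bar k'$ obtained from a slightly tighter threshold.

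Since $U_{(\cdot)}$ is non‑decreasing and $\alpha_{\cdot,\delta}$ is non‑increasing, the event $\{1-U_{(k^*)}>\alpha_{k^*,\delta}+\varepsilon/2\}$ is contained in an analogous event for the deterministic pair $(\bar k,\bar k')$, which I would control by running the order‑statistics computation behind Proposition~\ref{prop:umbrella} conditionally on $\tilde{\mathcal{S}}_{\text{b}}$: since $\alpha_{\bar k',\delta}$ satisfies \eqref{eqn:alpha_k_delta}, $\p\bigl(1-U_{(\bar k')}>\alpha_{\bar k',\delta}\mid\tilde{\mathcal{S}}_{\text{b}}\bigr)\le\delta$, and taking expectations produces the $\delta$ summand; the $\varepsilon/2$ slack in the definition of $k^*$, together with the control of $\bar k'-\bar k$ (again via the Lipschitz bounds, which is where $M,C,c$ enter the first exponential), absorbs the remaining gap so that the inclusion above is deterministically impossible on $\Omega_1\cap\Omega_2\cap E\cap G\cap\{1-U_{(\bar k')}\le\alpha_{\bar k',\delta}\}$. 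A union bound over $\Omega_1^c,\Omega_2^c,E^c,G^c$ and the Proposition~\ref{prop:umbrella} event then yields exactly the stated inequality.

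The step I expect to be the main obstacle is precisely the handling of the random order $k^*$. Proposition~\ref{prop:umbrella} controls $\tilde{R}_0(\hat{\phi}_k)$ only for a \emph{fixed} $k$, whereas $k^*$ is chosen from the data --- through $\hat{D}^+$ and through the ordered scores $t_{(k)}$ --- so a crude union bound over $k$ is wasteful and, more seriously, the selection could a priori be correlated with the bad events $\{1-U_{(k)}>\alpha_{k,\delta}\}$ in an unfavorable way. Replacing $k^*$ by deterministic surrogates, made quantitative by the Lipschitz continuity of $D$ (hence by Assumption~\ref{assumption:regularity}) and by the independence of the estimation split $\tilde{\mathcal{S}}^0_{\text{e}},\tilde{\mathcal{S}}^1_{\text{e}}$ from the threshold split $\tilde{\mathcal{S}}^0_{\text{t}}$, is the technical heart of the argument; carefully bookkeeping the several $\varepsilon/2$‑sized approximation errors --- so that everything still closes to $R_0(\hat{\phi}_{k^*})\le\alpha$ rather than to $\alpha+O(\varepsilon)$ --- is the part I expect to demand the most care.
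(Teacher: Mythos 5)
Your decomposition and bookkeeping match the paper's proof almost exactly up to the key reduction step: the good events $\Omega_1,\Omega_2$, the DKW event $E$ for $\sup_c|\hat D(c)-D(c)|\le\varepsilon/2$ with exactly the bound $2e^{-8^{-1}n^0_{\text{e}}M^{-2}\varepsilon^2}+2e^{-8^{-1}n^1_{\text{e}}M^{-2}\varepsilon^2}$, the DKW event $G$ for the empirical $\tilde F_0^{\hat T}$ at the order statistics with bound $2e^{-8^{-1}nM^{-2}C^{-2}c^2\varepsilon^2}$, the deterministic quantile surrogates $\bar t_{(k)}=(\tilde F_0^{\hat T})^{-1}(k/n)$ (the paper's $c_k$), and the Lipschitz transfer via the density bounds $c,C$ from Assumption~\ref{assumption:regularity}. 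You also correctly identified the random order $k^*$ as the crux. This is all genuinely the same argument.

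Where you diverge is the final reduction, and this is where you leave a real gap. You aim to control $\p\bigl(1-U_{(k^*)}>\alpha_{k^*,\delta}+\varepsilon/2\bigr)$ directly, which requires controlling $1-U_{(k^*)}$ \emph{from above} (a lower bound on $k^*$) and $\alpha_{k^*,\delta}$ \emph{from below} (an upper bound on $k^*$), hence the two-sided sandwich $\bar k\le k^*\le\bar k'$. You then need the failure of the Proposition~\ref{prop:umbrella} event for $\bar k'$, the inclusion into the deterministic pair, and the ``control of $\bar k'-\bar k$'' to close the $\varepsilon/2$ gap. But the step $U_{(\bar k')}-U_{(\bar k)}\le\varepsilon/2$ does not obviously follow from the Lipschitz bounds without an explicit estimate of $\bar k'-\bar k$, and with your $G$-radius $c\varepsilon/(4MC)$ the residual $(\bar k'-\bar k)/n$ must beat $\varepsilon/2 - c\varepsilon/(2MC)$, which can even be negative when $c/(MC)>1$ (quite possible, since $M=(1-m_0)/(m_0-m_1)$ is typically small). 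Additionally, the existence of a deterministic $\bar k'$ with a strictly tighter threshold needs more slack than Assumption~\ref{assumption:sample_size} guarantees. So the route is not shown to close, and the constants suggest it may not close as stated.

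The paper avoids the upper bound entirely by exploiting the monotonicity you mention but don't use: $R_0(\hat\phi_k)=1-F_0^{\hat T}(t_{(k)})$ is non-increasing in $k$, so the one-sided bound $k_0\le k^*$ alone gives $R_0(\hat\phi_{k^*})\le R_0(\hat\phi_{k_0})$ deterministically. With $k_0=\min\{k:\alpha_{k,\delta}-D(c_k)\le\alpha-\varepsilon/4\}$ (deterministic given $\hat T$), the good events give $\alpha_{k_0,\delta}-D(t_{(k_0)})\le\alpha$, and since $R_0(\hat\phi_{k_0})=\tilde R_0(\hat\phi_{k_0})-D(t_{(k_0)})$, the event $\{R_0(\hat\phi_{k_0})>\alpha\}$ is contained in $\{\tilde R_0(\hat\phi_{k_0})>\alpha_{k_0,\delta}\}$, whose probability is at most $\delta$ by Proposition~\ref{prop:umbrella} applied (conditionally on $\hat T$) to the fixed order $k_0$. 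No upper bound on $k^*$, no control of $\bar k'-\bar k$, and the $\varepsilon$ is spent precisely as $\varepsilon/2$ (for $E$) plus two $\varepsilon/4$'s (for the passage $t_{(k)}\leftrightarrow c_k$ in each direction). If you replace your third paragraph by this monotonicity reduction, your proof becomes the paper's.

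Minor note: your claim $|D(t_{(k)})-D(\bar t_{(k)})|\le\varepsilon/2$ overshoots by a factor of two; with the $G$-radius $c\varepsilon/(4MC)$ and the Lipschitz constants $1/c$ and $MC$ you get $\varepsilon/4$, which is what the paper's budget requires.
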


Note that the upper bound of $\p\left(R_0(\hat{\phi}_{k^*})>\alpha\right)$ is $\delta$, our violation rate control target, plus a few terms which converge to zero as the sample sizes increase. To establish this inequality, we first exclude the complement of the events described in Assumption \ref{assumption:separability} and \ref{assumption:regularity}. Then, we further restrict ourselves on the event constructed by a Glivenko-Cantelli type inequality where $\hat{D}$ and $D$ only differ by $2^{-1}\varepsilon$. There, the order selection criterion can be written as $k^* = \min\{k\in\{1, \ldots, n\}:\alpha_{k,\delta} - D(t_{(k)}) \leq \alpha - 2^{-1}\varepsilon\}$. The main difficulty of the proof is to handle the randomness of the threshold $t_{(k^*)}$. Unlike the deterministic order $k_*$ in the original NP umbrella algorithm, the new order $k^*$ is stochastic. As such, even when conditioning on $\hat T$, $t_{(k^*)}$ is sill random and cannot be handled as a normal order statistic.  Our solution is to find a high probability deterministic lower bound for $t_{(k^*)}$. To do this, we introduce $c_k$, the $k/n$ quantile of $\tilde{F}^{\hat{T}}_0$, which is a deterministic value if we consider $\hat{T}$ to be fixed. Then, we show that $D(t_{(k)})$ only differs from $D(c_k)$ by $4^{-1}\varepsilon$ for all $k$ and that $\alpha_{k^*,\delta} - D(c_{k^*}) \leq \alpha - 4^{-1}\varepsilon$. Then, we define $k_0 = \min\{k\in\{1, \ldots, n\}: \alpha_{k,\delta} - D(c_k) \leq \alpha - 4^{-1}\varepsilon\}$, which is another deterministic value, given that $\hat{T}$ is considered to be fixed. Then, we find that $k_0 \leq k^*$ and $\alpha_{k_0,\delta} - D(t_{(k_0)}) \leq \alpha$ with high probability. Therefore, $t_{(k_0)}$ is a high probability lower bound for $t_{(k^*)}$. Moreover, $t_{(k_0)}$ is an order statistic with deterministic order (for fixed $\hat{T}$) and thus its distribution can be written as a binomial probability. The fact $\alpha_{k_0,\delta} - D(t_{(k_0)}) \leq \alpha$ combined with Proposition \ref{prop:umbrella} yields that the violation rate of $\hat \phi_{k_0}(\cdot)$ is smaller than $\delta$. The readers are referred to Appendix \ref{proof:adjustment} for a complete proof.

\subsection{Theoretical properties of Algorithm $1^{\#}$  }\label{sec: algorithm 2 property}

In this subsection, we discuss the properties of Algorithm $1^{\#}$. Recall that $m_0^\#\geq m_0$ and $m_1^\# \leq m_1$ in Assumption \ref{assumption:mixture} mean that the corruption levels are ``underestimated.'' As such, Algorithm $1^{\#}$ produces a more conservative result than Algorithm \ref{alg:adj_umbrella}. To see this, note that the only difference between two algorithms is that $(1-m_0)(m_0-m_1)^{-1}$ in Algorithm \ref{alg:adj_umbrella} is replaced with $(1-m_0^\#)(m_0^\# - m_1^\#)^{-1}$ in Algorithm $1^{\#}$. The latter is no larger  than the former, so  we have a threshold in Algorithm $1^{\#}$ larger than or equal to that in Algorithm \ref{alg:adj_umbrella}. 

On the other hand, under Assumption \ref{assumption:mixture}, Algorithm $1^{\#}$ is still less conservative than the original NP umbrella algorithm. To digest this, we first consider the case where the label noise is totally ``ignored'', i.e., $m_0^\# = 1$ and $m_1^\# = 0$. In this case, Algorithm $1^{\#}$ is equivalent to the original NP umbrella algorithm. Then, since usually $m_0^\# < 1$ and $m_1^\# > 0$, Algorithm $1^{\#}$ produces a smaller threshold than the NP original umbrella algorithm. Therefore, Algorithm $1^{\#}$ overcomes, at least partially, the conservativeness issue of the original NP umbrella algorithm.

These insights are formalized in the following lemma.
\begin{lemma}\label{lemma:threshold_of_alg_2}
Under Assumptions \ref{assumption:mixture} - \ref{assumption:sample_size}, $k^*_\# = \min\{k \in \{1, \ldots, n\}: \alpha_{k,\delta} - \hat{D}^+_\#(t_{(k)}) \leq \alpha\}$ in Algorithm $1^{\#}$ exists. Moreover, the order $k^*_\#$ is between $k^*$ and $k_*$, i.e.,  $k^* \leq k^*_\# \leq k_*$. 
\end{lemma}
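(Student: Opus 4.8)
The plan is to reduce the claim to a pointwise inequality between the two correction functions and then read off the ordering from the three selection rules. Write $M = (1-m_0)(m_0-m_1)^{-1}$ and $M^{\#} = (1-m_0^{\#})(m_0^{\#}-m_1^{\#})^{-1}$, so that $\hat{D}(c) = M\big(\hat{\tilde{F}}_0^{\hat{T}}(c) - \hat{\tilde{F}}_1^{\hat{T}}(c)\big)$ and $\hat{D}_{\#}(c) = M^{\#}\big(\hat{\tilde{F}}_0^{\hat{T}}(c) - \hat{\tilde{F}}_1^{\hat{T}}(c)\big)$ differ only in the leading constant. First I would verify that $0 \le M^{\#} \le M$: Assumption \ref{assumption:mixture} gives $m_0^{\#}\ge m_0 > m_1 \ge m_1^{\#}$, hence $m_0^{\#}-m_1^{\#} \ge m_0-m_1 > 0$ (so $M^{\#}$ is well defined and finite) and $0 \le 1-m_0^{\#} \le 1-m_0$; a smaller nonnegative numerator over a larger positive denominator yields $0\le M^{\#}\le M$.

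Next I would pass to the positive parts. Since $\hat{D}_{\#}(c)$ and $\hat{D}(c)$ are the common factor $\hat{\tilde{F}}_0^{\hat{T}}(c) - \hat{\tilde{F}}_1^{\hat{T}}(c)$ scaled by the nonnegative constants $M^{\#}\le M$, they share the same sign and $|\hat{D}_{\#}(c)|\le|\hat{D}(c)|$. When $\hat{\tilde{F}}_0^{\hat{T}}(c)\ge\hat{\tilde{F}}_1^{\hat{T}}(c)$ one gets $0\le \hat{D}_{\#}(c)\le\hat{D}(c)$, hence $\hat{D}^+_{\#}(c)=\hat{D}_{\#}(c)\le\hat{D}(c)=\hat{D}^+(c)$; otherwise both quantities are $\le 0$ and $\hat{D}^+_{\#}(c)=\hat{D}^+(c)=0$. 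In all cases $0\le \hat{D}^+_{\#}(c)\le\hat{D}^+(c)$ for every $c\in\R$. This truncation step is exactly what lets the argument go through under Assumptions \ref{assumption:mixture}--\ref{assumption:sample_size} alone, without invoking the ``correct direction'' Assumption \ref{assumption:separability}.

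Then I would combine this with the selection rules. Evaluating at $c=t_{(k)}$ gives, for every $k$, the chain $\alpha_{k,\delta}-\hat{D}^+(t_{(k)}) \le \alpha_{k,\delta}-\hat{D}^+_{\#}(t_{(k)}) \le \alpha_{k,\delta}$. Using the equivalent form $k_* = \min\{k:\alpha_{k,\delta}\le\alpha\}$ (Lemma \ref{lemma:existence_of_alpha_k_delta}), the feasible index sets therefore nest:
\[
\{k:\alpha_{k,\delta}\le\alpha\}\ \subseteq\ \{k:\alpha_{k,\delta}-\hat{D}^+_{\#}(t_{(k)})\le\alpha\}\ \subseteq\ \{k:\alpha_{k,\delta}-\hat{D}^+(t_{(k)})\le\alpha\}.
\]
For existence, Assumption \ref{assumption:sample_size} is equivalent to $(1-\alpha)^n\le\delta$, i.e.\ to $\alpha_{n,\delta}\le\alpha$ (since $\alpha_{n,\delta}$ solves $(1-\alpha_{n,\delta})^n=\delta$), so $k=n$ lies in the smallest of the three sets; hence all three are nonempty and their minima exist, which also re-derives the existence statements for $k^*$ (Lemma \ref{lemma:alpha_k_delta}) and $k_*$. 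Taking minima over nested nonempty sets reverses the inclusions and yields $k^*\le k^*_{\#}\le k_*$.

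The argument is essentially bookkeeping, so I do not expect a serious obstacle; the only places demanding care are the well-definedness and the direction of $M^{\#}\le M$ in the first step, and the sign analysis behind $0\le\hat{D}^+_{\#}\le\hat{D}^+$ in the second. Everything afterward is immediate from monotonicity of the criterion in the correction term.
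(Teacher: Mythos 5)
Your argument is correct and follows essentially the same route as the paper's proof: establish $0 \le M^{\#} \le M$ from Assumption 1, deduce the pointwise inequality $0 \le \hat{D}^+_{\#} \le \hat{D}^+$, and then read the ordering $k^* \le k^*_{\#} \le k_*$ off the nested feasible index sets, with Assumption 2 supplying nonemptiness. You merely spell out the sign analysis behind the positive-part step and the equivalence $(1-\alpha)^n \le \delta \iff \alpha_{n,\delta} \le \alpha$ more explicitly than the paper does.
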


Next we establish a high probability control on type I error for Algorithm $1^{\#}$. Recall that a high probability control on type I error for Algorithm \ref{alg:adj_umbrella} was established in Theorem \ref{thm:adjustment}. In view of Lemma \ref{lemma:threshold_of_alg_2}, $\hat{\phi}_{k^*_\#}(\cdot)$ produced in Algorithm $1^{\#}$ has a larger threshold, and thus smaller true type I error, than that of $\hat{\phi}_{k^*}(\cdot)$ produced by Algorithm \ref{alg:adj_umbrella}. Then, a high probability control on true type I error of $\hat{\phi}_{k^*_\#}(\cdot)$ naturally follows. This result is summarized in the following corollary. 

\begin{Corollary}\label{cor:adjusted}
Under Assumptions \ref{assumption:mixture} - \ref{assumption:regularity}, the classifier $\hat{\phi}_{k^*_\#}(\cdot)$ given by Algorithm $1^{\#}$ with $k^*_\# = \min\{k \in \{1,\ldots,n\}:\alpha_{k,\delta} - \hat{D}^+_\#(t_{(k)}) \leq \alpha - \varepsilon\}$, satisfies 
\begin{align*}
    \p\left(R_0(\hat{\phi}_{k^*_\#}) > \alpha\right) \leq \delta + \delta_1(n_{\emph{b}}) + \delta_2(n_{\emph{b}})+ 2e^{-8^{-1}nM^{-2}C^{-2}c^2\varepsilon^2} + 2e^{-8^{-1}n^0_{\emph{e}}M^{-2}\varepsilon^2} + 2e^{-8^{-1}n^1_{\emph{e}}M^{-2}\varepsilon^2}\,.
\end{align*}
in which $n_{\emph{\text{b}}}=|\tilde{\mathcal{S}}_{\emph{\text{b}}}|$, $n = |\tilde{\mathcal{S}}^0_{\emph{\text{t}}}|$, $n^0_{\emph{e}} = |\tilde{\mathcal{S}}^0_{\emph{\text{e}}}|$, and $n^1_{\emph{e}} = |\tilde{\mathcal{S}}^1_{\emph{\text{e}}}|$.
\end{Corollary}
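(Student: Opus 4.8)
The plan is to derive Corollary \ref{cor:adjusted} from Theorem \ref{thm:adjustment} by a pointwise domination argument, so that no new concentration inequalities are needed and the right-hand side is literally the bound already obtained for Algorithm \ref{alg:adj_umbrella}. The starting observation is that Algorithm $1^{\#}$ and Algorithm \ref{alg:adj_umbrella} operate on the very same random split, the same trained scoring function $\hat T$, the same sorted threshold candidates $\{t_{(1)},\ldots,t_{(n)}\}$, and the same empirical distributions $\hat{\tilde F}_0^{\hat T}$ and $\hat{\tilde F}_1^{\hat T}$; they differ only through the multiplicative constant in front of $\hat{\tilde F}_0^{\hat T}(\cdot)-\hat{\tilde F}_1^{\hat T}(\cdot)$.

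First I would record the elementary inequality $M^{\#}:=(1-m_0^{\#})(m_0^{\#}-m_1^{\#})^{-1}\le (1-m_0)(m_0-m_1)^{-1}=:M$, valid under Assumption \ref{assumption:mixture}: $m_0^{\#}\ge m_0$ makes the numerator $1-m_0^{\#}$ no larger than $1-m_0$, while $m_0^{\#}\ge m_0$ and $m_1^{\#}\le m_1$ make the denominator $m_0^{\#}-m_1^{\#}$ no smaller than $m_0-m_1>0$, with positivity of $m_0^{\#}-m_1^{\#}$ coming from $m_0^{\#}\ge m_0>m_1\ge m_1^{\#}$. Consequently, for every $c\in\R$, since $\hat D_{\#}(c)=M^{\#}(\hat{\tilde F}_0^{\hat T}(c)-\hat{\tilde F}_1^{\hat T}(c))$ and $\hat D(c)=M(\hat{\tilde F}_0^{\hat T}(c)-\hat{\tilde F}_1^{\hat T}(c))$, one has $\hat D_{\#}^{+}(c)\le \hat D^{+}(c)$: on $\{\hat{\tilde F}_0^{\hat T}(c)-\hat{\tilde F}_1^{\hat T}(c)\ge 0\}$ this follows from $0\le M^{\#}\le M$, and otherwise both positive parts vanish.

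Next I would turn this into a comparison of the selected orders. From $\hat D_{\#}^{+}(t_{(k)})\le \hat D^{+}(t_{(k)})$ for every $k$ we get $\alpha_{k,\delta}-\hat D_{\#}^{+}(t_{(k)})\ge \alpha_{k,\delta}-\hat D^{+}(t_{(k)})$, hence $\{k:\alpha_{k,\delta}-\hat D_{\#}^{+}(t_{(k)})\le\alpha-\varepsilon\}\subseteq\{k:\alpha_{k,\delta}-\hat D^{+}(t_{(k)})\le\alpha-\varepsilon\}$, and taking minima yields $k^{*}\le k^{*}_{\#}$ on every realization (existence of $k^{*}_{\#}$ follows exactly as that of $k^{*}$ in Theorem \ref{thm:adjustment}, using $\hat D^{+}_{\#}\ge 0$ and Assumption \ref{assumption:sample_size}; cf. Lemma \ref{lemma:threshold_of_alg_2}). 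Now condition on $\hat T$ and $\tilde{\mathcal S}^{0}_{\text{t}}$: the candidates $t_{(1)}\le\cdots\le t_{(n)}$ are then fixed and $k\mapsto R_0(\hat\phi_k)=P_0(\hat T(X)>t_{(k)})$ is non-increasing in $k$, so $k^{*}\le k^{*}_{\#}$ forces $R_0(\hat\phi_{k^{*}_{\#}})\le R_0(\hat\phi_{k^{*}})$ on the same realization. Therefore $\{R_0(\hat\phi_{k^{*}_{\#}})>\alpha\}\subseteq\{R_0(\hat\phi_{k^{*}})>\alpha\}$ as events, and
\[
\p\big(R_0(\hat\phi_{k^{*}_{\#}})>\alpha\big)\le \p\big(R_0(\hat\phi_{k^{*}})>\alpha\big),
\]
which Theorem \ref{thm:adjustment} bounds by precisely the quantity in the statement.

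Since everything reduces to an order comparison together with the monotonicity of the true type I error along the sorted threshold candidates, I do not expect a genuinely hard step here. The only points that need care are keeping the $\varepsilon$ compensation identical in the definitions of $k^{*}$ and $k^{*}_{\#}$ so that the event inclusion is exact, and checking that the existence argument for $k^{*}$ in the proof of Theorem \ref{thm:adjustment} transfers verbatim to $k^{*}_{\#}$ (which it does, as $\hat D^{+}_{\#}\ge 0$ and only Assumption \ref{assumption:sample_size} is used there).
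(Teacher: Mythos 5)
Your proposal is correct and follows essentially the same route as the paper: deduce $k^{*}\le k^{*}_{\#}$ from $\hat D^{+}_{\#}\le \hat D^{+}$ (which is the content of Lemma \ref{lemma:threshold_of_alg_2}), use monotonicity of $c\mapsto R_0(\hat\phi_c)$ along the sorted thresholds to get $R_0(\hat\phi_{k^{*}_{\#}})\le R_0(\hat\phi_{k^{*}})$, and then invoke Theorem \ref{thm:adjustment}. The only differences are cosmetic: you re-derive the order comparison rather than citing Lemma \ref{lemma:threshold_of_alg_2}, and you are slightly more careful than the paper in noting that the $\varepsilon$ compensation must appear identically in both definitions of $k^{*}$ and $k^{*}_{\#}$ (the paper's Lemma \ref{lemma:threshold_of_alg_2} is stated for the $\varepsilon=0$ version and is cited directly, implicitly assuming the argument transfers verbatim, which it does).
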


\section{Numerical Analysis}\label{sec:sim_and_real_data}

%\textcolor{red}{Shunan, you need to read through this numerical section and edit heavily.  before section 5.3 (maybe in a new paragraph), something about algorithm 2 and comparison with others' methods should be mentioned. }

In this section, we apply Algorithms \ref{alg:adj_umbrella} (known corruption levels) and $1^{\#}$ (unknown corruption levels) on simulated and real datasets, and compare with other methods in the literature.  %In the subroutine that searches for $\alpha_{k, \delta}$ (Algorithm \ref{alg:bi_search} in the Appendix), we set $r = 10^{-5}$. 
 We present the (approximate) type I error violation rates\footnote{Strictly speaking, the observed type I error violation rate is only an approximation to the real violation rate. The approximation is two-fold: i). in each repetition of an experiment, the population type I error is approximated by empirical type I error on a large test set; ii). the violation rate should be calculated based on infinite repetitions of the experiment, but we only calculate it based on a finite number of repetitions. However, such approximation is unavoidable in numerical studies.}
 and the averages of (approximate) true type II errors. Besides the simulations in this section, we have additional simulations in Appendix \ref{appendix:sim}. Furthermore, the violin plots associated with selected simulation are presented in Appendix \ref{appendix:violin_for_section_5}.

As a justification of the minor discrepancy between our theory and implementation, readers can find in Appendix \ref{appendix:numerical} the results for a slightly different implementation of Algorithm \ref{alg:adj_umbrella},  in which $k^{*} = \min\{k\in\{1, \ldots, n\} : \alpha_{k,\delta} - \hat{D}^+(t_{(k)}) \leq \alpha - \varepsilon\}$ and $\varepsilon = 0.0001$. In principle, it is possible that setting $\varepsilon > 0$ will make $k^*$ larger than when $\varepsilon = 0$ as $\{k \in \{1,2,\ldots, n\}, \alpha_{k,\delta} - \hat{D}^+(t_{(k)}) \leq \alpha_{k,\delta} - \varepsilon\}$ is a subset of $\{k \in \{1,2,\ldots, n\}, \alpha_{k,\delta} - \hat{D}^+(t_{(k)}) \leq \alpha_{k,\delta}\}$. This will make the threshold larger and the type I error and the violation rate smaller. However, since $\varepsilon = 0.0001$ is a very small value, its effect on $k^*$ is very minor. In numerical studies, two implementations ($\varepsilon = 0.0001$ in the Appendix vs. $\varepsilon = 0$ in this section) give nearly identical results for all examples. Both implementations generate the same type I errors and type II errors for most (at least $95\%$) cases. Moreover, the difference in violation rates of the two implementations is no larger than a very small number $0.1\delta$.

 %\textcolor{red}{Shunan, the Appendix D.2 just contains violin plots for Examples 1-4 and email spam data.  Either you put all updpated results in there, or just say that the violin plots are only for a part of the examples.}

\subsection{Simulation}\label{sec:simulation}

\subsubsection{Algorithm \ref{alg:adj_umbrella}.}

We present three distributional settings for Algorithm \ref{alg:adj_umbrella} (known $m_0$ and $m_1$). In each setting, $2N$ observations are generated as a training sample, of which half are from the \textit{corrupted} class $0$ and half from the \textit{corrupted} class $1$. The number $N$ varies from $200$ to $2{,}000$. To approximate the true type I and II errors, we generate $20{,}000$ \textit{true} class $0$ observations and $20{,}000$ \textit{true} class $1$ observations as the evaluation set. For each distribution and sample size combination, we repeat the procedure $1{,}000$ times. Algorithm \ref{alg:adj_umbrella} (``adjusted") and the original NP umbrella algorithm (``original") are both applied, paired with different base algorithms.  

\begin{simulation}[Gaussian Distribution]\label{sim:gmm}
Let $X^0 \sim \mathcal{N}(\mu_0, \Sigma)$ and $X^1 \sim \mathcal{N}(\mu_1, \Sigma)$, where $\mu_0 = (0,0,0)^\top, \mu_1 = (1,1,1)^\top$ and
\begin{align*}
    \Sigma = 
    \begin{pmatrix}
    2 & -1 & 0 \\
    -1 & 2 & -1 \\
    0 & -1 & 2
    \end{pmatrix}\,,
\end{align*}
and the base algorithm is linear discriminant analysis (LDA). For different $(m_0, m_1, \alpha, \delta)$ combinations, the (approximate) type I error violation rates  and the averages of (approximate) true type II errors generated by Algorithm \ref{alg:adj_umbrella} are reported in Tables \ref{table:violation_rate_sim_1} and \ref{table:violation_rate_sim_2}, respectively. %The (approximate) true type I and II errors are presented in Figures \ref{fig:gmm0} and \ref{fig:gmm1}, respectively. The two rows in each figure respectively correspond to the $m_1=0.95, m_1=0.05$ and $m_0=0.85, m_1=0.15$ settings.

\end{simulation}

%\end{simulation}

\begin{table}[h]
\centering
\caption{(Approximate) type I error violation rates over $1{,}000$ repetitions for Simulation \ref{sim:gmm}. Standard errors ($\times10^{-3}$) in parentheses.}
\vspace{5 pt}
\footnotesize 
\begin{tabular}{|l|l|l|l|l|l|l|l|l|}
\hline
\multirow{2}{*}{$N$} & \multicolumn{2}{l|}{\begin{tabular}[c]{@{}l@{}}$m_0 = .95, m_1 = .05$\\ $\alpha = .05, \delta = .05$\end{tabular}} & \multicolumn{2}{l|}{\begin{tabular}[c]{@{}l@{}}$m_0 = .9, m_1 = .1$\\ $\alpha = .05, \delta = .05$\end{tabular}} & \multicolumn{2}{l|}{\begin{tabular}[c]{@{}l@{}}$m_0 = .95, m_1 = .05$\\ $\alpha = .1, \delta = .1$\end{tabular}} & \multicolumn{2}{l|}{\begin{tabular}[c]{@{}l@{}}$m_0 = .9, m_1 = .1$\\ $\alpha = .1, \delta = .1$\end{tabular}} \\ \cline{2-9} 
 & adjusted & original & adjusted & original & adjusted & original & adjusted & original \\ \hline
$200$ & $.026$ $(5.03)$ & $.001$ $(1.00)$ & $.033$ $(5.65)$ & $0$ $(0)$ & $.078$ $(8.84)$ & $.003$ $(1.73)$ & $.073$ $(8.23)$ & $0$ $(0)$ \\ \hline
$500$ & $.031$ $(5.40)$ & $0$ $(0)$ & $.046$ $(6.63)$ & $0$ $(0)$ & $.090$ $(9.05)$ & $.001$ $(1.00)$ & $.085$ $(8.82)$ & $0$ $(0)$ \\ \hline
$1{,}000$ & $.038$ $(5.97)$ & $0$ $(0)$ & $.049$ $(6.83)$ & $0$ $(0)$ & $.105$ $(9.70)$ & $0$ $(0)$ & $.081$ $(8.63)$ & $0$ $(0)$ \\ \hline
$2{,}000$ & $.053$ $(6.96)$ & $0$ $(0)$ & $.046$ $(6.63)$ & $0$ $(0)$ & $.087$ $(8.92)$ & $0$ $(0)$ & $.099$ $(9.45)$ & $0$ $(0)$ \\ \hline
\end{tabular}
\label{table:violation_rate_sim_1}
\end{table}

\begin{table}[h]
\caption{Averages of (approximate) true type II errors over $1{,}000$ repetitions for Simulation \ref{sim:gmm}. Standard errors ($\times10^{-3}$) in parentheses.}
\vspace{5 pt}
\footnotesize 
\centering
\begin{tabular}{|l|l|l|l|l|l|l|l|l|}
\hline
\multirow{2}{*}{$N$} & \multicolumn{2}{l|}{\begin{tabular}[c]{@{}l@{}}$m_0 = .95, m_1 = .05$\\ $\alpha = .05, \delta = .05$\end{tabular}} & \multicolumn{2}{l|}{\begin{tabular}[c]{@{}l@{}}$m_0 = .9, m_1 = .1$\\ $\alpha = .05, \delta = .05$\end{tabular}} & \multicolumn{2}{l|}{\begin{tabular}[c]{@{}l@{}}$m_0 = .95, m_1 = .05$\\ $\alpha = .1, \delta = .1$\end{tabular}} & \multicolumn{2}{l|}{\begin{tabular}[c]{@{}l@{}}$m_0 = .9, m_1 = .1$\\ $\alpha = .1, \delta = .1$\end{tabular}} \\ \cline{2-9} 
 & adjusted & original & adjusted & original & adjusted & original & adjusted & original \\ \hline
$200$ & $.685$ $(7.16)$ & $.706$ $(4.65)$ & $.697$ $(7.06)$ & $.826$ $(3.54)$ & $.333$ $(3.93)$ & $.403$ $(3.56)$ & $.369$ $(4.93)$ & $.537$ $(4.03)$ \\ \hline
$500$ & $.481$ $(4.08)$ & $.590$ $(2.99)$ & $.512$ $(4.92)$ & $.743$ $(2.79)$ & $.249$ $(1.94)$ & $.307$ $(1.83)$ & $.257$ $(2.21)$ & $.436$ $(2.48)$ \\ \hline
$1{,}000$ & $.396$ $(2.53)$ & $.534$ $(2.19)$ & $.387$ $(2.37)$ & $.663$ $(1.68)$ & $.218$ $(1.18)$ & $.287$ $(1.22)$ & $.213$ $(1.01)$ & $.381$ $(1.28)$ \\ \hline
$2{,}000$ & $.350$ $(1.51)$ & $.491$ $(1.45)$ & $.371$ $(1.99)$ & $.651$ $(1.45)$ & $.201$ $(.76)$ & $.268$ $(.77)$ & $.205$ $(.87)$ & $.375$ $(1.01)$ \\ \hline
\end{tabular}
\label{table:violation_rate_sim_2}
\end{table}

\begin{simulation}[Uniform Distribution within Circles]\label{sim:two_circle}
Let $X^0$ and $X^1$ be uniformly distributed within unit circles respectively centered at $(0,0)^\top$ and $(1,1)^\top$. The base algorithm is logistic regression.  We only report (approximate) type I error violation rates and the averages of (approximate) true type II errors generated by Algorithm \ref{alg:adj_umbrella} for one combination ($m_0 = .95$, $m_1 = .05$, $\alpha = .1$ and $\delta = .1$) in Table \ref{table:sim_u}.
%The (approximate) true type I and II errors are presented in Figures \ref{fig:two_circle0} and \ref{fig:two_circle1}, respectively.
\end{simulation}

\begin{table}[h]
\caption{(Approximate) type I error violation rates, and averages of (approximate) true type II errors over $1{,}000$ repetitions for  Simulation \ref{sim:two_circle} ($m_0 = .95$, $m_1 = .05$, $\alpha = .1$ and $\delta = .1$). Standard errors ($\times10^{-3}$) in parentheses.}
\vspace{5 pt}
\centering
\begin{tabular}{|l|l|l|l|l|}
\hline
\multirow{2}{*}{$N$} & \multicolumn{2}{l|}{\begin{tabular}[c]{@{}l@{}}(approximate) \\ violation rate\end{tabular}} & \multicolumn{2}{l|}{\begin{tabular}[c]{@{}l@{}}averages of \\ (approximate) true \\ type II errors\end{tabular}} \\ \cline{2-5}  
 & adjusted & original & adjusted & original \\ \hline
$200$ & $.079$ $(8.53)$ & $.006$ $(2.44)$ & $.164$ $(2.77)$ & $.226$ $(3.35)$  \\ \hline
$500$ & $.086$ $(8.87)$ & $.001$ $(1.00)$ & $.123$ $(.92)$ & $.161$ $(.80)$  \\ \hline
$1{,}000$ & $.085$ $(8.82)$ & $0$ $(0)$ & $.109$ $(.61)$ & $.151$ $(.58)$  \\ \hline
$2{,}000$ & $.085$ $(8.82)$ & $0$ $(0)$ & $.101$ $(.44)$ & $.142$ $(.39)$  \\ \hline
\end{tabular}
\label{table:sim_u}
\end{table}

%\end{simulation}

\begin{simulation}[T Distribution]\label{sim:two_t}
Let $X^0$ and $X^1$ be t-distributed with shape matrix $\Sigma$, which was specified in Simulation \ref{sim:gmm}, $4$ degrees of freedom, and centered at $(0,0,0)^\top$ and $(1,1,1)^\top$ respectively. 
%, shape matrix $\Sigma$ and degree of freedom $4$, where $\Sigma$ is the same as in \ref{sim:gmm}.
The base algorithm is LDA. Similar to the previous simulation, we only report (approximate) type I error violation rates and the averages of (approximate) true type II errors generated by Algorithm \ref{alg:adj_umbrella} for one combination ($m_0 = .95$, $m_1 = .05$, $\alpha = .1$ and $\delta = .1$) in Table \ref{table:sim_t}.
%The (approximate) true type I and II errors are presented in Figures \ref{fig:two_t0} and \ref{fig:two_t1}, respectively.
\end{simulation}

\begin{table}[h]
\caption{(Approximate) type I error violation rates, and averages  of (approximate) true type II errors over $1{,}000$ repetitions for  Simulation \ref{sim:two_t}  ($m_0 = .95$, $m_1 = .05$, $\alpha = .1$ and $\delta = .1$). Standard errors ($\times10^{-3}$) in parentheses.}
\vspace{5 pt}
\centering
\begin{tabular}{|l|l|l|l|l|}
\hline
\multirow{2}{*}{$N$} & \multicolumn{2}{l|}{\begin{tabular}[c]{@{}l@{}}(approximate) \\ violation rate\end{tabular}} & \multicolumn{2}{l|}{\begin{tabular}[c]{@{}l@{}}average of \\ (approximate) true \\ type II errors\end{tabular}} \\ \cline{2-5}  
 & adjusted & original & adjusted & original \\ \hline
$200$ & $.068$ $(7.96)$ & $.008$ $(2.82)$ & $.526$ $(5.67)$ & $.575$ $(4.32)$ \\ \hline
$500$ & $.085$ $(8.82)$ & $.002$ $(1.41)$ & $.398$ $(3.32)$ & $.472$ $(2.59)$ \\ \hline
$1{,}000$ & $.090$ $(9.05)$ & $0$ $(0)$ & $.345$ $(2.07)$ & $.432$ $(1.78)$ \\ \hline
$2{,}000$ & $.093$ $(9.19)$ & $0$ $(0)$ & $.314$ $(1.24)$ & $.401$ $(1.18)$ \\ \hline
\end{tabular}
\label{table:sim_t}
\end{table}

%\end{simulation}

%All the three simulation studies presented above show that the empirical type I error of the original umbrella algorithm is under control with high probability. However, the simulation confirms the conservativeness of the original umbrella algorithm. 

%A complete report of the (approximate) true type I and II errors for these three simulation models are presented in figures included in Appendix \ref{appendix:violin_for_section_5}. 

The results from Simulations \ref{sim:gmm}-\ref{sim:two_t} confirm that the original NP umbrella algorithm is overly conservative on type I error when there is label noise in the training data, resulting in type I error violation rates (close to) 0 in all settings.    
In contrast, the label-noise-adjusted Algorithm \ref{alg:adj_umbrella} has type I errors controlled at the specified level with high probability and achieves much better type II errors.

\subsubsection{Algorithm $1^{\#}$.} In this section, we show numerically that under the NP paradigm, the ``under-estimates" of corruption levels serve Algorithm $1^{\#}$ well, while "over-estimates" do not.

\begin{simulation}\label{sim:underestimate}
The distributional setting is the same as in Simulation \ref{sim:gmm}. Different combinations of $m_0^\#$ and $m_1^\#$ are used. the (approximate) type I error violation rates  and the averages of (approximate) true type II errors generated by Algorithm $1^{\#}$ for one combination ($m_0 = .95$, $m_1 = .05$, $\alpha = .1$ and $\delta = .1$) are reported in Tables \ref{tbl:underestimate_I} and \ref{tbl:underestimate_II}.

\begin{table}[h]
\centering
\caption{(Approximate) type I error violation rates over $1{,}000$ repetitions for Simulation \ref{sim:underestimate}. Standard errors ($\times 10^{-3}$) in parentheses.}\label{tbl:underestimate_I}
\begin{tabular}{|l|l|l|l|l|}
\hline
$N$ & \begin{tabular}[c]{@{}l@{}}$m_0^\# = .93,$\\ $m_1^\# = .07$\end{tabular} &  \begin{tabular}[c]{@{}l@{}}$m_0^\# = .95,$\\ $m_1^\# = .05$\end{tabular} &  \begin{tabular}[c]{@{}l@{}}$m_0^\# = .97,$\\ $m_1^\# = .03$\end{tabular} & 
\begin{tabular}[c]{@{}l@{}}original\end{tabular} \\ \hline
$200$ & $.136(10.85)$  & $.078(8.48)$  & $.055(7.21)$ &  $.003(1.73)$ \\ \hline
$500$ & $.218(13.06)$  & $.090(9.05)$ & $.038(6.05)$ & $.001(1.00)$ \\ \hline
$1,000$ & $.324(14.81)$  & $.105(9.70)$ & $.012(3.44)$ & $0(0)$ \\ \hline
$2,000$ & $.462(15.77)$  & $.087(8.92)$ & $.005(2.23)$& $0(0)$ \\ \hline
\end{tabular}
\end{table}

\begin{table}[h]
\caption{(Approximate) type II error violation rates over $1{,}000$ repetitions for  Simulation \ref{sim:underestimate}. Standard errors ($\times 10^{-3}$) in parentheses.}\label{tbl:underestimate_II}
\centering
\begin{tabular}{|l|l|l|l|l|}
\hline
$N$ & \begin{tabular}[c]{@{}l@{}}$m_0^\# = .93,$\\ $m_1^\# = .07$\end{tabular} &  \begin{tabular}[c]{@{}l@{}}$m_0^\# = .95,$\\ $m_1^\# = .05$\end{tabular} &  \begin{tabular}[c]{@{}l@{}}$m_0^\# = .97,$\\ $m_1^\# = .03$\end{tabular} & \begin{tabular}[c]{@{}l@{}}original\end{tabular} \\ \hline
$200$ & $.287(3.43)$  & $.333(3.92)$  & $.373(4.62)$ & $.403(3.56)$ \\ \hline
$500$ & $.215(1.61)$  & $.249(1.94)$ &  $.285(2.22)$ & $.307(1.83)$ \\ \hline
$1,000$ & $.189(1.02)$  & $.218(1.18)$ &  $.250(1.37)$ & $.287(1.22)$ \\ \hline
$2,000$ & $.174(.65)$  & $.201(.76)$  & $.230(.86)$ & $.268(.77)$ \\ \hline
\end{tabular}
\end{table}

\end{simulation}

%\textcolor{red}{The algorithm should be applied to the current simulation 8. The other simulations should be moved to the appendix.  }

The second to the last column in Table \ref{tbl:underestimate_I} confirms that, using strict under-estimates of corruption levels (i.e., $m_0^{\#} > m_0$ and $m_1^{\#} < m_1$), the type I error control objective is satisfied.  Note that we also include the strict over-estimate scenarios in the second  column (i.e., $m_0^{\#} < m_0$ and $m_1^{\#} > m_1$), where we see that the type I violation rates exceed the target $\delta$. Hence the under-estimate requirement in the theory part is not merely for technical convenience. Table \ref{tbl:underestimate_II} confirms that the using strict under-estimates would lead to higher type II errors than using the true corruption levels. This is a necessary price to pay for not knowing the exact levels, but still it is better than totally ignoring the label corruption and applying the original NP umbrella algorithm.  

We state again that in this work, we rely on domain experts to supply under-estimates of corruption levels.  In the literature, there are existing estimators. For example, we implement estimators proposed by \cite{liu2015classification} in Simulations \ref{sim:uniform_unknown_flip_rate} and \ref{sim:normal_unknown_flip_rate} in Appendix \ref{appendix:sim}. There, we would see that those estimators do not help Algorithm $1^{\#}$ achieve the type I error control objective. But this is not a problem with these estimators themselves. Even ``oracle" consistent and unbiased estimators that center at $m_0$ and $m_1$ do not serve the purpose either, as revealed in  Simulation \ref{sim:normal_estimator} in Appendix \ref{appendix:sim}.  %As Assumption \ref{assumption:mixture} suggests, under the NP paradigm, ideal estimators of corruption levels are ``under-estimators,'', i.e., $m_0^\# \geq m_0$ and $m_1^\# \leq m_1$.
As expected, given our discussion about the need for under-estimates of the corruption levels (i.e., $m_0^\# \geq m_0$ and $m_1^\# \leq m_1$), Algorithm $1^{\#}$ performs poorly using these unbiased estimates. It could be an interesting topic for future research to identify an efficient method for producing biased estimates which will satisfy (with high probability) the bounds necessary to ensure correct type 1 error control.

\subsubsection{Benchmark Algorithms.}

In the next simulation, we apply existing state-of-the-art algorithms that perform classification on data with label noise. In particular, we apply the backward loss correction algorithm in \cite{patrini2017making} and the T-revision method in \cite{xia2019anchor}. Since we focus on the NP paradigm, we will report the same (approximate) type I error violation rates and averages of (approximate) true type II errors as for our own methods.

%In the next simulation, we apply existing state-of-the-art algorithms that perform classification on data with label noise. In particular, we apply the backward loss correction algorithm in \cite{patrini2017making} and the T-revision method in \cite{xia2019anchor}. Since we focus on the NP paradigm, we will report the same (approximate) type I error violation rates and averages of (approximate) true type II errors as for our own methods. %Furthermore, we provide comparisons of benchmark algorithms with Algorithms \ref{alg:adj_umbrella} and $1^{\#}$.

%Though the two benchmark algorithms are not built for NP paradigm, we report the violation rates of the type I errors at the control level $\alpha = 0.1$ and $\delta = 0.1$. 

%Moreover, we provide the performance of our algorithm at the same control level for comparison. 

%\textcolor{red}{Shunan, you did not put the new bib file in it. I used an old version. So the newly added reference does not show up properly.  Please use the new version next time.}

%\textcolor{purple}{change the simulation description to the standard language, and get rid of the 0. in $m_0$ etc.}

\begin{simulation}\label{sim:compare_gmm}
The distributional setting is the same as in Simulation \ref{sim:gmm}. The (approximate) type I error violation rates and averages of (approximate) true type II errors generated by benchmark algorithms for one combination ($m_0 = .95$, $m_1 = .05$, $\alpha = .1$ and $\delta = .1$) are reported in Table \ref{tbl:compare_gmm_1} in the main and Table \ref{tbl:compare_gmm_2} in Appendix \ref{appendix:table}, respectively. %Furthermore, results generated by Algorithms \ref{alg:adj_umbrella} and $1^{\#}$ (with LDA as the base algorithm) are provided for comparison. %\textcolor{red}{Is simulation 5 about algorithm 2 or algorithm 1?  Currently, I write it for Algorithm 2, but if it is for algorithm 1, please change it accordingly.  Now, we have two algorithms, it is critical that we specify what if the algorithm used in each example. Also, if Simulation 5 is for algorithm1, probably should put it (together with the discussion about Simulations 9 and 10) in section 5.1.1 instead of here.} 
%\textcolor{purple}{Shunan, Algorithm 2 is not in ttables 7 and 8.}

%\textcolor{purple}{In tables 7 and 8, write down the parameter combination, so that the readers do not need to search for a comparison target all over the place. Also, please add the output from the algorithm 1 table for easy comparison.}

\begin{table}[h]
\centering
\caption{(Approximate) type I error violation rates over $1{,}000$ repetitions for Simulation \ref{sim:compare_gmm} ($m_0 = .95$, $m_1 = .05$, $\alpha = .1$ and $\delta = .1$). Standard errors ($\times 10^{-3}$) in parentheses.}\label{tbl:compare_gmm_1}
\begin{tabular}{|l|l|l|l|l|}
\hline
\multirow{2}{*}{algorithms} & \multicolumn{4}{l|}{$N$} \\ \cline{2-5} 
 & $200$ & $500$ & $1{,}000$ & $2{,}000$ \\ \hline
T-revision & $.713(14.31)$  & $.675(14.82)$ & $.651(15.08)$  & $.621(15.35)$ \\ \hline
\begin{tabular}[c]{@{}l@{}}backward loss correction\\ (known corruption levels)\end{tabular} & $.994(2.44)$ & $.977(4.74)$ & $.770(13.31)$ & $.127(10.53)$ \\ \hline
\begin{tabular}[c]{@{}l@{}}backward loss correction\\ (unknown corruption levels)\end{tabular} & $.984(3.97)$ & $.793(5.20)$ & $.320(6.89)$ & $.131(3.60)$ \\ \hline

% \begin{tabular}[c]{@{}l@{}}Algorithm \ref{alg:adj_umbrella}\\ (known corruption levels)\end{tabular} & $.078(8.84)$ & $.090(9.05)$ & $.105(9.70)$ & $.087(8.92)$ \\ \hline

% \begin{tabular}[c]{@{}l@{}}Algorithms $1^{\#}$\\ (unknown corruption levels)\end{tabular} & $.530(15.79)$ & $.758(13.51)$ & $.953(6.70)$ & $.957(6.42)$ \\ \hline
\end{tabular}
\end{table}
\end{simulation}

%\textcolor{purple}{please edit the next two paragraphs, given our conversation today.}

In Simulation \ref{sim:compare_gmm}, the benchmark algorithms fail to control the true type I error with the pre-specified high probability. 
%The only algorithm listed in Table \ref{tbl:compare_gmm_1} that has the desired type I error control is Algorithm \ref{alg:adj_umbrella}, i.e., label-noise-adjusted algorithm with known corruption levels. Algorithm $1^{\#}$, without under-estimates of the corruption levels, also fails the purpose for reasons discussed in the previous section.  
This is understandable, as none of the benchmark algorithms have $\alpha$ or $\delta$ as inputs. As such, these algorithms, unlike Algorithms \ref{alg:adj_umbrella} or $1^{\#}$, are not designed for the NP paradigm.

%In Simulation \ref{sim:compare_gmm}, the benchmark algorithms fail to control the true type I error with the pre-specified high probability. 
%This is understandable, as none of the benchmark algorithms have $\alpha$ or $\delta$ as inputs. As such, these algorithms are not designed to adapt to the NP paradigm.

%Further evidence is offered in the real data analysis.  

%\textcolor{red}{Shunan, there is another  example missing from the simulation. We haven't shown with underestimates $m_0^{\#}$ and $m_1^{\#}$, does Algorithm 2 still control the type I error?  This example fits the Section 5.1.2    }

\subsection{Real Data Analysis}\label{sec:realdata}

%\subsubsection{Email Spam Dataset.}\label{sec:email}

We analyze a canonical email spam dataset \citep{hopkins1999spambase}, which consists of $4{,}601$ observations including $57$ attributes describing characteristics of emails and a $0-1$ class label. Here, $1$ represents \textit{spam} email while $0$ represents \textit{non-spam}, and the type I/II error is defined accordingly. The labels in the dataset are all assumed to be correct.  

We create corrupted labels according to the class-conditional noise model.  Concretely, we flip the labels of true class $0$ observations with probability $r_0$ and flip the labels of true class $1$ observations with probability $r_1$. %Here, the subtlety is 
Note that $m_0$ and $m_1$ are $\p(Y = 0\mid\tilde{Y}=0)$ and $\p(Y=0\mid\tilde{Y}=1)$, respectively, while $r_0 = \p(\tilde{Y} = 1\mid Y=0)$ and $r_1 = \p(\tilde{Y} = 0\mid Y=1)$. 
%
\begin{comment}
Let $p_0 = \p(Y=0)$, then Bayes theorem implies that 

%That is, estimate $\p(Y=0)$, or $p_0$ in abbreviation, by the proportion of true class $0$ observations in the data and solve the equations

\begin{equation*}
m_0 = \frac{p_0(1-r_0)}{p_0(1-r_0)+(1-p_0)r_1}\,\text{ and }\, m_1 = \frac{p_0r_1}{p_0r_0+(1-p_0)(1 - r_1)}\,.
\end{equation*}
Solving the above equations for $r_0$ and $r_1$ yields
\begin{equation*}
r_0 = \frac{(m_0-p_0)m_1}{(m_0-m_1)p_0}\,\text{ and }\, r_1 = \frac{(1-m_0)(p_0-m_1)}{(m_0-m_1)(1-p_0)}\,.
\end{equation*}
The proportions $r_0$ and $r_1$ should be in $(0,1)$.  Because  $0 < m_1 < m_0 < 1$ by Assumption \ref{assumption:mixture}, it suffices to have  $m_0 > p_0$ and $p_0 > m_1$. But note that,
\begin{align*}
p_0 = m_0\p(\tilde{Y} = 0) + m_1\p(\tilde{Y} = 1) < m_0\left(\p(\tilde{Y} = 0) + \p(\tilde{Y} = 1)\right) = m_0 \,,
\end{align*}
where the inequality follows from Assumption \ref{assumption:mixture}. Similarly,
\begin{align*}
p_0 = m_0\p(\tilde{Y} = 0) + m_1\p(\tilde{Y} = 1) \geq m_1\left(\p(\tilde{Y} = 0) + \p(\tilde{Y} = 1)\right) = m_1\,.
\end{align*}
\end{comment}
In our analysis, we choose $m_0 = 0.95$ and $m_1 = 0.05$, which implies setting $r_0 = 0.032$ and $r_1 = 0.078$ \footnote{This is an application of the Bayes theorem with $\p(Y=0)$ estimated to be $0.610$, which is the proportion of class $0$ observations in the whole dataset.}. 
%To compute $r_0$ and $r_1$, we take $p_0=0.610$, which is the proportion of class $0$ observations in the whole dataset. Then it follows that  $r_0 = 0.060$ and $r_1 = 0.161$.  
For each training and evaluation procedure, we split the data by stratified sampling into training and evaluation sets. Specifically, $20\%$ of the true class $0$ observations and $20\%$ of the true class $1$ observations are randomly selected to form the training dataset, and the rest of the observations form the evaluation dataset. In total, the training set contains $921$ observations and the evaluation set contains $3{,}680$ observations. The larger evaluation set is reserved to better approximate (population-level) true type I/II error. We leave the evaluation data untouched, but randomly flip the training data label according to the calculated $r_0$ and $r_1$. Four base algorithms are coupled with the original and new NP umbrella algorithms, with $\alpha = \delta = 0.1$.  We repeat the procedure $1{,}000$ times. %and obtain an array of (approximate) true type I and type II errors.  

The (approximate) type I error violation rates and averages of (approximate) true type II errors generated by Algorithm \ref{alg:adj_umbrella} and the original NP umbrella algorithm are summarized in Table \ref{table:real_data}. %Furthermore, a complete summary of (approximate) true type I and II errors are presented in Appendix \ref{appendix:violin_for_section_5}. 
Similar to the simulation studies, we observe that Algorithm \ref{alg:adj_umbrella}  correctly controls type I error at the right level, while the original NP umbrella algorithm is significantly overly conservative on type I error, and consequently has much higher type II error.  We also summarize the results generated by Algorithm $1^{\#}$ in Tables \ref{tbl:email_under_1} and \ref{tbl:email_under_2}. Clearly, while strict under-estimates lead to higher type II errors than using exact corruption levels, the type I error control objective is achieved, and the type II error is better than just ignoring label corruption and applying the original NP umbrella algorithm.   

% We also implement Algorithm $1^{\#}$ with estimators \cite{liu2015classification} of corruption levels in Table \ref{table:real_data_est} in Appendix \ref{appendix:table}. There we observe that the type I error violation rates far exceed the targeted $\delta = 0.1$. These results are similar to what we observe in simulation studies.  

%\textcolor{red}{Now need to be careful about the wording, which label-noise-adjusted algorithm are we talking about here?}
%a proper high probability control on (approximate) true type I errors regardless of base algorithms used. Moreover, as expected,  the (approximate) true type II errors of the label-noise-adjusted umbrella algorithm are lower than those of the original NP umbrella algorithm.

\begin{table}[h]
\caption{(Approximate) type I error violation rates, and averages  of (approximate) true type II errors by Algorithm \ref{alg:adj_umbrella} and original NP umbrella algorithm over $1{,}000$ repetitions for the email spam data. Standard errors ($\times10^{-3}$) in parentheses.}
\vspace{5 pt}
\centering
\begin{tabular}{|l|l|l|l|l|}
\hline
\multirow{2}{*}{} & \multicolumn{2}{l|}{\begin{tabular}[c]{@{}l@{}}(approximate) \\ violation rate\end{tabular}} & \multicolumn{2}{l|}{\begin{tabular}[c]{@{}l@{}}average of \\ (approximate) true \\ type II errors\end{tabular}} \\ \cline{2-5} 
 & adjusted & original & adjusted & original \\ \hline
penalized logistic regression & $.082(8.68)$ & $0(0)$ & $.205(2.65)$ & $.272(2.71)$ \\ \hline
linear discriminant analysis & $.096(9.32)$ & $0(0)$ & $.226(3.05)$ & $.314(2.77)$ \\ \hline
support vector machine & $.093(9.19)$ & $.004(2.00)$ & $.183(3.15)$ & $.218(1.93)$ \\ \hline
random forests & $.080(8.58)$ & $0(0)$ & $.120(1.13)$ & $.152(1.54)$ \\ \hline
\end{tabular}
\label{table:real_data}
\end{table}

\begin{table}[h]
\caption{(Approximate) type I error violation rates by Algorithm $1^{\#}$ over $1{,}000$ repetitions for the email spam data. Standard errors ($\times10^{-3}$) in parentheses.}
\vspace{5 pt}
\centering
\begin{tabular}{|l|l|l|l|l|}
\hline
 & \begin{tabular}[c]{@{}l@{}}$m^\#_0 = 0.93,$\\ $m^\#_1 = 0.07$\end{tabular} &
 \begin{tabular}[c]{@{}l@{}}$m^\#_0 = 0.95,$\\ $m^\#_1 = 0.05$\end{tabular} &
 \begin{tabular}[c]{@{}l@{}}$m^\#_0 = 0.97,$\\ $m^\#_1 = 0.03$\end{tabular} &  \begin{tabular}[c]{@{}l@{}}original\end{tabular} \\ \hline
penalized logistic regression & $.231(13.33)$ & $.082(8.68)$ & $.028(5.22)$ & $0(0)$ \\ \hline
linear discriminant analysis & $.223(13.17)$ & $.096(9.32)$ & $.023(4.74)$ & $0(0)$ \\ \hline
support vector machine & $.220(13.11)$ & $.093(9.19)$ & $.026(5.03)$ & $.004(2.00)$ \\ \hline
random forest & $.238(13.47)$ & $.080(8.58)$ & $.019(4.32)$ & $0(0)$ \\ \hline
\end{tabular}
\label{tbl:email_under_1}
\end{table}

\begin{table}[h]
\caption{Averages of (approximate) true type II errors by Algorithm $1^{\#}$ over $1{,}000$ repetitions for the email spam data. Standard errors ($\times10^{-3}$) in parentheses.}
\vspace{5 pt}
\centering
\begin{tabular}{|l|l|l|l|l|}
\hline
 & \begin{tabular}[c]{@{}l@{}}$m^\#_0 = 0.93,$\\ $m^\#_1 = 0.07$\end{tabular} & \begin{tabular}[c]{@{}l@{}}$m^\#_0 = 0.95,$\\ $m^\#_1 = 0.05$\end{tabular} & \begin{tabular}[c]{@{}l@{}}$m^\#_0 = 0.97,$\\ $m^\#_1 = 0.03$\end{tabular} & \begin{tabular}[c]{@{}l@{}}original\end{tabular} \\ \hline
penalized logistic regression & $.165(2.04)$ & $.205(2.65)$ & $.254(3.10)$ & $.272(2.71)$ \\ \hline
linear discriminant analysis & $.213(2.54)$ & $.226(3.05)$ & $.314(3.37)$ & $.314(2.77)$ \\ \hline
support vector machine & $.138(1.20)$ & $.183(3.15)$ & $.199(2.11)$ & $.218(1.93)$ \\ \hline
random forest & $.102(.78)$ & $.120(1.13)$ & $.143(1.41)$ & $.152(1.54)$ \\ \hline
\end{tabular}
\label{tbl:email_under_2}
\end{table}

To make a comparison, we also apply the loss correction algorithm in \cite{patrini2017making} and the T-revision method in \cite{xia2019anchor} to the email spam data, with results summarized in Table \ref{tbl:real_data_benchmark} in Appendix \ref{appendix:table}. Since these benchmark algorithms are not designed for the NP paradigm, as discussed in Section \ref{sec:simulation}, none of the (approximate) true type I error violation rates are controlled as we desire. In addition to the email spam data, we also apply Algorithm \ref{alg:adj_umbrella} to the CIFAR10 dataset \citep{krizhevsky2009learning}  and successfully have the type I error controlled (Appendix \ref{appendix:cifar}).
%\textcolor{purple}{Shunan, please double check the table 5.2.1. Also, make Tables 9, 10... consistent with earlier tables.}

%\textcolor{purple}{Try put Table 10 and Table 11 in the Appendix in the proper section. Probably a subsection in Appendix D.}

%\textcolor{purple}{Try put Cifar dataset into the appendix. Change the wordings as necessary after you move this subsection.   Right after the end of the email spam analysis, mention that we have another dataset in the Appendix. }

%\textcolor{red}{Shunan, still need to do the CIFAR 10 data. Also, need to add the competitor's methods for the email spam data. For these datasets, applying algorithms 1 from our side probably suffices. BUT of course, you still need to compare with others' methods.    }

%\textcolor{purple}{Shunan, add how data were divided...... a similar description as you wrote for the email spam data. And explanation of why we only use neural network algorithm 1}

\section{Discussion}

Under the NP paradigm, we developed the first label-noise-adjusted umbrella algorithms.  There are several interesting directions for future research. First, we can consider a more complex noise model in which the corruption levels depend on both the class and features.  Another direction is to consider data-driven ``under-estimates" of the corruption levels in the class-conditional noise model and develop (distributional) model-specific adjustment algorithms. For instance, we can adopt the linear discriminant analysis model, i.e., $X^0\sim \mathcal{N}(\mu_0, \Sigma)$ and $X^1\sim \mathcal{N}(\mu_1, \Sigma)$. %which are mutually irreducible to make the class-conditional noise model identifiable.   

%Besides possible extension to the algorithm, there might be other promising future works. Throughout our research on this topic, we tried a `subsampling' version of the algorithm. The idea is to randomly take a subsample of $\mathcal{S}_{\text{t}}$ multiple times so there would be a copy subsample free of corruption, i.e., comprised only of true class $0$ observations with high probability. Then, if one can identify this copy and use it as the candidate sets for threshold, it is equivalent to dealing with a noise-free setting. However, the difficulty remains in how such a subsample would be identified. Given the mixture model as in Assumption \ref{assumption:mixture}, if the two true underlying distributions differ a lot, then a corruption-free distribution would be easy to identify since outliers will greatly disturb some statistics such as sample mean or variance. However, this is not the case when the two underlying distributions do not differ much. If one could find a good indicator, this subsampling approach could still be promising.

%\textcolor{red}{We can probably talk about parametric assumptions... and other generalization..   }

\section{Acknowledgements}
We would like to Dr. Xiao Han from the University of Science and Technology of China and Dr. Min Zhou from United International College for inspirational discussions. This work was partially supported by U.S. NSF grant DMS 2113500. 

\bibliographystyle{ECA_jasa}
\bibliography{npcorruption}

\newpage
\setcounter{page}{1}
\appendix
\appendixpage

\section{Summary of sampling scheme}\label{sec:sampling_scheme_summary}

This section summarizes our sampling scheme and related notations for the readers' convenience. First, to review the NP paradigm and to make a contrast with the corrupted setting, we introduced the notation for uncorrupted samples: let $\mathcal{S}^0 = \{X_j^0\}_{j=1}^{M_0}$ and $\mathcal{S}^1 = \{X_j^1\}_{j=1}^{M_1}$, respectively be the \textit{uncorrupted} observations in classes 0 and 1, where $M_0$ and $M_1$ are the number of observations from each class. To construct the original NP umbrella algorithm (for uncorrupted data), $\mathcal{S}^0$ is randomly split into $\mathcal{S}^0 = \mathcal{S}^0_{\text{b}} \cup \mathcal{S}^0_{\text{t}}$, where the subscript b reinforces that this part is to train a \textit{base algorithm} (e.g., logistic regression, random forest), and the subscript t reinforces that this part of the data is to find the \textit{threshold}. For the uncorrupted scenario, we do not split $\mathcal{S}^1$.  All $\mathcal{S}^1$ are used together with $\mathcal{S}^0_{\text{b}}$ to train a base algorithm.  

For the corrupted scenario, which is the focus of our paper, we assume the following sampling scheme for methodology and theory development. Let $\tilde{\mathcal{S}}^0 = \{\tilde{X}_j^0\}_{j=1}^{N_0}$ be \textit{corrupted} class $0$ observations and $\tilde{\mathcal{S}}^1 = \{\tilde{X}^1_j\}_{j=1}^{N_1}$ be \textit{corrupted} class $1$ observations. The sample sizes $N_0$ and $N_1$ are considered to be non-random numbers.  The split for the corrupted scenario is more complicated than the uncorrupted counterpart. Concretely, we split $\tilde{\mathcal{S}}^0$ into three parts: $\tilde{\mathcal{S}}^0=\tilde{\mathcal{S}}^0_{\text{b}}\cup \tilde{\mathcal{S}}^0_{\text{t}}\cup \tilde{\mathcal{S}}^0_{\text{e}}$, and split $\tilde{\mathcal{S}}^1$ into two parts $\tilde{\mathcal{S}}^1 = \tilde{\mathcal{S}}^1_{\text{b}}\cup \tilde{\mathcal{S}}^1_{\text{e}}$. The subscripts b and t have the same meaning as the uncorrupted case while the subscript e stands for \textit{estimation}, and $\tilde{\mathcal{S}}^0_{\text{e}}$ and $\tilde{\mathcal{S}}^1_{\text{e}}$ are used to estimate a correction term to account for the label noise. 

Given the above decomposition of $\tilde{\mathcal{S}}^0$ and $\tilde{\mathcal{S}}^1$, we also used $\tilde{S}_{\text{b}} = \tilde{\mathcal{S}}^0_{\text{b}} \cup \tilde{\mathcal{S}}^1_{\text{b}}$ to denote all corrupted class 0 and class 1 observations that are used to train the base algorithm in the label-noise-adjusted NP umbrella algorithm.  The sample size $n$ is reserved for $|\mathcal{S}^0_{\text{t}}|$ in the uncorrupted scenario, or for $|\tilde{\mathcal{S}}^0_{\text{t}}|$ in the corrupted scenario. The other sub-sample size notations are all for the corrupted scenario.  In particular, $n_{\text{b}} = |\tilde{S}_{\text{b}}| = |\tilde{\mathcal{S}}^0_{\text{b}} \cup \tilde{\mathcal{S}}^1_{\text{b}}|$, $n^0_{\text{e}} = |\tilde{S}^0_{\text{e}}|$, and $n^1_{\text{e}} = |\tilde{S}^1_{\text{e}}|$.

\section{BINARY SEARCH Algorithm}\label{sec:binary_search}

\begin{algorithm}[htb!]
\caption{\label{alg:bi_search}
Binary Search For $\alpha_{k,\delta}$}
\SetKw{KwBy}{by}
\SetKwInOut{Input}{Input}\SetKwInOut{Output}{Output}
\SetAlgoLined

\Input{
$\delta$: a small tolerance level, $0 < \delta < 1$ \\
$k, n$: two integers such that $k \leq n$ \\
$r$: a small number for error (we implement $r= 10^{-5}$ in our numerical analysis) \\
}

$\alpha_{\text{min}} \leftarrow 0$

$\alpha_{\text{max}} \leftarrow 1$

$\delta_{\text{max}} \leftarrow \sum_{j = k}^{n}{n \choose j}(1-\alpha_{\text{min}})^j\alpha_{\text{min}}^{n-j}$

$\delta_{\text{min}} \leftarrow \sum_{j = k}^{n}{n \choose j}(1-\alpha_{\text{max}})^j\alpha_{\text{max}}^{n-j}$

$E \leftarrow 2$

\While{$E > r$}{
$\alpha_{\text{middle}} \leftarrow (\alpha_{\text{min}} + \alpha_{\text{max}})/2$

$\delta_{\text{middle}} \leftarrow \sum_{j = k}^{n}{n \choose j}(1-\alpha_{\text{middle}})^j\alpha_{\text{middle}}^{n-j}$

\uIf{$\delta_{\emph{\text{middle}}} = \delta$}{
\Output{$\alpha_{\text{middle}}$}
}
\uElseIf{$\delta_{\emph{\text{middle}}} > \delta$}{
$\alpha_{\text{middle}} \leftarrow \alpha_{\text{min}}$
}
\Else{
$\alpha_{\text{middle}} \leftarrow \alpha_{\text{max}}$
}
$E \leftarrow |\delta_{\text{middle}} - \delta|$
}
\Output{$\alpha_{\text{middle}}$}

\end{algorithm}

Here $r$ is an error for stopping criterion of this binary search. That is, the algorithm stops when $\left| \sum_{j = k}^{n}{n \choose j}(1-\alpha_{\text{middle}})^j\alpha_{\text{middle}}^{n-j} - \delta \right| \leq r$.

\section{An example for assumption 3}\label{sec:example_3}

\begin{example}\label{ex:separability}
Under the same distributional setting as in Example \ref{ex: gmm}, let $\hat{T}$ be trained by linear discriminant analysis (LDA) on $\tilde{\mathcal{S}}_{\emph{\text{b}}}$; that is
$
    \hat{T}(X) = \hat{\tilde{\sigma}}^{-2}(\hat{\tilde{\mu}}_1 - \hat{\tilde{\mu}}_0)X\,,
$
in which  $\hat{\tilde{\mu}}_0$ and $\hat{\tilde{\mu}}_1$ are the sample means of corrupted class $0$ and $1$ observations, respectively, and $\hat{\tilde{\sigma}}^2$ is the pooled sample variance. For any $z\in \R $, by Lemma \ref{lemma:corollary_for_assumption_1} in the Appendix, we have 
\begin{align*}
    \tilde{F}^{\hat{T}}_0(z) - \tilde{F}^{\hat{T}}_1(z) = (m_0-m_1)\left(F^{\hat{T}}_0(z)-F^{\hat{T}}_1(z)\right)\,.
\end{align*}
Therefore, when $m_0 > m_1$ (as assumed in Assumption \ref{assumption:mixture}),  $\tilde{F}^{\hat{T}}_0(z) > \tilde{F}^{\hat{T}}_1(z)$ is equivalent to $F^{\hat{T}}_0(z)>F^{\hat{T}}_1(z)$. We first fix $\tilde{\mathcal{S}}_{\emph{b}}$, then $\hat{T}(X^0) \sim \mathcal{N}(\hat{\tilde{\sigma}}^{-2}(\hat{\tilde{\mu}}_1 - \hat{\tilde{\mu}}_0)\mu_0,\hat{\tilde{\sigma}}^{-4}(\hat{\tilde{\mu}}_1 - \hat{\tilde{\mu}}_0)^2\sigma^2)$ and $\hat{T}(X^1) \sim \mathcal{N}(\hat{\tilde{\sigma}}^{-2}(\hat{\tilde{\mu}}_1 - \hat{\tilde{\mu}}_1)\mu_0,\hat{\tilde{\sigma}}^{-4}(\hat{\tilde{\mu}}_1 - \hat{\tilde{\mu}}_0)^2\sigma^2)$. Since these two distributions are two normal with the same variance and different means, $F^{\hat{T}}_0(z)>F^{\hat{T}}_1(z)$ as long as $\hat{\tilde{\sigma}}^{-2}(\hat{\tilde{\mu}}_1 - \hat{\tilde{\mu}}_0)\mu_0 < \hat{\tilde{\sigma}}^{-2}(\hat{\tilde{\mu}}_1 - \hat{\tilde{\mu}}_0)\mu_1$, or equivalently, $(\hat{\tilde{\mu}}_1 - \hat{\tilde{\mu}}_0)(\mu_1 - \mu_0) > 0$. By Lemma \ref{lemma:corollary_for_assumption_1} in the Appendix, this condition can be written as $(\hat{\tilde{\mu}}_1 - \hat{\tilde{\mu}}_0)(\tilde{\mu}_1 - \tilde{\mu}_0)/(m_0-m_1) > 0$, where $\tilde{\mu}_0$ and $\tilde{\mu}_1$ are the means of $\tilde{X}^0$ and $\tilde{X}^1$ respectively. When $m_0 > m_1$, this is further equivalent to $(\hat{\tilde{\mu}}_1 - \hat{\tilde{\mu}}_0)(\tilde{\mu}_1 - \tilde{\mu}_0) > 0$. Then Assumption \ref{assumption:separability} follows from the law of large numbers.
\end{example}

\section{Additional Numerical Results}\label{appendix:num}

\subsection{Additional Simulations}\label{appendix:sim}

We apply Algorithm $1^{\#}$ in Simulation \ref{sim:uniform_unknown_flip_rate}. For $m_0^{\#}$ and $m_1^{\#}$ needed in Algorithm $1^{\#}$, we use the estimators proposed by \cite{liu2015classification}. Technically, \cite{liu2015classification} estimates the ``flip rates'' $\p\left(\tilde{Y} = 1\mid Y = 0\right)$ and $\p\left(\tilde{Y} = 0 \mid Y = 1\right)$. %Our $m_0$ and $m_1$, on the other hand, are $\p\left(Y = 0 \mid \tilde{Y} = 0\right)$ and $\p\left(Y = 0 \mid \tilde{Y} = 1\right)$. 
Our corruption levels can be derived from flip rates by the Bayes theorem.% with $\p\left(\tilde{Y} = 0\right)$ and $\p\left(\tilde{Y} = 1\right)$ being estimated by the proportion of both corrupted classes.

%\textcolor{purple}{Shunan, the simulation 5 should be updated as we previously agreed?}

\begin{simulation}\label{sim:uniform_unknown_flip_rate}

The distributional setting is the same as in Simulation \ref{sim:two_circle}. For different $(m_0, m_1, \alpha, \delta)$ combinations, the (approximate) type I error violation rates and averages of (approximate) true type II errors generated by Algorithm $1^{\#}$ are reported in Tables \ref{tbl:two_uni_violation_est} and \ref{tbl:two_uni_II_est}, respectively. %\textcolor{red}{Shunan, the tables 5 and 6 should be similar to Tables 3 and 4 in presentation.  In particular, the ``Standard error ($\times 10^3$) in parentheses" legend will help you get rid of the hard-to-read E-2, E-3, E-4....  Please also change the later tables accordingly.}

\begin{table}[h]
\centering
\caption{(Approximate) type I error violation rates over $1{,}000$ repetitions for Simulation \ref{sim:uniform_unknown_flip_rate}. Standard errors ($\times 10^{-3}$) in parentheses.}\label{tbl:two_uni_violation_est}
\begin{tabular}{|l|l|l|l|l|}
\hline
$N$ & \begin{tabular}[c]{@{}l@{}}$m_0 = .95, m_1 = .05$\\ $\alpha = .05, \delta = .05$\end{tabular} & \begin{tabular}[c]{@{}l@{}}$m_0 = .9, m_1 = .1$\\ $\alpha = .05, \delta = .05$\end{tabular} & \begin{tabular}[c]{@{}l@{}}$m_0 = .95, m_1 = .05$\\ $\alpha = .1, \delta = .1$\end{tabular} & \begin{tabular}[c]{@{}l@{}}$m_0 = .9, m_1 = .1$\\ $\alpha = .1, \delta = .1$\end{tabular} \\ \hline
$200$ & $.067(7.91)$ & $.068(7.96)$  & $.131(10.67)$  &  $.101(9.53)$\\ \hline
$500$ & $.084(8.78)$ &$.083(8.73)$  & $.134(10.78)$ &  $.115(10.09)$\\ \hline
$1{,}000$ & $.463(15.78)$ &$.182(12.21)$  & $.497(15.82)$ & $.197(12.58)$ \\ \hline
$2{,}000$ & $.665(14.93)$ &$.190(12.41)$  & $.695(14.57)$ & $.209(12.86)$ \\ \hline
\end{tabular}
\end{table}

\begin{table}[h]
\centering
\caption{Averages of (approximate) true type II errors over $1{,}000$ repetitions for Simulation \ref{sim:uniform_unknown_flip_rate}. Standard errors ($\times 10^{-3}$) in parentheses.}\label{tbl:two_uni_II_est}
\begin{tabular}{|l|l|l|l|l|}
\hline
$N$ & \begin{tabular}[c]{@{}l@{}}$m_0 = .95, m_1 = .05$\\ $\alpha = .05, \delta = .05$\end{tabular} & \begin{tabular}[c]{@{}l@{}}$m_0 = .9, m_1 = .1$\\ $\alpha = .05, \delta = .05$\end{tabular} & \begin{tabular}[c]{@{}l@{}}$m_0 = .95, m_1 = .05$\\ $\alpha = .1, \delta = .1$\end{tabular} & \begin{tabular}[c]{@{}l@{}}$m_0 = .9, m_1 = .1$\\ $\alpha = .1, \delta = .1$\end{tabular} \\ \hline
$200$ & $.431(9.36)$ &$.589(9.79)$  & $.150(2.44)$ & $.221(4.86)$ \\ \hline
$500$ & $.219(3.60)$ &$.391(7.25)$  & $.115(.95)$ & $.145(1.49)$  \\ \hline
$1{,}000$ & $.140(.99)$ &$.190(2.53)$  & $.082(.72)$  & $.107(1.01)$ \\ \hline
$2{,}000$ & $.128(.82)$ &$.175(1.75)$  & $.073(.65)$ & $.101(.88)$ \\ \hline
\end{tabular}
\end{table}
\end{simulation}

%\textcolor{purple}{Either change one of the parameter combination in Simu 4 to the setting of Simu 2, or change simulation 2 to one of the settings in simulation 4. Actually 1-3 all need some changes. When you change, the parameter choice for simulation 2 and that of simulation 3 should be the same: use that of Simulations 6 and 7. Use a new tex file after the change. }

%\textcolor{red}{Shunan, immediately after Simulation 4, you should explain the results, like what we wrote after Simulations 1-3.  Also, the next paragraph starts with Simulations 6 and 7 the Simulation 5 is not revealed}

In this simulation, Algorithm $1^{\#}$  fails to control the type I error with pre-specified high probability. Similar results on additional distributional settings can be found in Simulation  \ref{sim:normal_unknown_flip_rate} of Appendix \ref{appendix:sim}.  One might wonder: if we were to use other estimators of $m_0$ and $m_1$, will the result be different? The answer is that the usually ``good" estimators do not serve for the purpose of high probability control on type I error. For example,  Simulation \ref{sim:normal_estimator} in Appendix \ref{appendix:sim} uses consistent and unbiased estimators of $m_0$ and $m_1$, but Algorithm $1^{\#}$ still fails to control the type I error.

%\textcolor{red}{Shunan, please specify which is the algorithm we use in each example.  As I commented earlier, there are Algorithm 1 and Algorithm 2.}

\begin{simulation}\label{sim:normal_unknown_flip_rate}
The distributional setting is the same as in Simulation \ref{sim:gmm}. For different $(m_0, m_1, \alpha, \delta)$ combinations, the (approximate) true type I errors generated by Algorithm $1^{\#}$ are reported in Table \ref{tbl:two_normal_violation_est}.

\begin{table}[h]
\centering
\caption{(Approximate) type I error violation rates over $1{,}000$ repetitions for Simulation \ref{sim:uniform_unknown_flip_rate}. Standard errors ($\times 10^{-3}$) in parentheses.}\label{tbl:two_normal_violation_est}
\begin{tabular}{|l|l|l|l|l|}
\hline
$N$ & \begin{tabular}[c]{@{}l@{}}$m_0 = .95, m_1 = .05$\\ $\alpha = .05, \delta = .05$\end{tabular} & \begin{tabular}[c]{@{}l@{}}$m_0 = .9, m_1 = .1$\\ $\alpha = .05, \delta = .05$\end{tabular} & \begin{tabular}[c]{@{}l@{}}$m_0 = .95, m_1 = .05$\\ $\alpha = .1, \delta = .1$\end{tabular} & \begin{tabular}[c]{@{}l@{}}$m_0 = .9, m_1 = .1$\\ $\alpha = .1, \delta = .1$\end{tabular} \\ \hline
$200$ & $.430(15.66)$ & $.512(15.81)$  & $.530(15.79)$  &  $.504(15.82)$\\ \hline
$500$ & $.694(14.58)$ &$.488(15.81)$  & $.758(13.55)$ &  $.570(15.66)$\\ \hline
$1{,}000$ & $.940(7.51)$ &$.788(13.47)$  & $.953(6.70)$ & $.805(12.54)$ \\ \hline
$2{,}000$ & $.950(6.90)$ &$.792(12.80)$  & $.957(6.42)$ & $.818(12.21)$ \\ \hline
\end{tabular}
\end{table}

\end{simulation}

%\begin{simulation}\label{sim:t_unknown_flip_rate}
%The distributional setting is the same as in Simulation \ref{sim:two_t}. For different $(m_0, m_1, \alpha, \delta)$ combinations, the (approximate) true type I errors generated by Algorithm $1^{\#}$ are reported in Table \ref{tbl:two_t_violation_est}.
%
%
%\begin{table}[h]
%\centering
%\caption{(Approximate) type I error violation rates over $1{,}000$ repetitions for Simulation \ref{sim:uniform_unknown_flip_rate}. Standard errors ($\times 10^{-3}$) in parentheses.}\label{tbl:two_t_violation_est}
%\begin{tabular}{|l|l|l|l|l|}
%\hline
%$N$ & \begin{tabular}[c]{@{}l@{}}$m_0 = .95, m_1 = .05$\\ $\alpha = .05, \delta = .05$\end{tabular} & \begin{tabular}[c]{@{}l@{}}$m_0 = .9, m_1 = .1$\\ $\alpha = .05, \delta = .05$\end{tabular} & \begin{tabular}[c]{@{}l@{}}$m_0 = .95, m_1 = .05$\\ $\alpha = .1, \delta = .1$\end{tabular} & \begin{tabular}[c]{@{}l@{}}$m_0 = .9, m_1 = .1$\\ $\alpha = .1, \delta = .1$\end{tabular} \\ \hline
%$200$ & $.998(1.41)$ & $1(0)$  & $1(0)$  &  $1(0)$\\ \hline
%$500$ & $1(0)$ &$1(0)$  & $1(0)$ &  $1(0)$\\ \hline
%$1{,}000$ & $1(0)$ &$1(0)$  & $1(0)$ & $1(0)$ \\ \hline
%$2{,}000$ & $1(0)$ &$1(0)$  & $1(0)$ & $1(0)$ \\ \hline
%\end{tabular}
%\end{table}
%\end{simulation}

%\textcolor{purple}{In simulation 8 and 9, change $m_0$ and $m_1$ parameters to agree with the other settings.  Also, for 8-11, change the narrative to the standard language, like in the previous simulations.}

\begin{simulation}\label{sim:normal_estimator}

The distributional setting is the same as in Simulation \ref{sim:gmm}. The $m_0^\#$ and $m_1^\#$ are generated from $\mathcal{N}(m_0, 1/N)$ and $\mathcal{N}(m_1, 1/N)$, respectively. The (approximate) type I error violation rates generated by Algorithm $1^{\#}$ for one combination ($m_0 = .95$, $m_1 = .05$, $\alpha = .1$ and $\delta = .1$) are reported in Table \ref{tbl:normal_estimator}.

\begin{table}[t]
\centering
\caption{(Approximate) type I error violation rates over $1{,}000$ repetitions for  Simulation \ref{sim:normal_estimator}. Standard errors ($\times 10^{-2}$) in parentheses.}\label{tbl:normal_estimator}
\begin{tabular}{|l|l|}
\hline
$N$ & (approximate) violation rate \\ \hline
$200$ & $.193(1.25)$ \\ \hline
$500$ & $.208(1.28)$ \\ \hline
$1{,}000$ & $.186(1.23)$ \\ \hline
$2{,}000$ & $.203(1.27)$ \\ \hline
\end{tabular}
\end{table}
\end{simulation}

\subsection{CIFAR10 data analysis}\label{appendix:cifar}

%In this section, we apply Algorithms \ref{alg:adj_umbrella} to the CIFAR10 dataset \citep{krizhevsky2009learning}. As we focus on binary classification problems, we merge the ten categories of the CIFAR10 dataset into two: ``vehicles'' and ``non-vehicles.'' The class ``vehicles,'' encoded as $0$, contains the original ``automobile'' and ``truck'' classes, and the class ``non-vehicles,'' encoded as $1$, contains the other eight original classes. Then type I/II errors are defined accordingly. We employ the NP paradigm in this modified dataset to prioritize the control over the chances of the failure to detect vehicles. 

In this section we apply Algorithm \ref{alg:adj_umbrella} to the CIFAR10 dataset \citep{krizhevsky2009learning}. As we focus on binary classification problems, we merge the ten categories of the CIFAR10 dataset into two: ``vehicles'' and ``non-vehicles.'' The class ``vehicles,'' encoded as $0$, contains the original ``automobile'' and ``truck'' classes, and the class ``non-vehicles,'' encoded as $1$, contains the other eight original classes. Then type I/II errors are defined accordingly. We employ the NP paradigm to this modified dataset to prioritize control over the chance of failing to detect vehicles.

%The original CIFAR10 dataset has pre-specified training and test sets, but the number of class $0$ observations in the test set is too small ($2{,}000$ in total) to produce a reliable approximation to population-level type I error. Furthermore, given that the train-test procedure has to be repeated multiple times to approximate type I error violation rate, a fixed train-test split throughout all repetitions does not serve this purpose. As such, we perform stratified splits to the whole modified CIFAR10 dataset (with the newly assigned labels). In particular, $20\%$ true class $0$ observations and $20\%$ true class $1$ observations are randomly selected to form the new training set and the rest observations form the evaluation set. Then, the training and evaluation sets contain $12{,}000$ and $48{,}000$ observations, respectively. Moreover, the labels of all training observations are artificially corrupted by the same method in Section \ref{sec:realdata} with $m_0 = 0.95$ and $m_1 = 0.05$. By the Bayes theorem, the flip rates $r_0 = \p\left(\tilde{Y} = 1 \mid Y = 0\right)$ and $r_1 = \p\left(\tilde{Y} = 0 \mid Y = 1\right)$ are $0.2083$ and $0.0104$, respectively. We apply Algorithm \ref{alg:adj_umbrella} (with the parameter choice $\alpha = \delta = 0.1$ and CNN as the base algorithm) to the training set with corrupted labels and obtain a classifier. Then, the classifier is applied to the untouched evaluation set to calculate the (approximate) true type I and II errors. This procedure is repeated $1{,}000$ times.

The original CIFAR10 dataset has pre-specified training and test sets, but the number of class $0$ observations in the test set is too small ($2{,}000$ in total) to produce a reliable approximation to population-level type I error. Furthermore, given that the train-test procedure has to be repeated multiple times to approximate the type I error violation rate, a fixed train-test split throughout all repetitions does not serve our purpose. As such, we perform stratified splits to the whole modified CIFAR10 dataset (with the newly assigned labels). In particular, $20\%$ true class $0$ observations and $20\%$ true class $1$ observations are randomly selected to form the new training set and the remaining observations form the evaluation set. The training and evaluation sets contain $12{,}000$ and $48{,}000$ observations, respectively. Moreover, the labels of all training observations are artificially corrupted by the same method as in Section \ref{sec:realdata} with $m_0 = 0.95$ and $m_1 = 0.05$. By Bayes theorem, the flip rates $r_0 = \p\left(\tilde{Y} = 1 \mid Y = 0\right)$ and $r_1 = \p\left(\tilde{Y} = 0 \mid Y = 1\right)$ are $0.2083$ and $0.0104$, respectively. We apply Algorithm \ref{alg:adj_umbrella} (with the parameter choice $\alpha = \delta = 0.1$ and CNN as the base algorithm) to the training set with corrupted labels and obtain a classifier. Then, the classifier is applied to the untouched evaluation set to calculate the (approximate) true type I and II errors. This procedure is repeated $1{,}000$ times.

%In the main text, we have argued that unless Algorithm $1^{\#}$  uses under-estimators for corruption levels,  Algorithm \ref{alg:adj_umbrella} is the only algorithm that fulfills the goal of high-probability control over type I error control among all experimented methods in Section \ref{sec:sim_and_real_data}. Hence, we only apply Algorithm \ref{alg:adj_umbrella} to the modified CIFAR10 dataset since our primary interest of this work is the type I error control. For the base algorithm in Algorithm \ref{alg:adj_umbrella}, we use the CNN model for CIFAR10 data in \cite{chollet2017kerasR}, with modifications for the binary labels we create and the data splitting scheme we perform. The (approximate) type I error violation rate and average of (approximate) true type II errors are presented in Table \ref{tbl:cifar_adj}. Clearly, Algorithm \ref{alg:adj_umbrella} is able to achieve the high probability control of the true type I error under the specified level. 
%

In the main text, we have shown that Algorithm $1^{\#}$ with under-estimates of corruption levels fulfills the goal of high-probability control over the type I error, while other benchmark algorithms do not. To avoid delivering redundant messages, we only apply Algorithm \ref{alg:adj_umbrella} to the modified CIFAR10 dataset since our primary interest is the type I error control.  The (approximate) type I error violation rate and average of (approximate) true type II errors are presented in Table \ref{tbl:cifar_adj}. Clearly, Algorithm \ref{alg:adj_umbrella} is able to achieve high probability control of the true type I error under the specified level.

\begin{table}[h]
\caption{(Approximate) type I error violation rate, and average  of (approximate) true type II errors by Algorithm \ref{alg:adj_umbrella} over $1{,}000$ repetitions for the modified CIFAR10 dataset. Standard errors ($\times10^{-3}$) in parentheses.}
\centering
\begin{tabular}{|l|l|l|}
\hline
 & \begin{tabular}[c]{@{}l@{}}(approximate)\\ violation rate\end{tabular} & \begin{tabular}[c]{@{}l@{}}average of\\ (approximate) true\\ type II errors\end{tabular} \\ \hline
\begin{tabular}[c]{@{}l@{}}Algorithm \ref{alg:adj_umbrella} with \\ CNN as base algorithm\end{tabular} & $.099(9.45)$ & $.150(.87)$ \\ \hline
\end{tabular}\label{tbl:cifar_adj}
\end{table}
%\textcolor{purple}{Shunan, 1, continue to run a 1000 repetition version; 2, add CNN in the first column of table 12; 3, add that this is Algorithm 1, like in some previous tables.}

\subsection{Violin plots for Section \ref{sec:sim_and_real_data}}\label{appendix:violin_for_section_5}

%\textcolor{purple}{Shunan, here we should specify that the violin plots are only for Simulations 1-4. }

In this section, we present the violin plots (Figures \ref{fig:gmm0} - \ref{fig:two_t1}) for Simulations \ref{sim:gmm}-\ref{sim:two_t} in Section \ref{sec:sim_and_real_data}. The violin plots for the (approximate) true type I and type II errors over these $1{,}000$ repetitions are plotted for each $(m_0, m_1, \alpha, \delta)$ combination. Take Figures \ref{fig:gmm0} and \ref{fig:gmm1} as an example, the two rows in each figure respectively correspond to the $m_1=0.95, m_1=0.05$ and $m_0=0.85, m_1=0.15$ settings, while the two columns respectively correspond to $\alpha=0.05, \delta=0.05$ and $\alpha=0.10, \delta=0.10$. 
  The area of every plot with lighter color represents true type I errors above the $1-\delta$ quantile while the area with darker color represents true type I errors below the $1-\delta$ quantile. The black dots represent the average of true type I/II errors and the bars above and below the dots represent standard deviations. 
  
  %\textcolor{red}{Note that the simulation 5 is changed, so this paragraph needs to be edited.}

%\textcolor{red}{Shunan, here it says Figures 8 and 9, but it looks more like Figures 9 and 10.}

\begin{figure}
\begin{center}
    \includegraphics[width = \textwidth, height = 8cm]{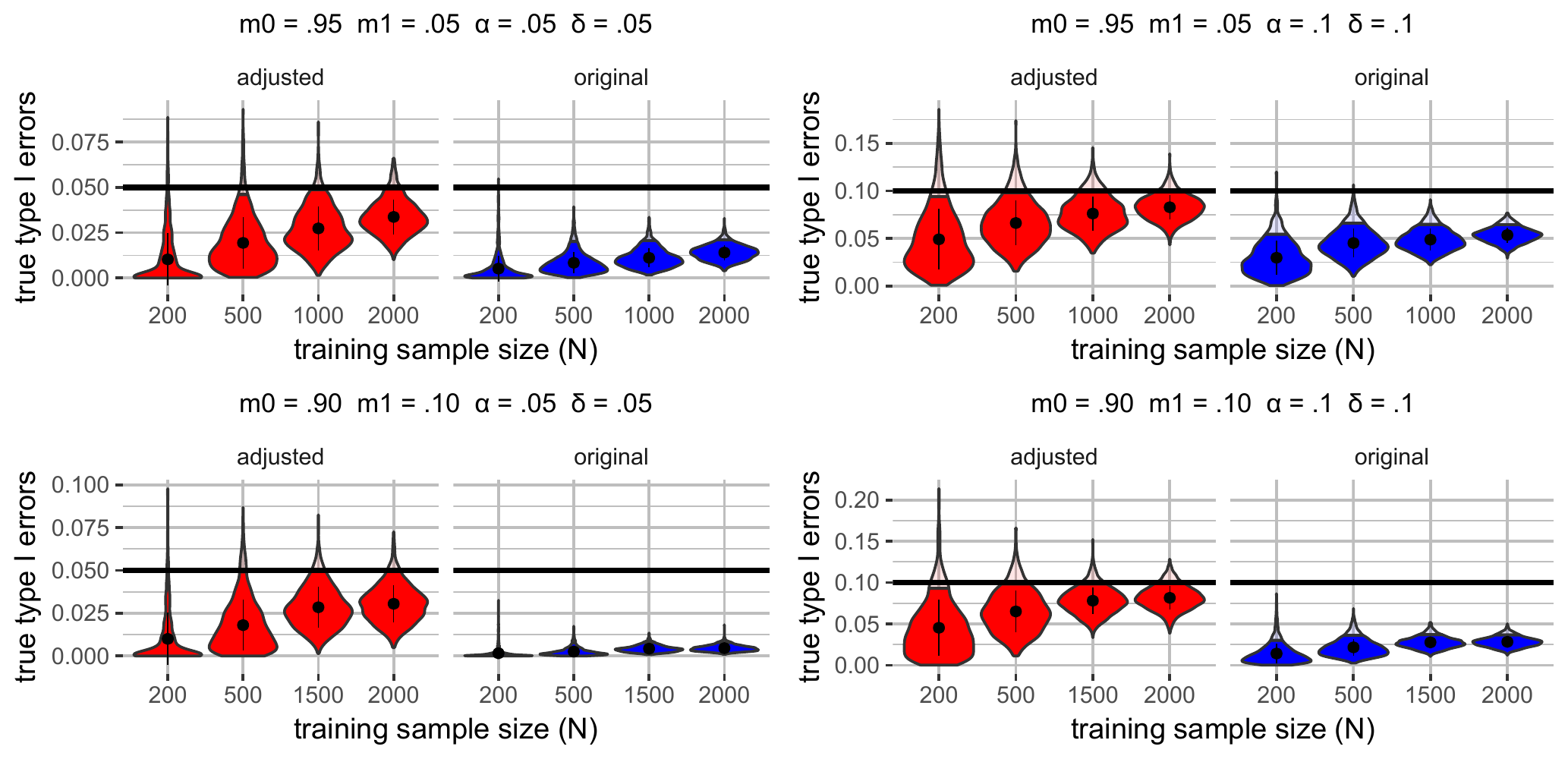}
    \caption{Violin plots for (approximate) true type I errors of Simulation \ref{sim:gmm}. }\label{fig:gmm0} 
\end{center}
\end{figure}

\begin{figure}
\begin{center}
    \includegraphics[width = \textwidth, height = 8cm]{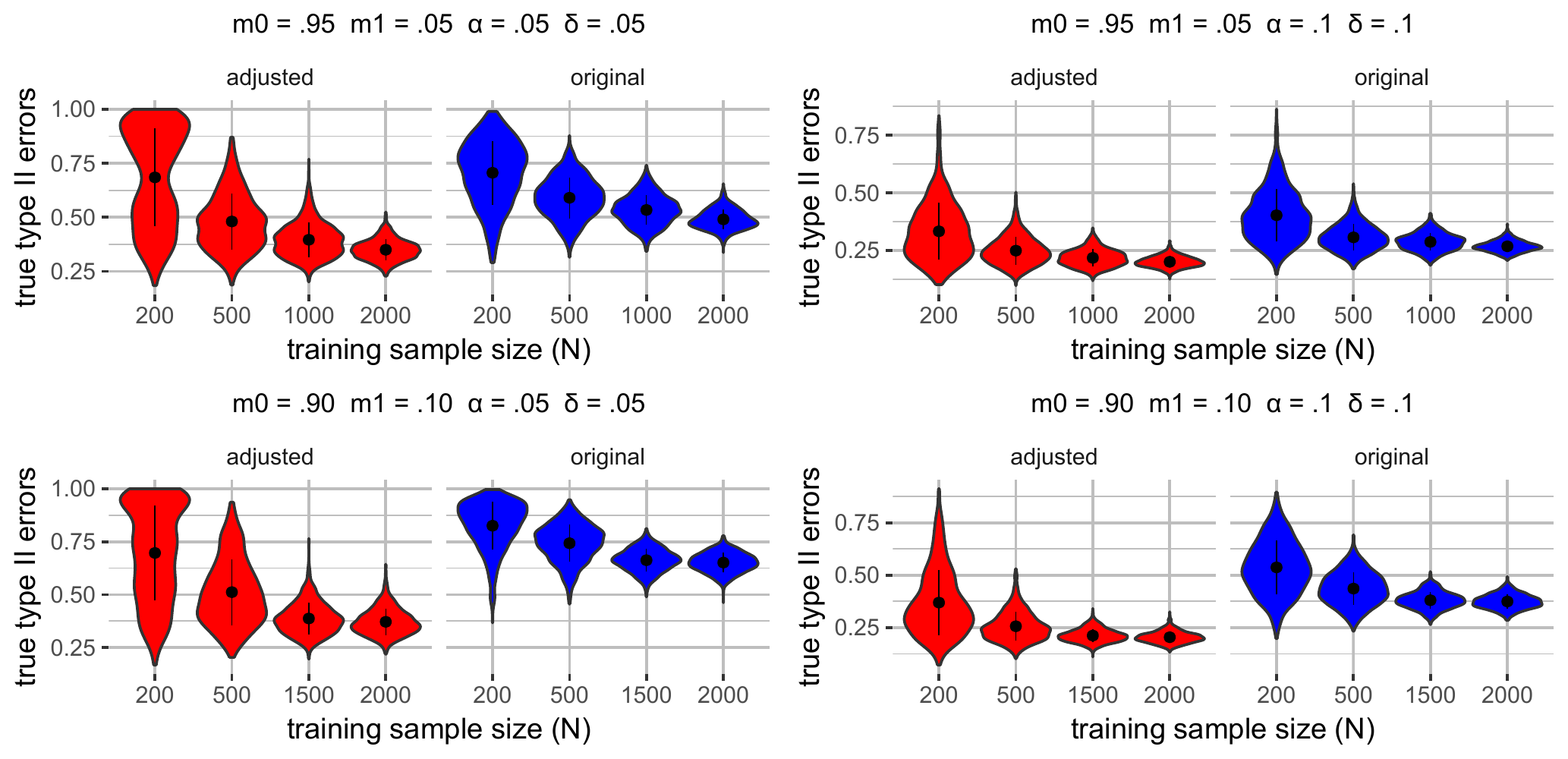}
    \caption{Violin plots for (approximate) true type II errors of Simulation \ref{sim:gmm}. }\label{fig:gmm1}
\end{center}
\end{figure}

\begin{figure}
\begin{center}
    \includegraphics[width = \textwidth, height = 8cm]{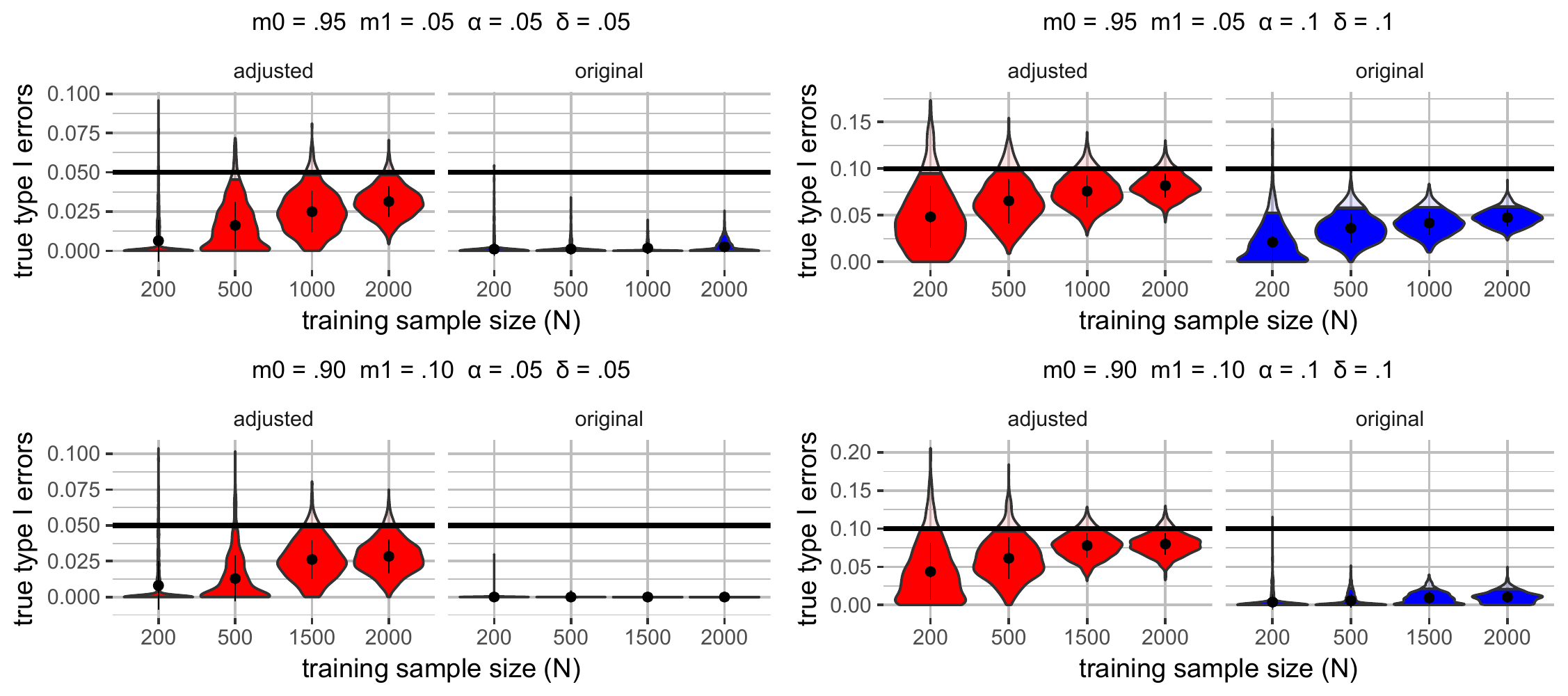}
    \caption{Violin plots for (approximate) true type I errors of Simulation \ref{sim:two_circle}. }\label{fig:two_circle0}
\end{center}
\end{figure}

\begin{figure}
\begin{center}
    \includegraphics[width = \textwidth, height = 8cm]{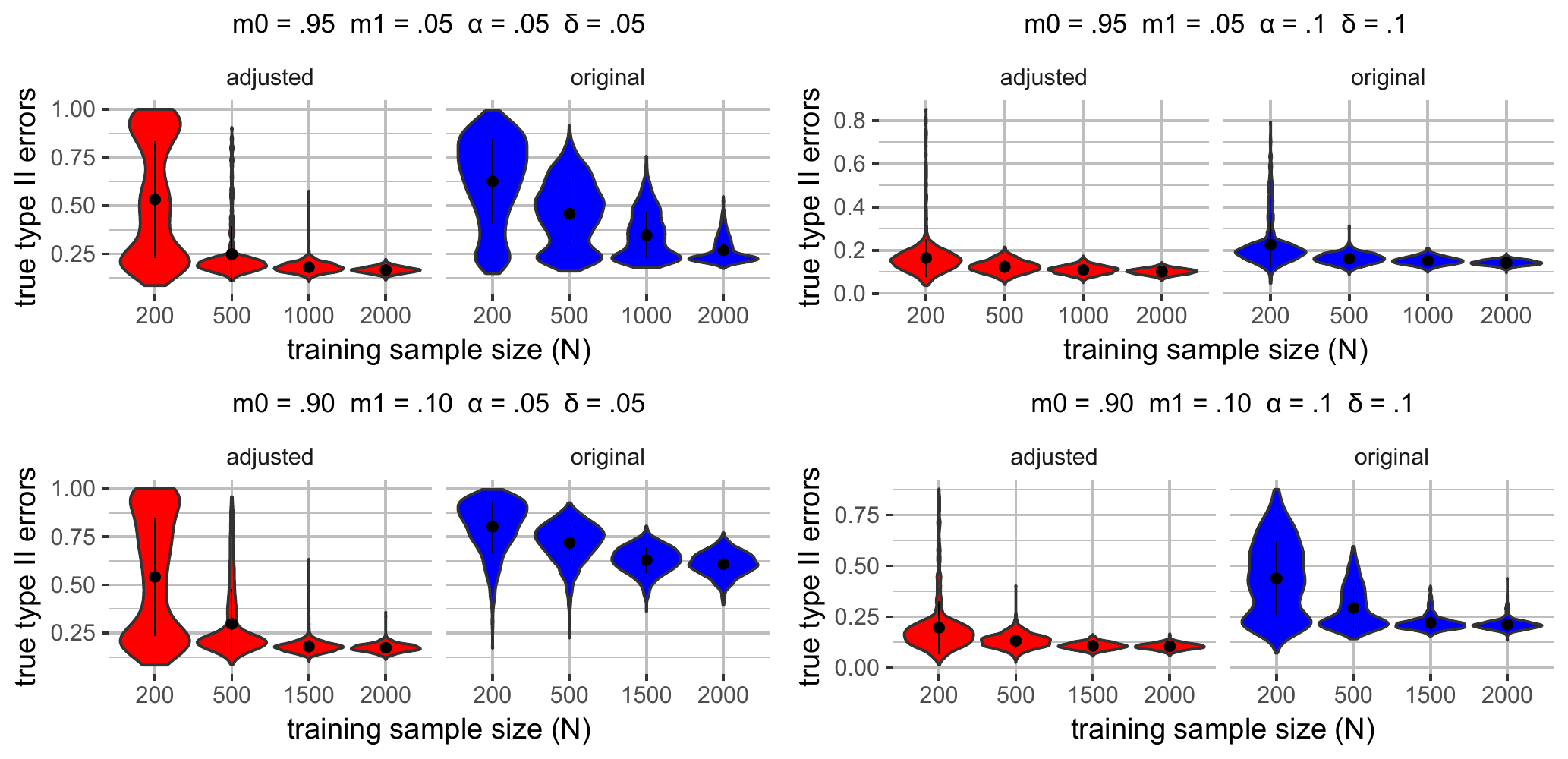}
    \caption{Violin plots for (approximate) true type II errors of Simulation \ref{sim:two_circle}. }\label{fig:two_circle1}
\end{center}
\end{figure}

\begin{figure}
\begin{center}
    \includegraphics[width = \textwidth, height = 8cm]{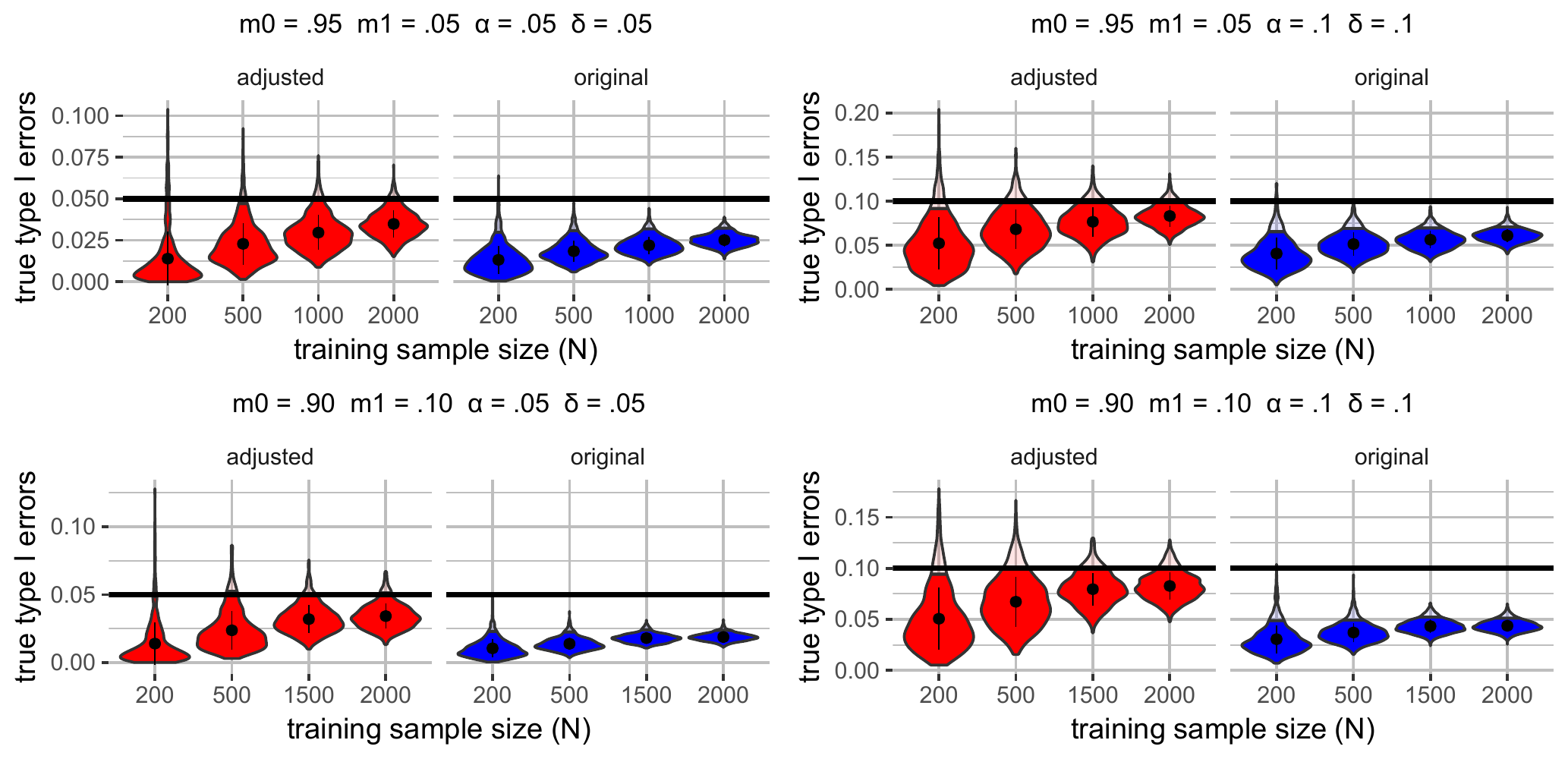}
    \caption{Violin plots for (approximate) true type I errors of Simulation \ref{sim:two_t}. }\label{fig:two_t0}
\end{center}
\end{figure}

\begin{figure}
\begin{center}
    \includegraphics[width = \textwidth, height = 8cm]{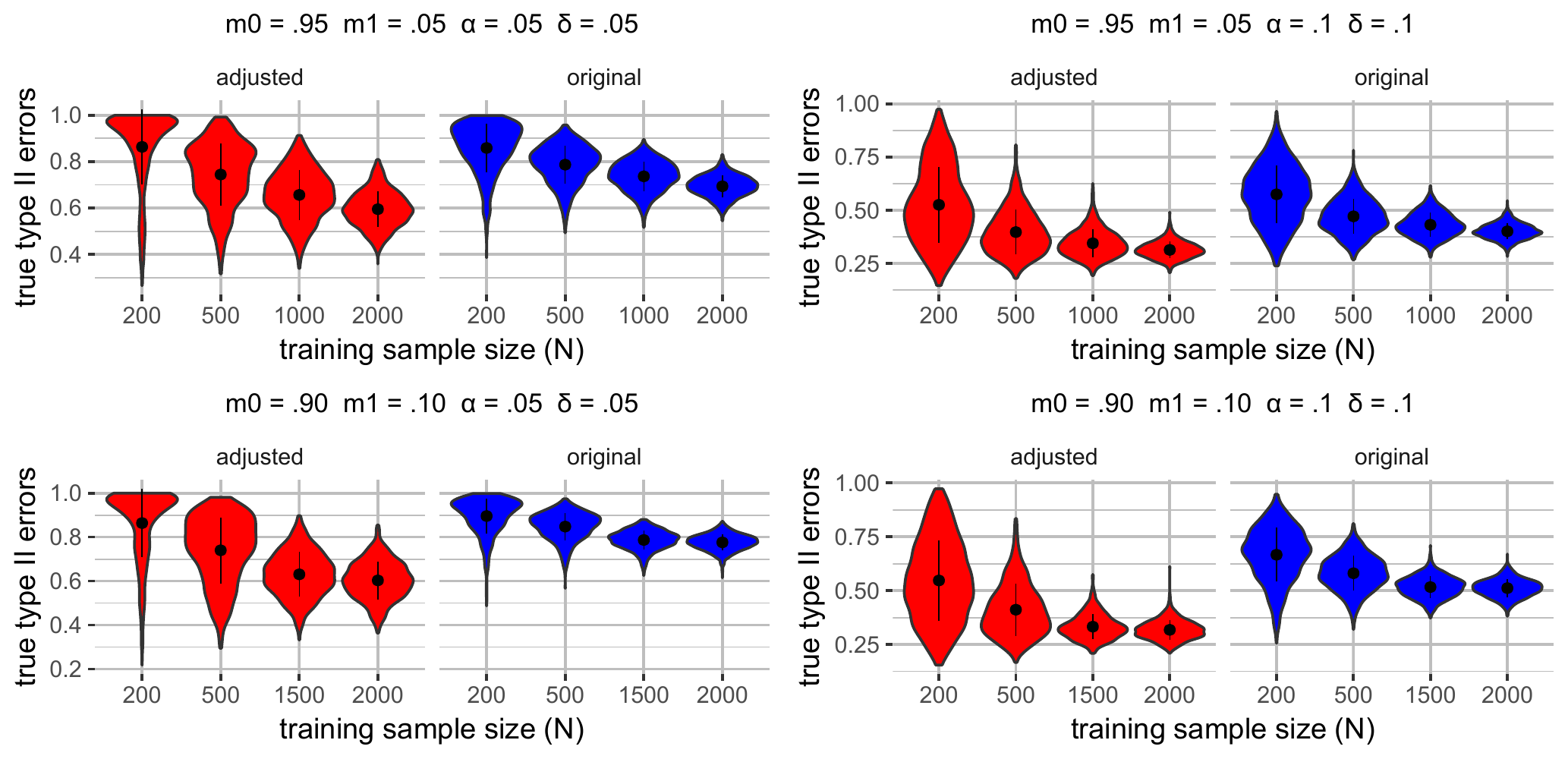}
    \caption{Violin plots for (approximate) true type II errors of Simulation \ref{sim:two_t}. } \label{fig:two_t1}
\end{center}
\end{figure}

% \begin{figure}
% \begin{center}
%     \includegraphics[width = \textwidth, height = 8cm]{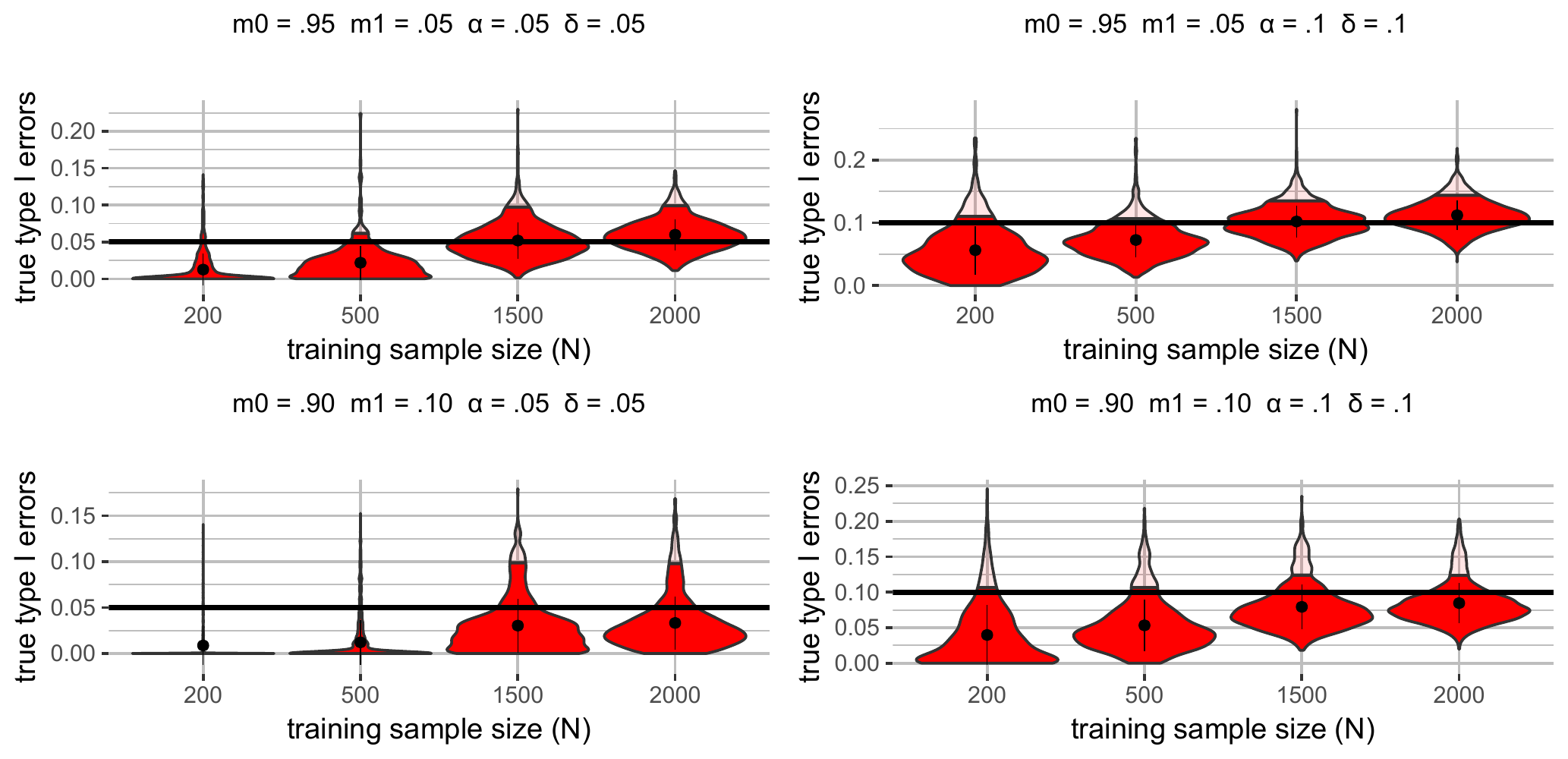}
%     \caption{Violin plots for (approximate) true type I errors of Simulation \ref{sim:uniform_unknown_flip_rate}.}\label{fig:uni_I_est} 
% \end{center}
% \end{figure}

% \begin{figure}
% \begin{center}
%     \includegraphics[width = \textwidth, height = 8cm]{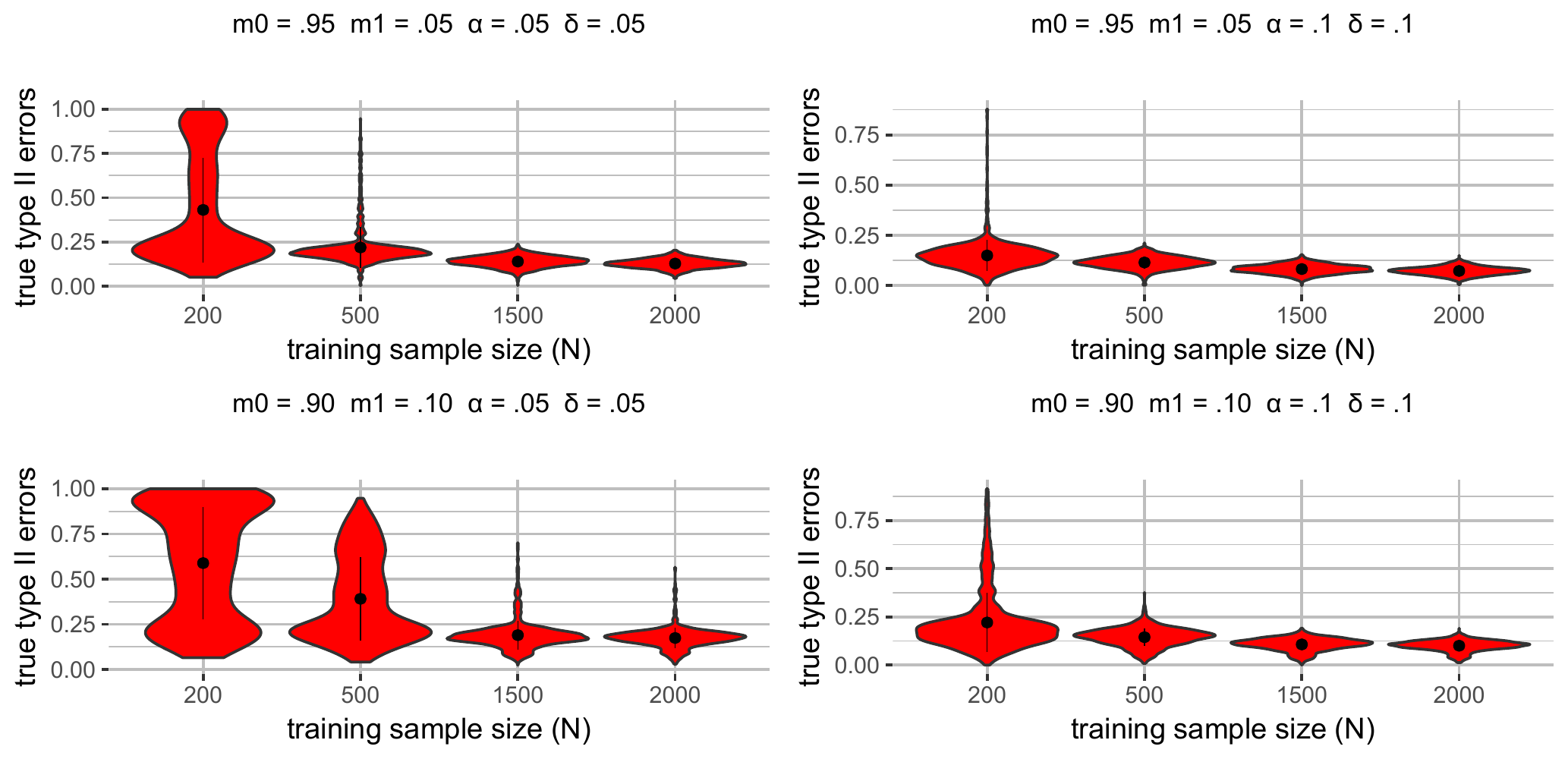}
%     \caption{Violin plots for (approximate) true type II errors of Simulation \ref{sim:uniform_unknown_flip_rate}.}\label{fig:uni_II_est} 
% \end{center}
% \end{figure}

\subsection{Tables for Section \ref{sec:sim_and_real_data}}\label{appendix:table}

In this section, we present Table \ref{tbl:compare_gmm_2} for in Simulation \ref{sim:compare_gmm} in Section \ref{sec:simulation} and Table \ref{tbl:real_data_benchmark} for the email spam data analysis in Section \ref{sec:realdata}.

\begin{table}[h]
\centering
\caption{Averages of (approximate) true type II errors over $1{,}000$ repetitions for Simulation \ref{sim:compare_gmm} ($m_0 = .95$, $m_1 = .05$, $\alpha = .1$ and $\delta = .1$). Standard errors ($\times 10^{-3}$) in parentheses.}\label{tbl:compare_gmm_2}
\begin{tabular}{|l|l|l|l|l|}
\hline
\multirow{2}{*}{algorithms} & \multicolumn{4}{l|}{$N$} \\ \cline{2-5} 
 & $200$ & $500$ & $1{,}000$ & $2{,}000$ \\ \hline
T-revision & $.165(4.32)$  & $.153(4.08)$ & $.146(3.52)$  & $.147(4.27)$ \\ \hline
\begin{tabular}[c]{@{}l@{}}backward loss correction\\ (known corruption level)\end{tabular} & $.151(.77)$ & $.139(.70)$ & $.161(.71)$ & $.199(.69)$ \\ \hline
\begin{tabular}[c]{@{}l@{}}backward loss correction\\ (unknown corruption level)\end{tabular} & $.158(.02)$ & $.163(.02)$ & $.186(.01)$ & $.192(.01)$ \\ \hline
% \begin{tabular}[c]{@{}l@{}}label-noise-adjusted NP umbrella\\ (known corruption levels)\end{tabular} & $.333(3.93)$ & $.249(1.94)$ & $.218(1.18)$ & $.201(.76)$ \\ \hline
% \begin{tabular}[c]{@{}l@{}}label-noise-adjusted NP umbrella\\ (unknown corruption levels)\end{tabular} & $.185(2.82)$ & $.145(1.46)$ & $.114(1.31)$ & $.113(1.34)$ \\ \hline
\end{tabular}
\end{table}

% \begin{table}[h]
% \caption{(Approximate) type I error violation rates, and averages  of (approximate) true type II errors by Algorithm $1^{\#}$ over $1{,}000$ repetitions for the email spam data. Standard errors ($\times10^{-3}$) in parentheses.}
% \vspace{5 pt}
% \centering
% \begin{tabular}{|l|l|l|}
% \hline
%  & \begin{tabular}[c]{@{}l@{}}(approximate) \\ violation rate\end{tabular} & \begin{tabular}[c]{@{}l@{}}average of \\ (approximate) true \\ type II errors\end{tabular} \\ \hline
% penalized logistic regression & $.869(10.67)$ & $.098(1.07)$ \\ \hline
% \begin{tabular}[c]{@{}l@{}}linear discriminant analysis\end{tabular} & $.850(11.30)$ & $.114(1.55)$ \\ \hline
% \begin{tabular}[c]{@{}l@{}}support vector machine\end{tabular} & $.727(14.10)$ & $.116(2.33)$ \\ \hline
% \begin{tabular}[c]{@{}l@{}}random forests\end{tabular} & $.412(15.57)$ & $.092(.68)$ \\ \hline
% \end{tabular}
% \label{table:real_data_est}
% \end{table}

\begin{table}[h]
\caption{(Approximate) type I error violation rates, and averages  of (approximate) true type II error by benchmark algorithms over $1{,}000$ repetitions for the email spam data. Standard errors ($\times10^{-3}$) in parentheses.}
\vspace{5 pt}
\centering
\begin{tabular}{|l|l|l|}
\hline
 & \begin{tabular}[c]{@{}l@{}}(approximate) \\ violation rate\end{tabular} & \begin{tabular}[c]{@{}l@{}}average of \\ (approximate) true \\ type II errors\end{tabular} \\ \hline
T-revision & $.829(11.91)$ & $.414(7.01)$ \\ \hline
\begin{tabular}[c]{@{}l@{}}backward loss correction\\ (known corruption level)\end{tabular} & $.831(11.86)$ & $.573(5.49)$ \\ \hline
\begin{tabular}[c]{@{}l@{}}backward loss correction\\ (unknown corruption level)\end{tabular} & $.750(13.70)$ & $.631(5.51)$ \\ \hline
\end{tabular}\label{tbl:real_data_benchmark}
\end{table}

%\begin{figure}
%\begin{center}
%    \includegraphics[width = \textwidth, height = 7cm]{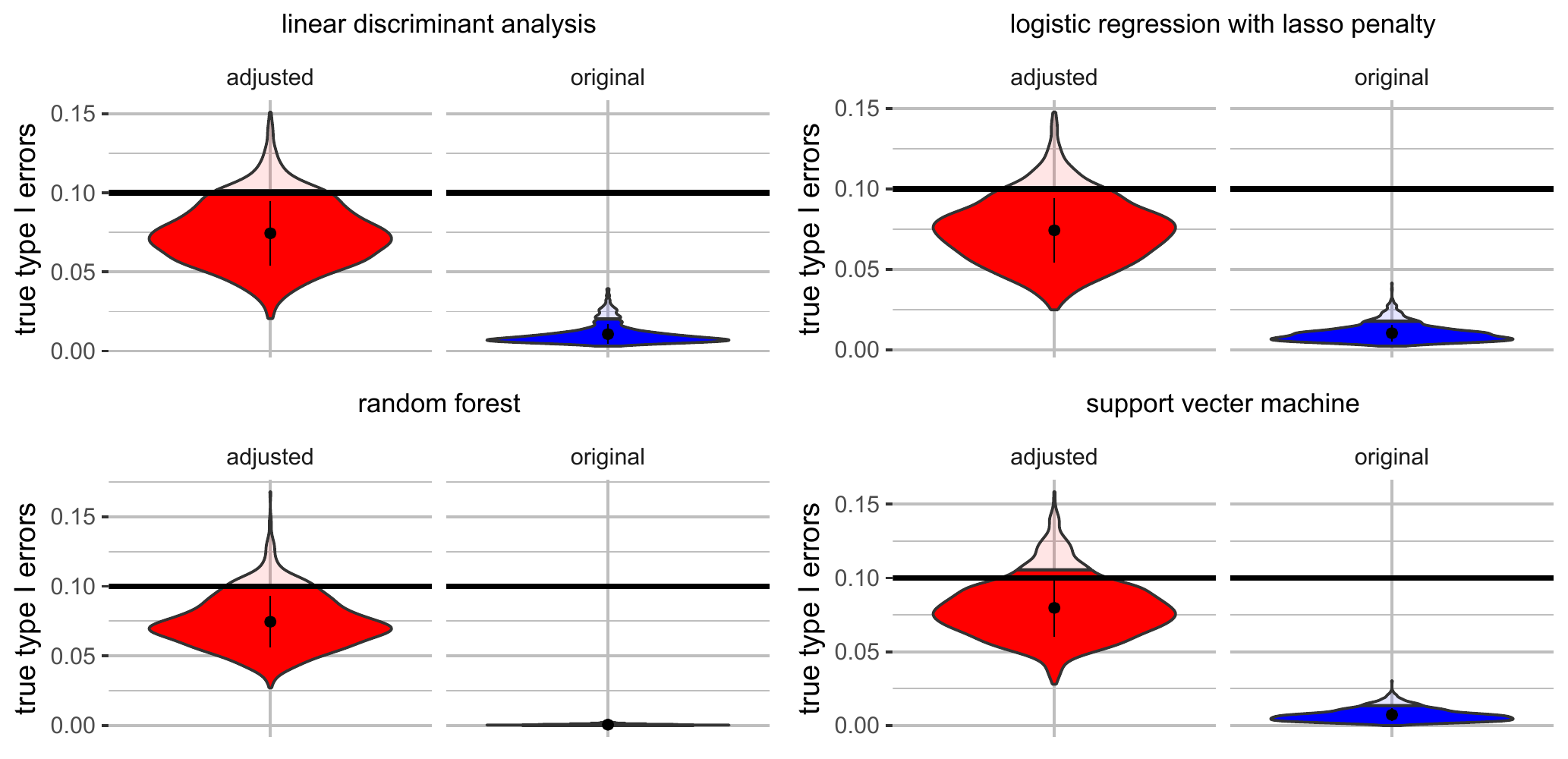}
%    \caption{Violin plots for estimated true type I errors of the email spam data. }\label{fig:email_spam0}
%\end{center}
%\end{figure}
%
%\begin{figure}
%\begin{center}
%    \includegraphics[width = \textwidth, height = 8cm]{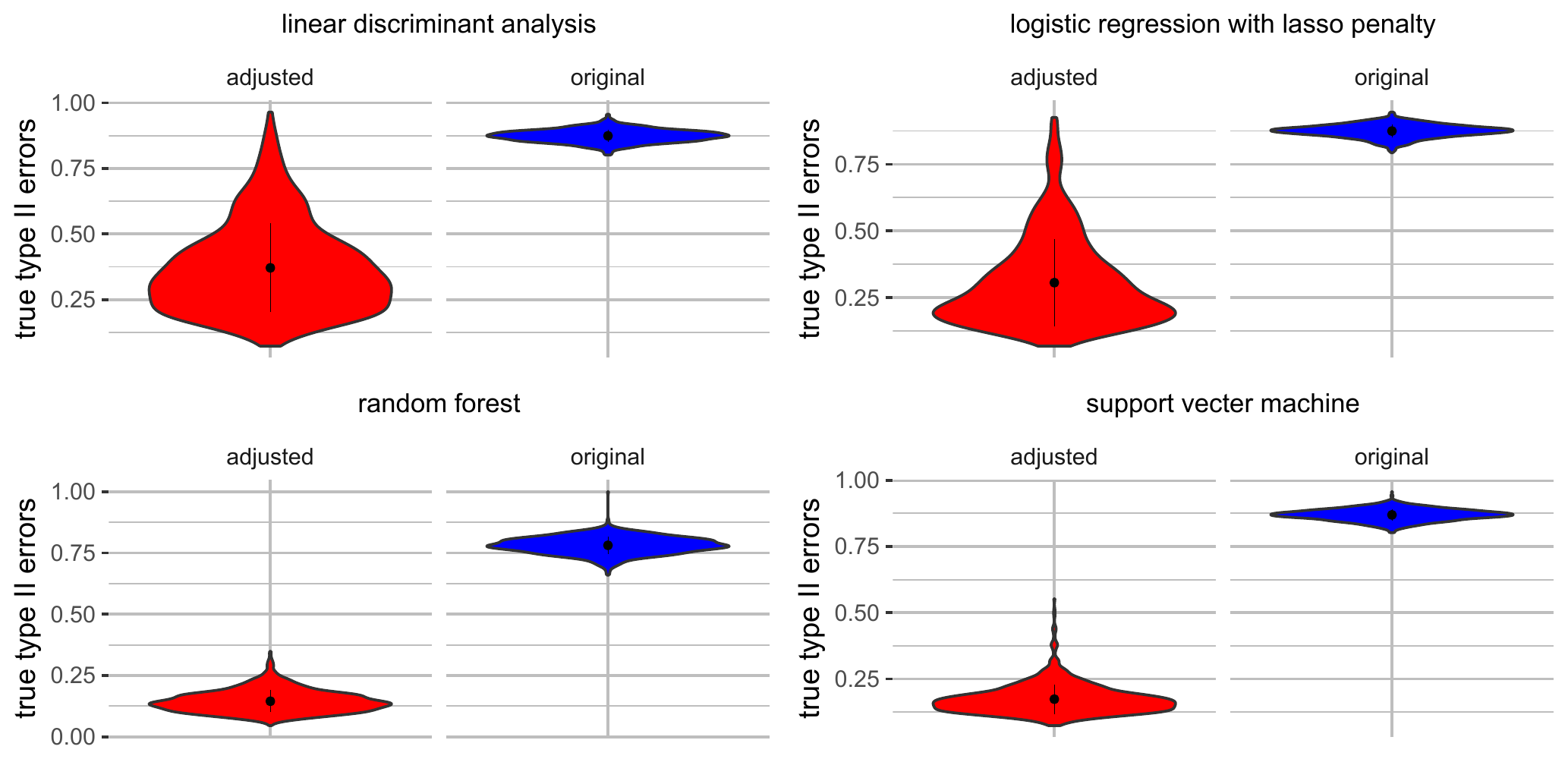}
%    \caption{Violin plots for estimated true type II errors of the email spam data.}\label{fig:email_spam1} 
%\end{center}
%\end{figure}
%
%

\subsection{Alternative implementation with a positive  $\varepsilon$}\label{appendix:numerical}

%\textcolor{purple}{For this section: run different $m_0$ and $m_1$ combination to be consistent with main text. }

In this section, we repeat the numerical studies for Simulations \ref{sim:gmm}-\ref{sim:two_t} in Section \ref{sec:sim_and_real_data} but replace $k^*$ in Algorithm \ref{alg:adj_umbrella} by $\min\{k\in\{1, \ldots, n\}:\alpha_{k,\delta} - \hat{D}^+(t_{(k)}) \leq \alpha - \varepsilon\}$ where $\varepsilon = 0.0001$. The results are presented in Figures  \ref{fig:sim_gmm_epssilon_1} - \ref{fig:sim:two_t_type_II}.     Numerical evidence shows that whether to have a small positive $\varepsilon$ in selection of $k^*$ does not actually affect much the performance of label-noise-adjusted umbrella algorithm. Thus, as a simpler algorithm is always preferred, we recommend taking $\varepsilon = 0$.

\begin{figure}
\begin{center}
    \includegraphics[width = \textwidth, height = 8cm]{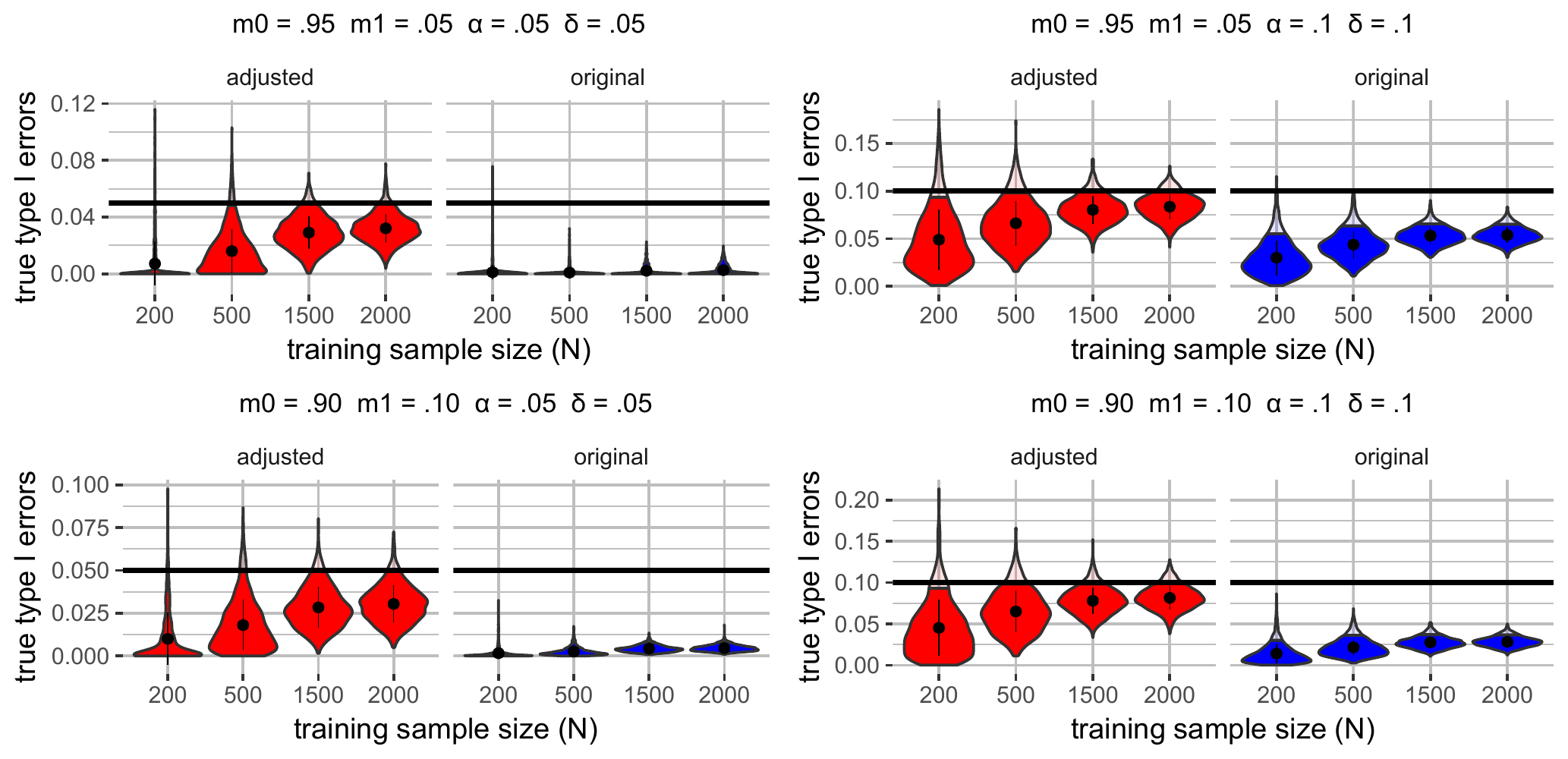}
    \caption{Violin plots for (approximate) true type I errors of Simulation \ref{sim:gmm}. }\label{fig:sim_gmm_epssilon_1}
\end{center}
\end{figure}

\begin{figure}
\begin{center}
    \includegraphics[width = \textwidth, height = 8cm]{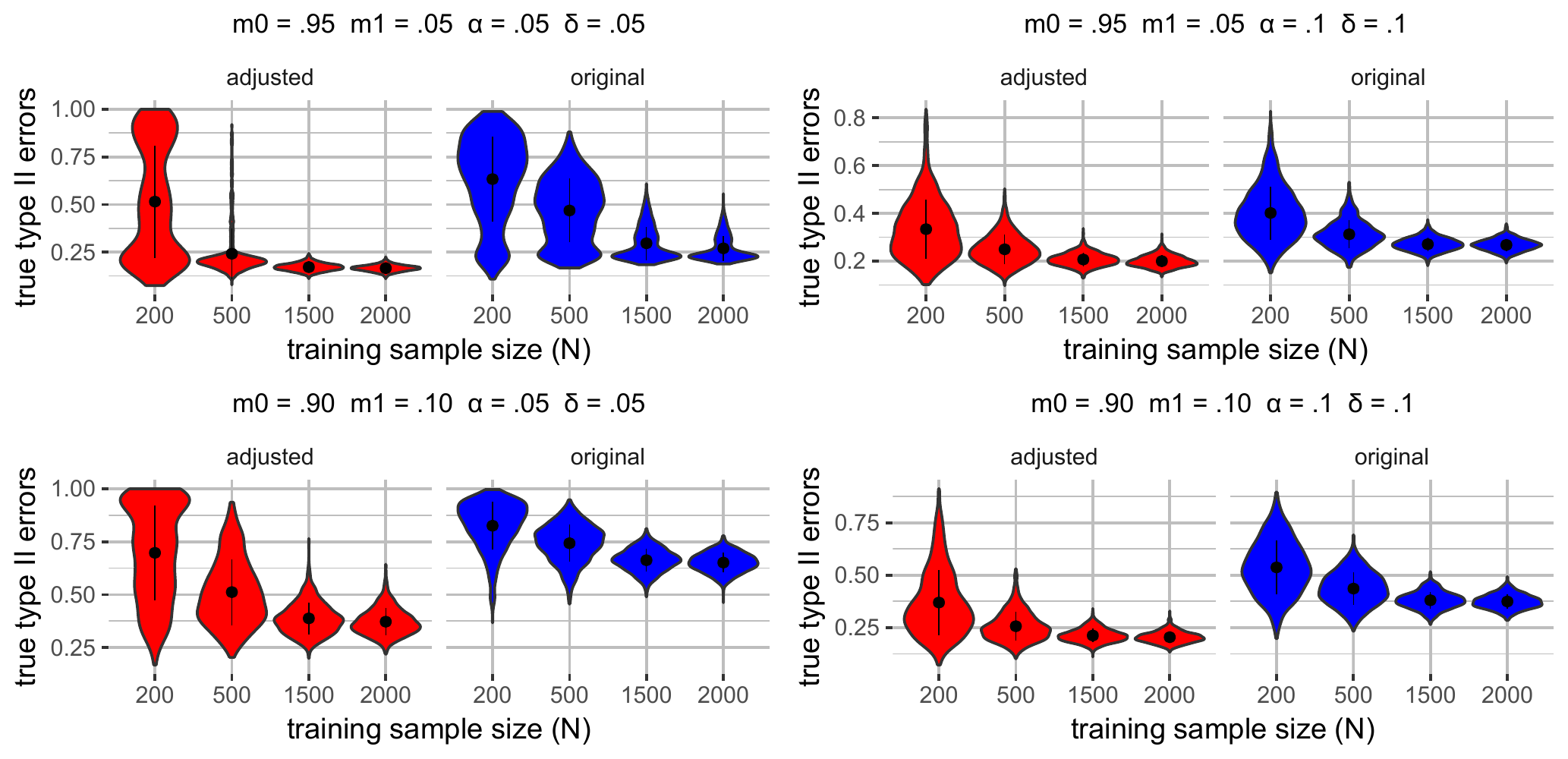}
    \caption{Violin plots for (approximate) true type II errors of Simulation \ref{sim:gmm}. }
\end{center}
\end{figure}

\begin{figure}
\begin{center}
    \includegraphics[width = \textwidth, height = 8cm]{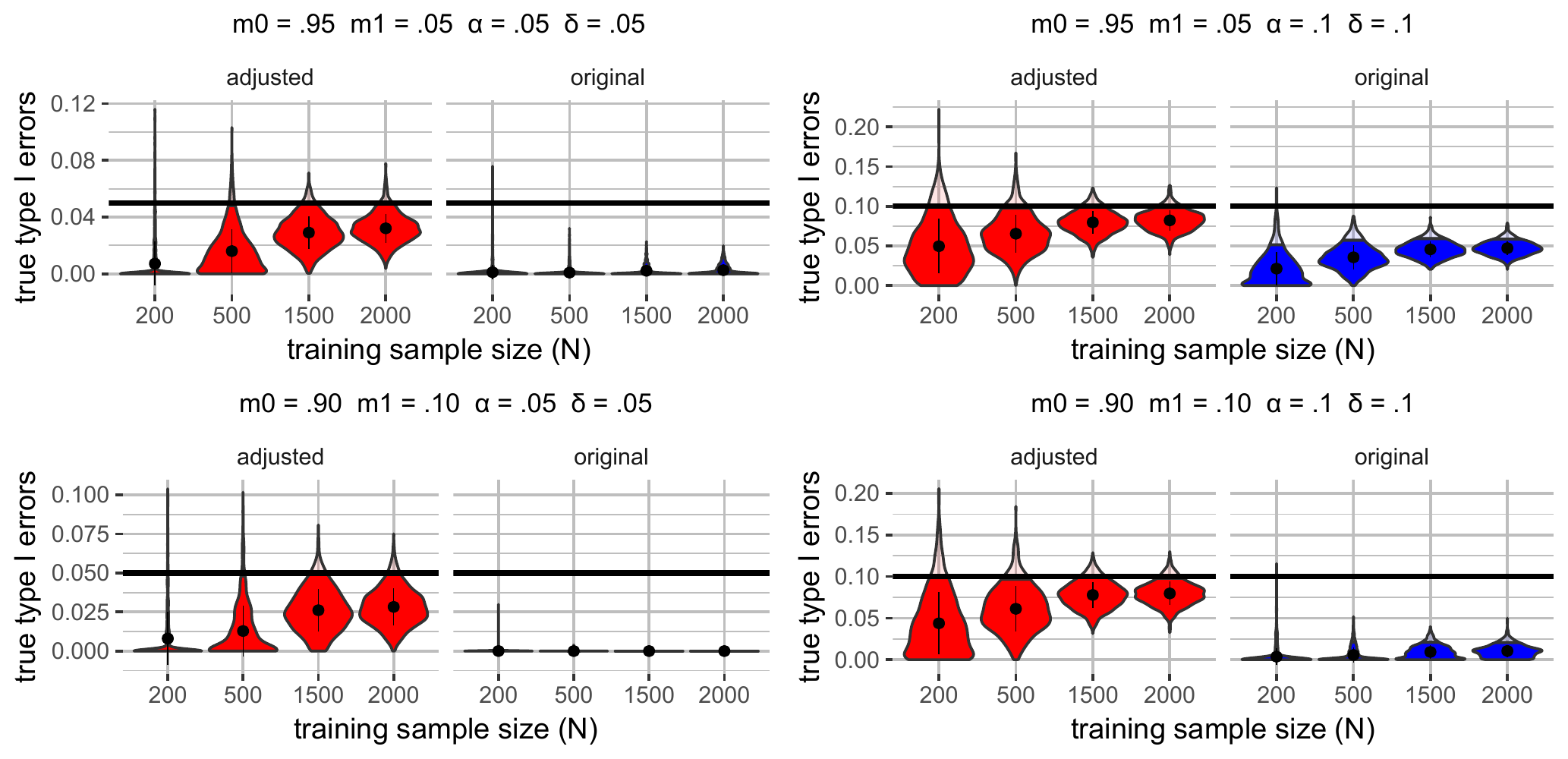}
    \caption{Violin plots for (approximate) true type I errors of Simulation \ref{sim:two_circle}. }
\end{center}
\end{figure}

\begin{figure}
\begin{center}
    \includegraphics[width = \textwidth, height = 8cm]{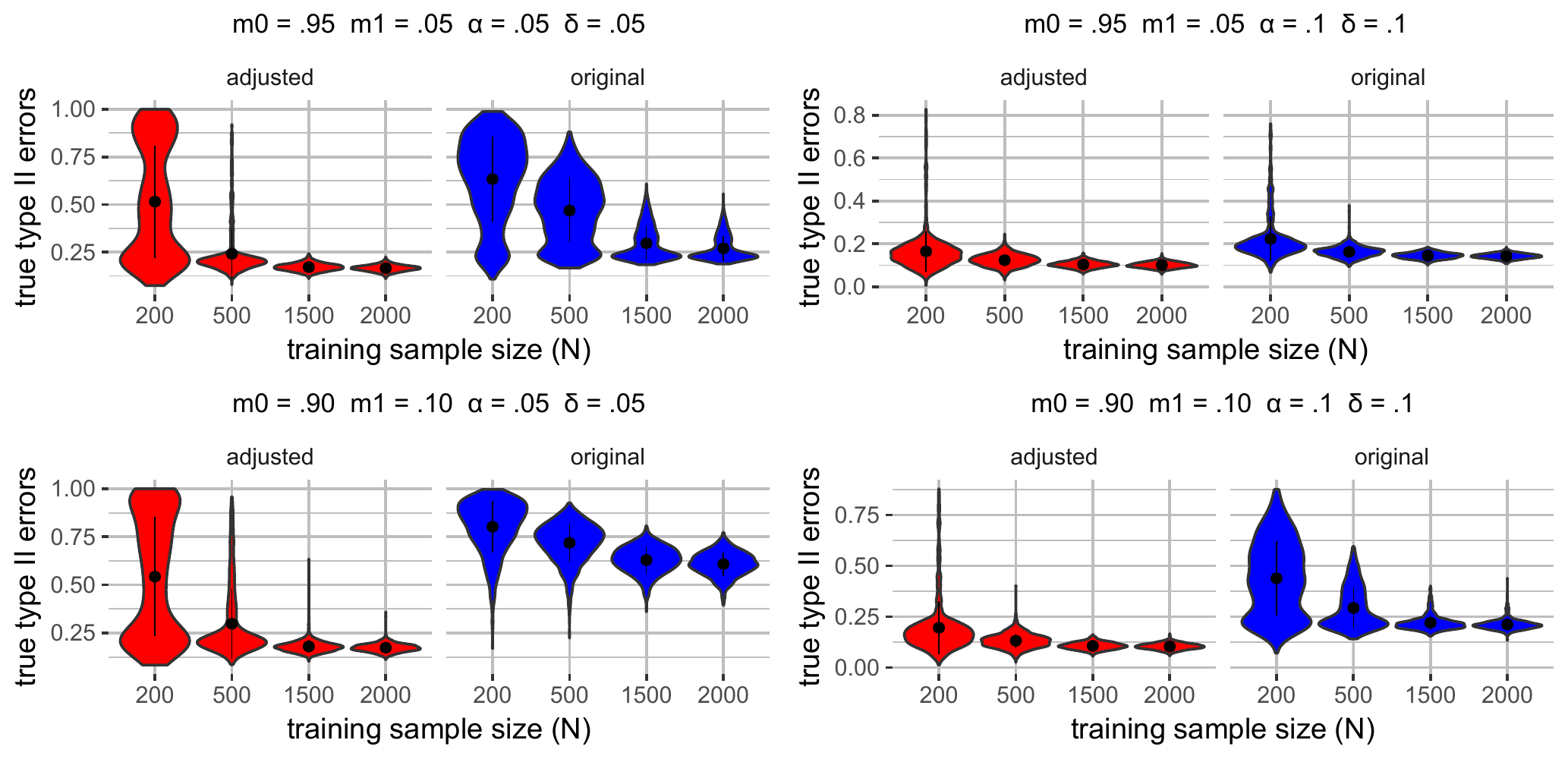}
    \caption{Violin plots for (approximate) true type II errors of Simulation \ref{sim:two_circle}. }
\end{center}
\end{figure}

\begin{figure}
\begin{center}
    \includegraphics[width = \textwidth, height = 8cm]{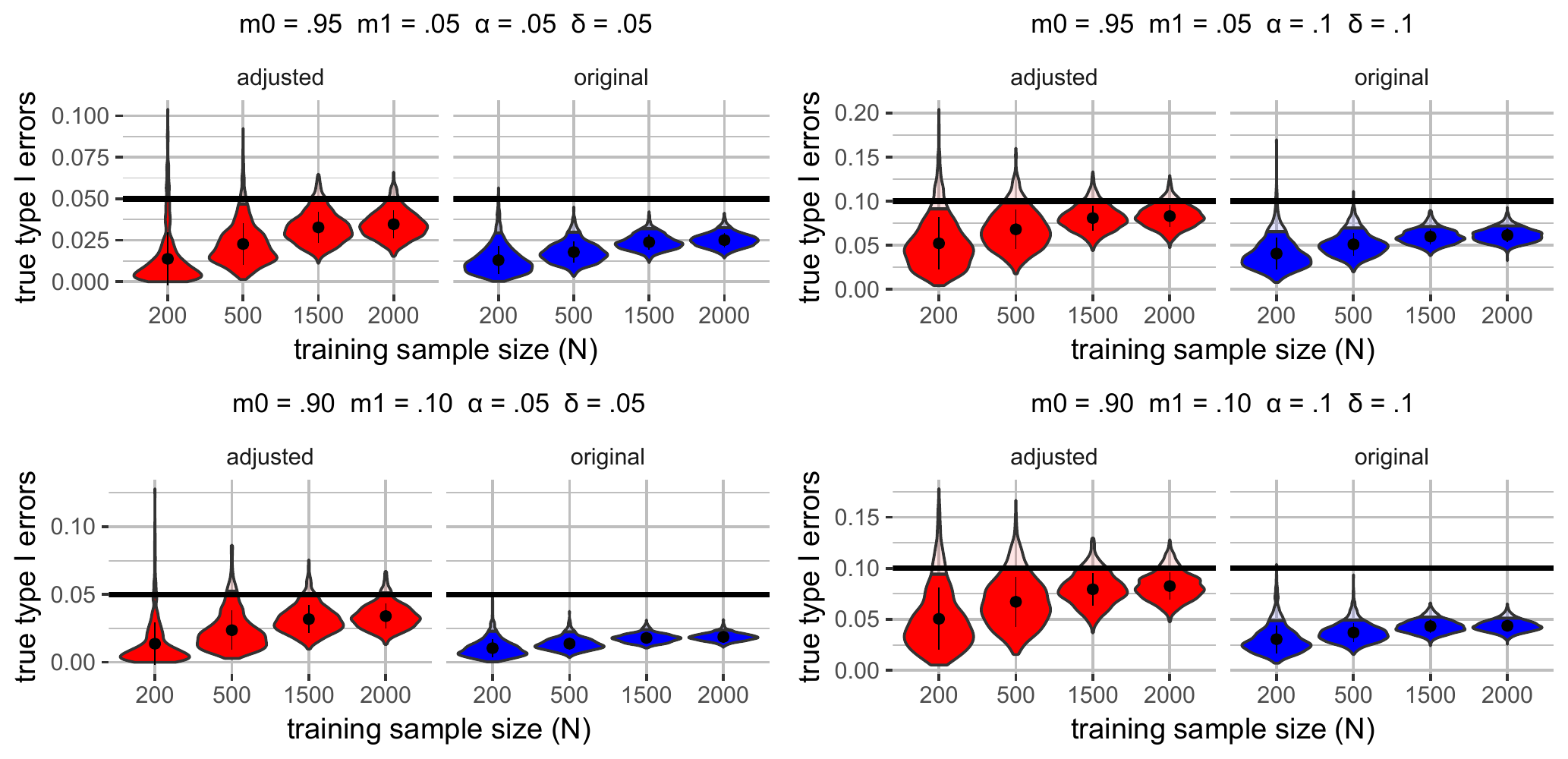}
    \caption{Violin plots for (approximate) true type I errors of Simulation \ref{sim:two_t}. }
\end{center}
\end{figure}

\begin{figure}
\begin{center}
    \includegraphics[width = \textwidth, height = 8cm]{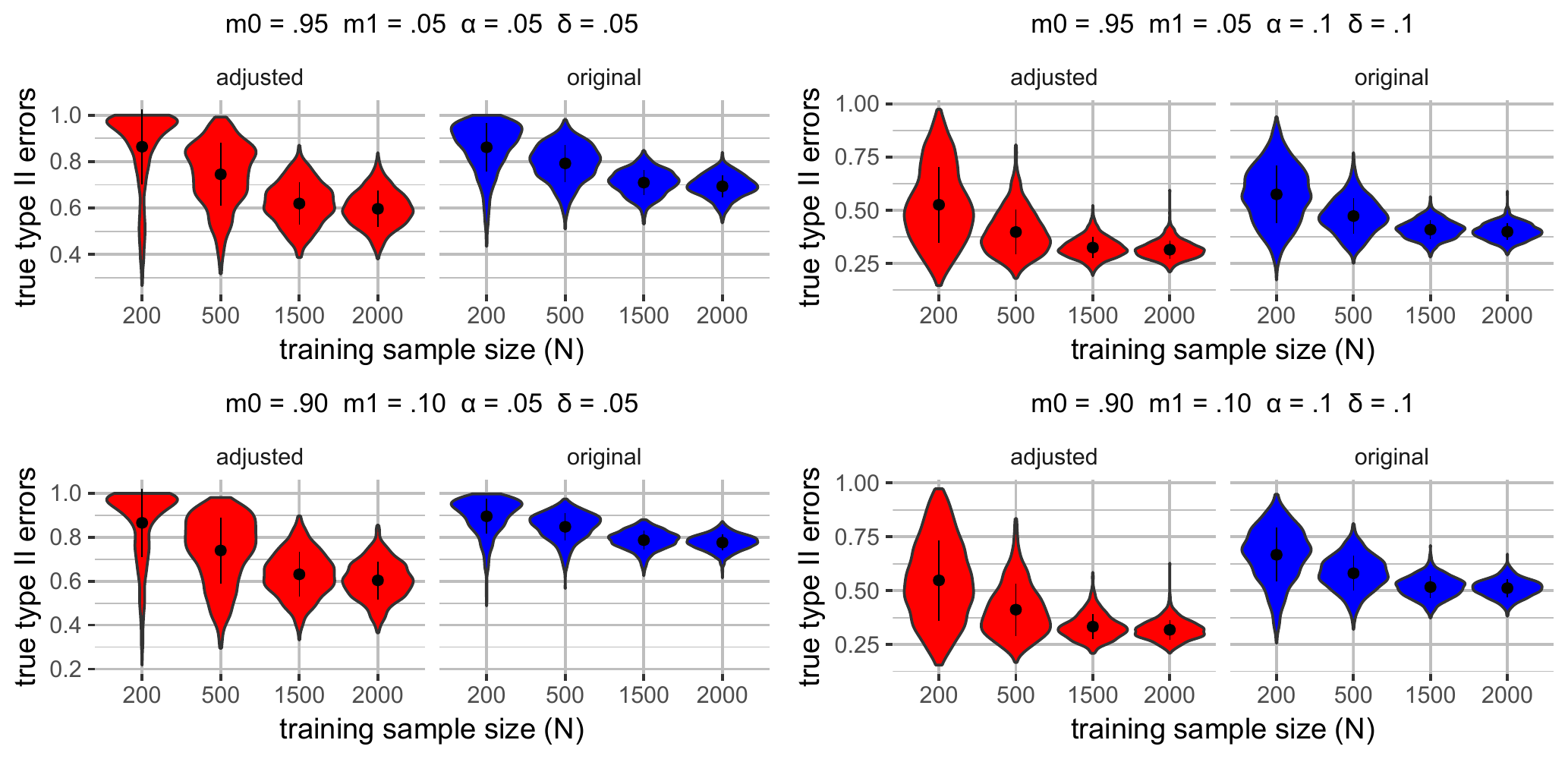}
    \caption{Violin plots for (approximate) true type II errors of Simulation \ref{sim:two_t}. } \label{fig:sim:two_t_type_II}
\end{center}
\end{figure}

%\begin{figure}
%\begin{center}
%    \includegraphics[width = \textwidth, height = 7cm]{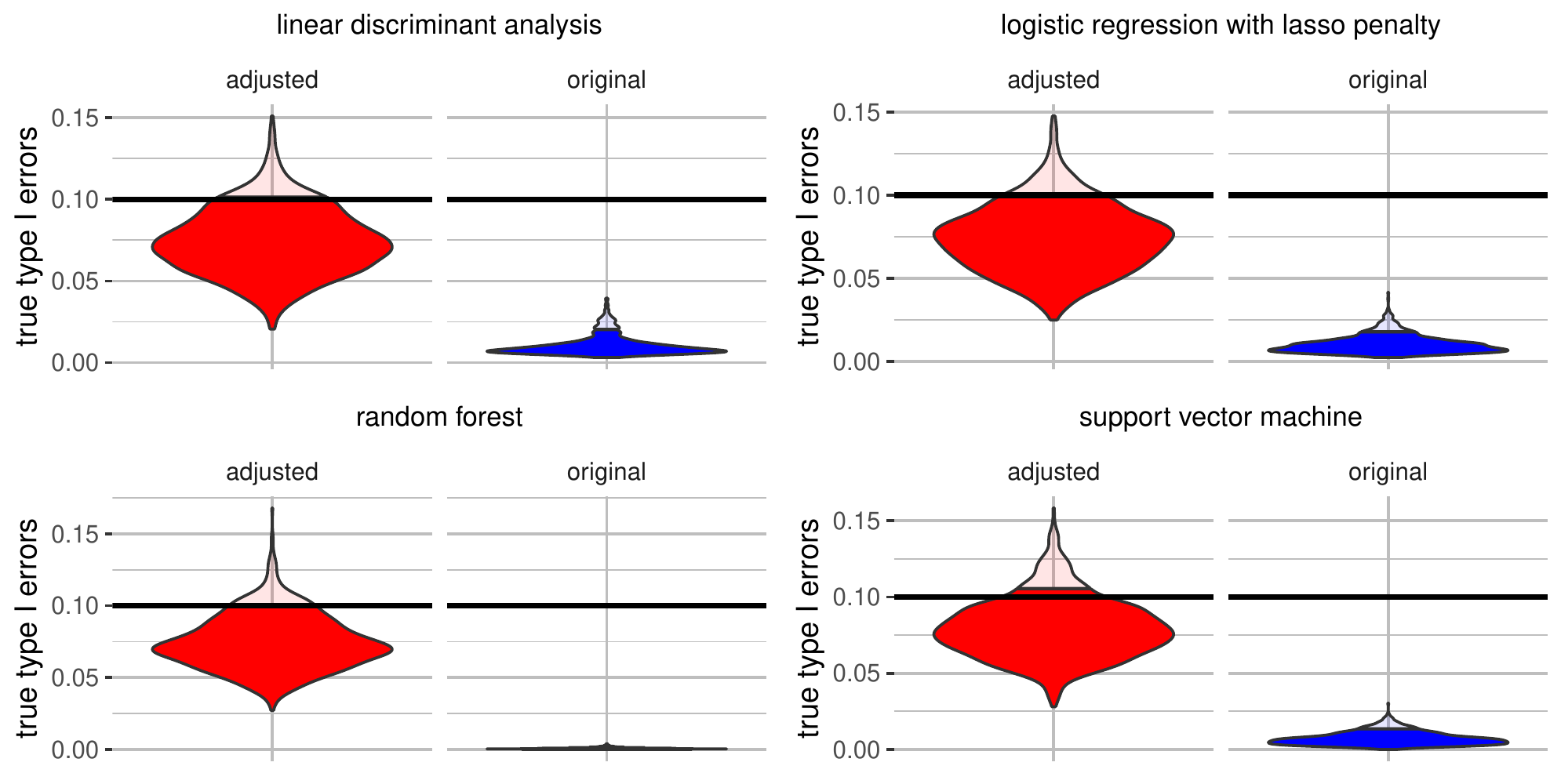}
%    \caption{Violin plots for estimated true type I errors of the email spam data. }
%\end{center}
%\end{figure}
%
%\begin{figure}
%\begin{center}
%    \includegraphics[width = \textwidth, height = 8cm]{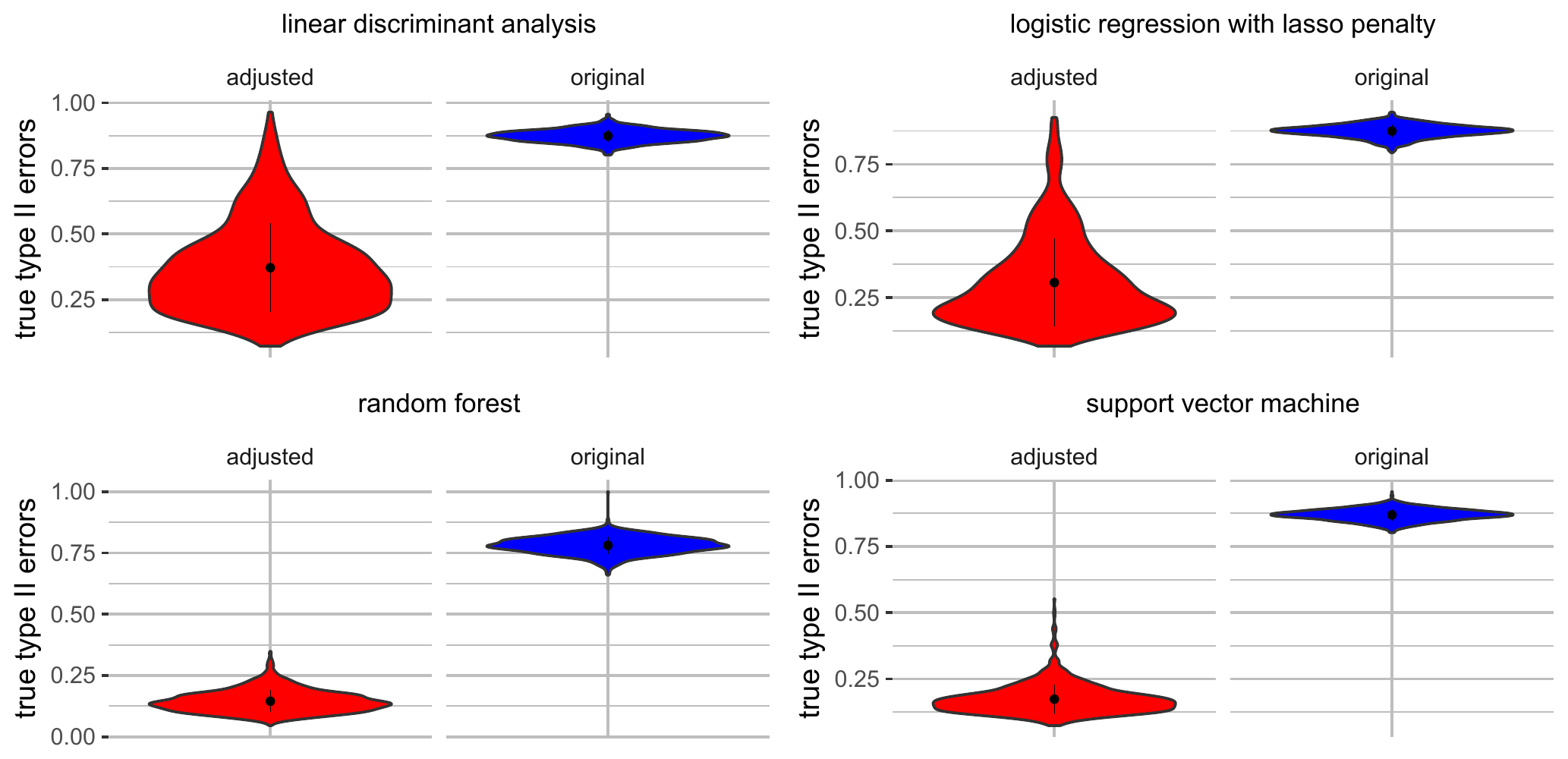}
%    \caption{Violin plots for estimated true type II errors of th email spam data.} \label{fig:email_spam1_epsilon}
%\end{center}
%\end{figure}
%
%

\section{Extra Lemmas}
\begin{lemma}\label{lemma:corollary_for_assumption_1}
Under Assumption \ref{assumption:mixture}, for any measurable function $T:\R^d \rightarrow \R$ and arbitrary number $z \in \R$, we have
\begin{align*}
    \tilde{F}_0^T(z) = m_0F_0^T(z) + (1-m_0)F_1^T(z)\, \text{ }\text{ and }\text{ }
    \tilde{F}_1^T(z) = m_1F_0^T(z) + (1-m_1)F_1^T(z)\,.
\end{align*}
Furthermore,
\begin{align*}
    \E\tilde{X}^0 = m_0\E X^0 + (1-m_0)\E X^1\, \text{ }\text{ and }\text{ }
    \E\tilde{X}^1 = m_1\E X^0 + (1-m_1)\E X^1\,.
\end{align*}
\end{lemma}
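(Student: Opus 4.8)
The plan is to reduce both claims to direct applications of Assumption~\ref{assumption:mixture} together with standard measure-theoretic approximation. For the statement about distribution functions, I would first observe that since $T$ is measurable, the set $A_z := T^{-1}\big((-\infty, z]\big) \cap \mathcal{X}$ is a Borel subset of $\mathcal{X}$ for every $z \in \R$. Then by definition $\tilde{F}_0^T(z) = \tilde{P}_0\big(T(X) \le z\big) = \tilde{P}_0(A_z)$, and applying the first identity in \eqref{eqn:model} to the Borel set $A_z$ gives $\tilde{P}_0(A_z) = m_0 P_0(A_z) + (1-m_0) P_1(A_z) = m_0 F_0^T(z) + (1-m_0) F_1^T(z)$. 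The identity for $\tilde{F}_1^T$ follows in exactly the same way from the second identity in \eqref{eqn:model}.

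For the statement about expectations, the key observation is that \eqref{eqn:model} says precisely that, as measures on $\mathcal{X}$, $\tilde{P}_0 = m_0 P_0 + (1-m_0) P_1$ and $\tilde{P}_1 = m_1 P_0 + (1-m_1) P_1$. I would then integrate the coordinate maps $x \mapsto x^{(i)}$, $i = 1, \ldots, d$, against these measures. Concretely, the identity $\int g \, d\tilde{P}_0 = m_0 \int g \, dP_0 + (1-m_0)\int g\, dP_1$ holds for indicators $g = \1_A$ by \eqref{eqn:model}, extends to nonnegative simple functions by linearity, to arbitrary nonnegative measurable $g$ by monotone convergence, and finally to any $\tilde{P}_0$-integrable $g$ by splitting into positive and negative parts. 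Taking $g(x) = x^{(i)}$ and assembling the coordinates yields $\E \tilde{X}^0 = m_0 \E X^0 + (1-m_0) \E X^1$, and the formula for $\E \tilde{X}^1$ is identical with $m_0$ replaced by $m_1$.

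There is no real obstacle here; the only points requiring a word of care are (i) noting measurability of $A_z$ so that \eqref{eqn:model} is applicable, and (ii) justifying the passage from the set-function identity to an identity for integrals of unbounded functions via the usual ``indicators $\to$ simple $\to$ nonnegative $\to$ integrable'' bootstrap, together with the observation that finiteness of $\E X^0$ and $\E X^1$ (implicit in writing these expectations) forces finiteness of $\E \tilde{X}^0$ and $\E \tilde{X}^1$ since the mixture weights are nonnegative.
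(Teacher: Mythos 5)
Your proof is correct. For the distribution-function identities you take the same route as the paper: apply equation~\eqref{eqn:model} to the Borel set $T^{-1}((-\infty,z])$. For the expectation identities, however, your argument genuinely differs from the paper's. The paper invokes the tail-probability (layer-cake) representation $\E Z = \int_0^\infty \big(1-P(Z\le x)\big)\,dx - \int_{-\infty}^0 P(Z\le x)\,dx$ and substitutes the mixture identity inside the integrands, which is a quick and elementary computation but is written as if $X$ were scalar (the coordinate-wise reduction for $d>1$ is left implicit). You instead recognize \eqref{eqn:model} as a genuine identity of measures and run the standard bootstrap — indicators to simple functions to nonnegative measurable functions (monotone convergence) to integrable functions (positive/negative parts) — applied to the coordinate maps $x \mapsto x^{(i)}$. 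Your approach buys cleanliness and generality: it works verbatim for any $\tilde P_0$-integrable test function, not just coordinates, it handles the vector-valued case explicitly, and it surfaces the integrability bookkeeping (finiteness of $\E X^0, \E X^1$ implies finiteness of $\E\tilde X^0$) that the paper glosses over. The paper's route is shorter if one is content with a one-dimensional computation. Both are sound; yours is the more robust formulation.
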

\begin{proof}
The first two equations can be proved in the similar way. So we will only show the first equation. By Assumption \ref{assumption:mixture}, for any Borel set $A$,
\begin{align*}
    \tilde{P}_0(T^{-1}(A)) &= m_0P_0(T^{-1}(A)) + (1-m_0)P_1(T^{-1}(A))\,.
\end{align*}
Then, select $A = (-\infty, z]$ and the result follows.

Similarly, the proof of last two equations are similar in nature. So we are going to show $\E\tilde{X}^0 = m_0\E X^0 + (1-m_0)\E X^1$. Note that by Assumption \ref{assumption:mixture},
\begin{align*}
    \E\tilde{X}^0 &= \int_0^\infty(1 - \tilde{P}_0(X \leq x))dx - \int_{-\infty}^0\tilde{P}_0(X\leq x)dx\\
    &=m_0\left(\int_0^\infty(1 - P_0(X \leq x))dx - \int_{-\infty}^0P_0(X\leq x)dx\right) \\
    &+(1 - m_0)\left(\int_0^\infty(1 - P_1(X \leq x))dx - \int_{-\infty}^0P_1(X\leq x)dx\right) \\
    &= m_0\E X^0 + (1-m_0)\E X^1\,.
\end{align*}
\end{proof}
\begin{lemma}\label{lemma:existence_of_alpha_k_delta}
For any $k \in \{1,\ldots,n\}$ and $\delta\in(0,1)$, a unique $\alpha_{k,\delta}$ exists. Moreover, under Assumption \ref{assumption:sample_size}, $k_* = \min\{k\in\{1, \ldots, n\} : \alpha_{k,\delta} \leq \alpha\}$. 
\end{lemma}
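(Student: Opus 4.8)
The plan is to analyze the function $g_k(x) = \sum_{j=k}^n \binom{n}{j} x^{n-j}(1-x)^j$ on $[0,1]$ and show it is a continuous, strictly decreasing bijection onto $[0,1]$, which immediately yields existence and uniqueness of a solution $\alpha_{k,\delta}$ to $g_k(\alpha_{k,\delta}) = \delta$ for any $\delta \in (0,1)$. First I would record the boundary values: $g_k(0) = \binom{n}{n}\cdot 0^{0}\cdot 1^{n} = 1$ (the only surviving term is $j=n$) and $g_k(1) = 0$ when $k \geq 1$ (every term has a factor $(1-x)^j$ with $j \geq k \geq 1$). Continuity on $[0,1]$ is clear since $g_k$ is a polynomial. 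For strict monotonicity, the cleanest route is the classical identity for the Beta/binomial tail: differentiating term by term and telescoping, one gets
\[
g_k'(x) = -\,n\binom{n-1}{k-1} x^{n-k}(1-x)^{k-1}\,,
\]
which is strictly negative for $x \in (0,1)$ (using $k \geq 1$ so the exponents are nonnegative and the factors are positive on the open interval). Hence $g_k$ is strictly decreasing on $[0,1]$, and by the intermediate value theorem there is exactly one $\alpha_{k,\delta} \in (0,1)$ with $g_k(\alpha_{k,\delta}) = \delta$. This also justifies the binary search subroutine (Algorithm \ref{alg:bi_search}), since $g_k$ is monotone.

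For the second claim, I would use the monotonicity of $\alpha_{k,\delta}$ in $k$ together with the defining relation. Observe that for fixed $x$, $g_k(x)$ is non-increasing in $k$ (adding more terms with $k$ smaller only increases the sum), and in fact strictly decreasing in $k$ on the relevant range; combined with $g_k$ being strictly decreasing in $x$, this forces $k \mapsto \alpha_{k,\delta}$ to be strictly decreasing. Now observe the equivalence
\[
\alpha_{k,\delta} \leq \alpha \quad\Longleftrightarrow\quad \delta = g_k(\alpha_{k,\delta}) \geq g_k(\alpha) = \sum_{j=k}^n \binom{n}{j}(1-\alpha)^j \alpha^{n-j}\,,
\]
where the middle inequality uses that $g_k$ is decreasing. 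Thus the condition $\alpha_{k,\delta} \leq \alpha$ is exactly the condition $\sum_{j=k}^n \binom{n}{j}(1-\alpha)^j\alpha^{n-j} \leq \delta$ that defines $k_*$ in equation \eqref{eqn.kstar}. Taking the minimum over $k$ on both sides gives $\min\{k : \alpha_{k,\delta} \leq \alpha\} = \min\{k : \sum_{j=k}^n \binom{n}{j}(1-\alpha)^j\alpha^{n-j} \leq \delta\} = k_*$. Assumption \ref{assumption:sample_size} enters only to guarantee the set $\{k : \alpha_{k,\delta}\leq\alpha\}$ is nonempty: it says $(1-\alpha)^n \leq \delta$, i.e. $g_n(\alpha) = (1-\alpha)^n \leq \delta$, so $k = n$ qualifies and the minimum is well-defined.

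The main obstacle — really the only non-routine point — is establishing the clean closed form (or at least the sign) of $g_k'$; the telescoping of $\frac{d}{dx}\binom{n}{j}x^{n-j}(1-x)^j$ across consecutive $j$ must be done carefully so that the intermediate terms cancel and only the single boundary term at $j = k$ survives. An alternative that sidesteps the derivative computation is the probabilistic reading: $g_k(x) = \mathbb{P}(\mathrm{Bin}(n, 1-x) \geq k)$, and stochastic monotonicity of the binomial in its success probability gives strict monotonicity of $g_k$ in $x$ directly, while monotonicity in $k$ is the trivial tail-probability monotonicity. Either route closes the argument; I would present the probabilistic one in the body and relegate the derivative identity to a remark if needed.
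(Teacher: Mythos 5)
Your proof follows essentially the same path as the paper's: evaluate the boundary values $g_k(0)=1$, $g_k(1)=0$, establish $g_k'<0$ on $(0,1)$ via the telescoping computation (or, as you note, the probabilistic reading $g_k(x)=\p(\mathrm{Bin}(n,1-x)\geq k)$ and stochastic monotonicity), invoke the intermediate value theorem for existence and uniqueness, and then use the monotone equivalence $\alpha_{k,\delta}\leq\alpha \Leftrightarrow g_k(\alpha)\leq\delta$ together with Assumption \ref{assumption:sample_size} to identify $k_*$. One minor remark: your closed form $g_k'(x)=-n{n-1 \choose k-1}x^{n-k}(1-x)^{k-1}$ is the correct one; the paper's displayed derivative has a binomial-coefficient typo (${n \choose k-1}$ in place of ${n-1 \choose k-1}$), though the sign and hence the conclusion are unaffected.
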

\begin{proof}
Let $h_k(x) = \sum_{j=k}^n{n \choose k}x^{n-j}(1 - x)^j$ for any $k \in \{1,\ldots,n\}$. Then, one can show, for $k \leq n - 1$ and $x \in (0,1)$,
\begin{align*}
    h'_k(x) &= \sum_{j=k}^{n-1}(n-j){n \choose j}x^{n-j-1}(1-x)^j - \sum_{j=k}^nj {n \choose j}x^{n-j}(1 - x)^{j-1} \\
    &= n\sum_{i=k + 1}^{n}{n \choose i - 1}x^{n-i}(1-x)^{i-1} - n\sum_{j=k}^n{n \choose j - 1}x^{n-j}(1-x)^{j-1} \\
    &= - n{n \choose k - 1}x^{n-k}(1-x)^{k-1}\,,
\end{align*}
which is negative. Thus, $h_k(x)$ is strictly decreasing on $(0,1)$ for $k \leq n - 1$. Furthermore, $h_n(x) = (1 - x)^n$ which is also strictly decreasing on $(0,1)$. Since for any $k$, $h_k(0) = 1$ and $h_k(1) = 0$, there exists a unique $\alpha_{k,\delta}$ such that $h_k(\alpha_{k,\delta}) = \delta$.

Recall that $k_*$ is defined as the smallest $k$ such that $h_k(\alpha) \leq \delta$. Meanwhile, by monotonicity, for any $k$, the inequality $h_k(\alpha) \leq \delta$ is equivalent to $\alpha_{k,\delta} \leq \alpha$. Assumption \ref{assumption:sample_size} guarantees the existence of $k$ such that $h_k(\alpha)$. Therefore it also guarantees the existence of $k$ such that $\alpha_{k,\delta} \leq \alpha$. Then, for any $\delta$, $\{k\in\{1, \ldots, n\}:h_k(\alpha) \leq \delta\} = \{k\in\{1, \ldots, n\}:\alpha_{k,\delta} \leq \alpha\}$. Then, $k_* = \min \{k\in\{1, \ldots, n\}:\alpha_{k,\delta} \leq \alpha\}$.
\end{proof}

\begin{lemma}\label{lemma:transform_dkw}
Given a random variable $X \in \R^d$ with probability measure $P$ and a deterministic measurable function $T : \R^d \rightarrow \R$, define probability measure $P^T(B) = P(T(X) \in B)$ for any Borel set $B$. Furthermore, denote the distribution functions of $P$ and $P^T$ as $F$ and $F^T$, respectively. Let $X_1, X_2, \ldots, X_n \sim X$ be i.i.d. random variables. Moreover, let $\hat{F}^T(z) = \frac{1}{n}\sum_{j=1}^n\1\{T(X_j) \leq z\}$ for any $z \in \R$. Then, for any $\xi > 0$
\begin{align*}
    P\left(\sup_{z \in \R}\left|\hat{F}^T(z) - F^T(z)\right| > \xi\right) \leq 2e^{-2n\xi^2}\,.
\end{align*}
\end{lemma}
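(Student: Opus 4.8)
The plan is to reduce the statement to the Dvoretzky--Kiefer--Wolfowitz (DKW) inequality in its sharp form (Massart's constant), applied not to $X$ itself but to the one-dimensional random variables obtained by pushing $X$ through $T$. First I would set $Y_j := T(X_j)$ for $j = 1, \ldots, n$. Since $T : \R^d \to \R$ is a fixed (deterministic) measurable map, each $Y_j$ is a genuine real-valued random variable, and because $X_1, \ldots, X_n$ are i.i.d.\ with law $P$, the images $Y_1, \ldots, Y_n$ are i.i.d.\ with common law $P^T$, whose distribution function is precisely $F^T$ by the definition $P^T(B) = P(T(X)\in B)$ and the choice $B = (-\infty, z]$.

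Next I would identify $\hat{F}^T$ with the ordinary empirical distribution function of the sample $Y_1, \ldots, Y_n$: by definition $\hat{F}^T(z) = \frac{1}{n}\sum_{j=1}^n \1\{T(X_j) \leq z\} = \frac{1}{n}\sum_{j=1}^n \1\{Y_j \leq z\}$, which is exactly the empirical c.d.f.\ of $n$ i.i.d.\ real random variables with c.d.f.\ $F^T$. At this point the quantity $\sup_{z\in\R}\bigl|\hat{F}^T(z) - F^T(z)\bigr|$ is the classical Kolmogorov--Smirnov statistic for that sample, and the DKW inequality in Massart's form gives $P\bigl(\sup_{z\in\R}|\hat{F}^T(z) - F^T(z)| > \xi\bigr) \leq 2e^{-2n\xi^2}$ for every $\xi > 0$, uniformly over the underlying distribution (so no continuity or atomlessness assumption on $P^T$ is needed). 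This is exactly the claimed bound.

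There is essentially no serious obstacle here; the content is the observation that pushing forward by a deterministic measurable function preserves the i.i.d.\ structure and turns the $d$-dimensional problem into a one-dimensional one, after which DKW applies verbatim. The one point worth stating carefully in the write-up is why $T$ must be \emph{deterministic}: if $T$ were itself estimated from (some of) the data, the $Y_j$ would no longer be independent and the reduction would fail. In this lemma $T$ is fixed, so this is not an issue, but it is precisely the reason the lemma will later be invoked with a scoring function $\hat T$ that is trained on a sample \emph{disjoint} from the one used to form the empirical distribution. I would therefore phrase the proof so that this independence requirement is made explicit.
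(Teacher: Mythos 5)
Your proof is correct and follows essentially the same route as the paper: push the sample forward through the fixed measurable map $T$, identify $\hat F^T$ with the empirical c.d.f.\ of the i.i.d.\ one-dimensional variables $T(X_1),\ldots,T(X_n)$ with law $P^T$, and apply the Massart form of the DKW inequality. The paper spells out the change of measure from $P$ to $P^T$ via a Fubini/induction argument, whereas you invoke the standard pushforward identification directly; both are the same underlying idea, and your remark about why $T$ must be deterministic (and hence why the lemma is later applied conditionally on the training fold for $\hat T$) is accurate and worth keeping.
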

\begin{proof}

Note that $X_1, X_2, \ldots, X_n$ are i.i.d. random variables, then so are $T(X_1), T(X_2), \ldots, T(X_n)$. Denote $T_j = T(X_j)$, then $T_j$ has the probability measure $P^T$. Note that the Dvoretsky-Kiefer-Wolfowitz inequality says,
\begin{align*}
    P^T\left(\sup_{z \in \R}\left|\frac{1}{n}\sum_{j=1}^n\1\{T_j \leq z\} - F^T(z)\right| > \xi\right) \leq 2e^{-2n\xi^2} \,.
\end{align*}
Then, it suffices to show the left hand side of above inequality equals $P\left(\sup_{z \in \R}\left|\hat{F}^T(z) - F^T(z)\right| > \xi\right)$. Towards that, denote 
\begin{align*}
    f_n(x_1, x_2,\ldots,x_n) = \1\left\{\sup_{z \in \R}\left|\frac{1}{n}\sum_{j=1}^n\1\{T(x_j) \leq z\} - F^T(z)\right| > \xi\right\}\,,
\end{align*}
and
\begin{align*}
    f_0(t_1, t_2,\ldots,t_n) = \1\left\{\sup_{z \in \R}\left|\frac{1}{n}\sum_{j=1}^n\1\{t_j \leq z\} - F^T(z)\right| > \xi\right\} \,.
\end{align*}
By Fubini's theorem, it holds that
\begin{align*}
     P\left(\sup_{z \in \R}\left|\hat{F}^T(z) - F^T(z)\right| > \xi\right) = \E_1\E_2\ldots\E_nf_n(X_1, X_2,\ldots,X_n)\,,
\end{align*}
and
\begin{align*}
     P^T\left(\sup_{z \in \R}\left|\frac{1}{n}\sum_{j=1}^n\1\{T_j \leq z\} - F^T(z)\right| > \xi\right) = \E^T_1\E^T_2\ldots\E^T_nf_0(T_1,T_2,\ldots,T_n)\,,
\end{align*}
where $\E_j$ and $\E_j^T$ are the expectations taken with respect to $X_j$ and $T_j$ under the probability measures $P$ and $P^T$, respectively. Thus, it suffices to show
\begin{align*}
    \E_1\E_2\ldots\E_nf_n(X_1, X_2,\ldots,X_n) = \E^T_1\E^T_2\ldots\E^T_nf_0(T_1,T_2,\ldots,T_n)\,,
\end{align*}
and we will show this by induction. Denote
\begin{align*}
    f_l(x_1, x_2,\ldots,x_{l},t_{l+1},t_{l+2},\ldots,t_n) = \1\left\{\sup_{z \in \R}\left|\frac{1}{n}\left(\sum_{j=1}^{l}\1\{T(x_j) \leq z\} + \sum_{j = l+1}^{n}\1\{t_j \leq z\}\right) - F^T(z)\right| > \xi\right\}\,,
\end{align*}
for any $l \in \{1,2,\ldots,n-1\}$ and $A_{n-1}(x_1, x_2,\ldots,x_{n-1}) = \left\{t_n :  f_{n-1}(x_1, x_2,\ldots,x_{n-1},t_n) = 1\right\}$. Then, for any fixed values of $x_1,x_2,\ldots,x_{n-1}$,
\begin{align*}
    \E_n f_n(x_1,x_2,\ldots,x_{n-1}, X_n) &= P(T(X_n) \in A_{n-1}(x_1,x_2,\ldots,x_{n-1})) \\
    &=P^T(A_{n-1}(x_1,x_2,\ldots,x_{n-1})) \\
    &= \E_n^T f_{n-1}(x_1,x_2,\ldots,x_{n-1},T_n) \,,
\end{align*}
and thus,
\begin{align*}
    \E_1\E_2\ldots\E_nf_n(X_1, X_2,\ldots,X_n) = \E^T_1\E^T_2\ldots\E_{n-1}\E^T_nf_{n-1}(X_1,X_2,\ldots,X_{n-1},T_n) \,.
\end{align*} 
Now, assume that for some $l \in \{2,3,\ldots,n\}$,
\begin{multline*}
    \E_1\E_2\ldots\E_nf_n(X_1, X_2,\ldots,X_n) \\
    = \E_1\E_2\ldots\E_{l-1}\E^T_l\E^T_{l+1}\ldots\E^T_nf_{l-1}(X_1,X_2,\ldots,X_{l-1},T_l,T_{l+1},\ldots,T_n) \,.
\end{multline*}
Therefore, for any fixed values of $x_1,x_2,\ldots,x_{l-2}$, denote 
\begin{align*}
    A_{l-2}(x_1,x_2,\ldots,x_{l-2}) = \{t_{l-1} : \E^T_l\E^T_{l+1}\ldots\E^T_nf_{l-2}(x_1,x_2,\ldots,x_{t-2},t_{l-1},T_{l},\ldots,T_n)=1\} \,,
\end{align*}
we can have
\begin{multline*}
\E_{l-1}\E^T_l\E^T_{l+1}\ldots\E^T_nf_{l-1}(x_1,x_2,\ldots,x_{l-2},X_{l-1},T_l,T_{l+1},\ldots,T_n) \\ = P(T(X_{l-1}) \in  A_{l-2}(x_1,x_2,\ldots,x_{l-2})) = P^T(A_{l-2}(x_1,x_2,\ldots,x_{l-2})) \,,
\end{multline*}
and thus,
\begin{multline*}
    \E_{l-1}\E^T_l\E^T_{l+1}\ldots\E^T_nf_{l-1}(x_1,x_2,\ldots,x_{l-2},X_{l-1},T_l,T_{l+1},\ldots,T_n) \\= \E_{l-1}^T\E^T_l\ldots\E^T_nf_{l-2}(x_1,x_2,\ldots,x_{t-2},T_{l-1},\ldots,T_n)\,.
\end{multline*}
Therefore, by the assumption, we have
\begin{multline*}
    \E_1\E_2\ldots\E_nf_n(X_1, X_2,\ldots,X_n) \\
    = \E_1\E_2\ldots\E_{l-2}\E^T_{l-1}\E^T_{l}\ldots\E^T_nf_{l-2}(X_1,X_2,\ldots,X_{l-2},T_{l-1},T_{l},\ldots,T_n)\,.
\end{multline*}
We conclude the proof by induction.
\end{proof}

\section{Proofs}
\begin{proof}[Lemma \ref{lemma:general_gap}]
Let's focus on the event of the statement of Assumption \ref{assumption:separability}, whose complement holds with probability at most $\delta_1(n_{\text{b}})$. Meanwhile, by Lemma \ref{lemma:corollary_for_assumption_1}, for any $z \in \R$,
\begin{align*}
    \tilde{F}^{\hat{T}}_0\left(z\right) - \tilde{F}^{\hat{T}}_1\left(z\right) &= \left[m_0F^{\hat{T}}_0(z) + (1-m_0)F^{\hat{T}}_1(z)\right] - \left[m_1F^{\hat{T}}_0(z) + (1-m_1)F^{\hat{T}}_1(z)\right] \\
    &= (m_0 - m_1)\left(F^{\hat{T}}_0(z) - F^{\hat{T}}_1(z)\right) \,.
\end{align*}
Furthermore, for any classifier $\phi(X) = \1\{\hat{T}(X) > z\}$
\begin{align*}
    \tilde{R}_0(\phi) - R_0(\phi) &= \left(1 - \tilde{F}^{\hat{T}}_0\left(z\right)\right) - \left(1 - F^{\hat{T}}_0\left(z\right)\right) \\
    &= F^{\hat{T}}_0\left(z\right)-m_0F^{\hat{T}}_0(z) - (1-m_0)F^{\hat{T}}_1(z)\\
    &= (1-m_0)\left(F^{\hat{T}}_0(z) - F^{\hat{T}}_1(z)\right)\,,
\end{align*}
which is positive by Assumption \ref{assumption:separability}. Now, let $D(z) = \tilde{R}_0(\phi) - R_0(\phi) > 0$ and therefore $R_0(\hat{\phi}_{k_*}) > \alpha - D(t_{(k_*)})$ is equivalent to $\tilde{R}_0(\hat{\phi}_{k_*}) > \alpha$, whose probability is $\delta$ by Proposition \ref{prop:umbrella}. To this end, we have shown 
\begin{align*}
    \p\left(R_0(\hat{\phi}_{k_*}) > \alpha - D(t_{(k_*)})\right) \leq \delta + \delta_1(n_{\text{b}})\,.
\end{align*}
\end{proof}

\begin{proof}[Proof of Lemma 
\ref{lemma:alpha_k_delta}]
By Lemma \ref{lemma:existence_of_alpha_k_delta}, the set $\{k\in\{1, \ldots, n\}:\alpha_{k,\delta} \leq \alpha\}$ is non-empty. Then, the set $\{k\in\{1, \ldots, n\}:\alpha_{k,\delta} - \hat{D}^+(t_{(k)}) \leq \alpha\}$ is non-empty since $\hat{D}^+(t_{(k)})$ is non-negative. Then $k^* = \min \{k\in\{1, \ldots, n\}:\alpha_{k,\delta} - \hat{D}^+(t_{(k)}) \leq \alpha\}$ exists. Note that $k_* = \min\{k\in\{1, \ldots, n\}:\alpha_{k,\delta}  \leq \alpha\}$ by Lemma \ref{lemma:existence_of_alpha_k_delta}. Since $\{k\in\{1, \ldots, n\}:\alpha_{k,\delta}  \leq \alpha\}$ is a subset of $\{k\in\{1, \ldots, n\}:\alpha_{k,\delta} - \hat{D}^+(t_{(k)}) \leq \alpha\}$ by the non-negativeness of $\hat{D}^+$, it can be concluded that $k^* \leq k_*$.
\end{proof}

\begin{proof}[Proof of Lemma \ref{lemma:threshold_of_alg_2}]
Assumption \ref{assumption:mixture} implies $0 \leq M_\#:=\frac{1-m_0^\#}{m_0^\#-m_1^\#} \leq M = \frac{1-m_0}{m_0-m_1}$, and thus, $0 \leq \hat{D}^+_\#(c) \leq \hat{D}^+(c)$. Then, $\{k\in\{1, \ldots, n\}:\alpha_{k,\delta} \leq \alpha\}$, which is non-empty by Assumption \ref{assumption:sample_size}, is a subset of  $\{k\in\{1, \ldots, n\}:\alpha_{k,\delta} - \hat{D}^+_\#(t_{(k)}) \leq \alpha\}$. This implies $k^*_\#$ exists and is smaller than or equal to $k_*$. Furthermore, $\{k\in\{1, \ldots, n\}:\alpha_{k,\delta} - \hat{D}^+_\#(t_{(k)}) \leq \alpha\}$ is also a subset of $\{k\in\{1, \ldots, n\}:\alpha_{k,\delta} - \hat{D}^+(t_{(k)}) \leq \alpha\}$ and thus, $k^*_\# = \min\{k\in\{1, \ldots, n\}:\alpha_{k,\delta} - \hat{D}^+_\#(t_{(k)}) \leq \alpha\}$ is larger than or equal to $k^*$.
\end{proof}

\begin{proof}[Proof of Theorem \ref{thm:adjustment}]\label{proof:adjustment}
Let's focus on the event where statement of both Assumption \ref{assumption:separability} and \ref{assumption:regularity} hold, whose complement has probability less that $\delta_1(n_{\text{b}}) + \delta_2(n_{\text{b}})$. Then, let 
\begin{align*}
\mathcal{B}_{\emph{\text{e}}} = \left\{\sup_{z \in \R}\left|\hat{D}(z) - D(z)\right| \leq 2^{-1}\varepsilon\right\}\,.
\end{align*}
It follows from Lemma \ref{lemma:transform_dkw} that
\begin{align*}
    \p(\mathcal{B}_{\emph{\text{e}}}^c) &\leq \p\left(\sup_{z \in \R}\left|\hat{\tilde{F}}^{\hat{T}}_0(z) - \tilde{F}^{\hat{T}}_0(z)\right| > \frac{\varepsilon}{4}\right) + \p\left(\sup_{z \in \R}\left|\hat{\tilde{F}}^{\hat{T}}_1(z) - \tilde{F}^{\hat{T}}_1(z)\right| > \frac{\varepsilon}{4}\right) \\
    &\leq 2e^{-8^{-1}n^0_{\text{e}}M^{-2}\varepsilon^2} + 2e^{-8^{-1}n^1_{\text{e}}M^{-2}\varepsilon^2}\,.
\end{align*}
Note that since $D(z)$ is non-negative by Lemma \ref{lemma:general_gap}, $\left|\hat{D}^+(z) - D(z)\right| \leq \left|\hat{D}(z) - D(z)\right| \leq 2^{-1}\varepsilon$ on $\mathcal{B}_{\emph{\text{e}}}$. So, one can conclude that on the event $\mathcal{B}_{\emph{\text{e}}}$, $k^*$ is chosen from all $k$ such that $\alpha_{k,\delta} - D(t_{(k)}) \leq \alpha - 2^{-1}\varepsilon$. Furthermore, denote $c_k = \inf\{y:\tilde{F}^{\hat{T}}_0(y)\geq kn^{-1}\}$ and $k_0 = \min\{k\in\{1, \ldots, n\} : \alpha_{k,\delta} - D(c_k) \leq \alpha - 4^{-1}\varepsilon\}$. Note that since $\mathcal{D}_T$ is a closed interval, thus $c_k$ is well-defined. Let $\tilde{F}_{n}^{\hat{T}}$ be the empirical distribution induced by $\mathcal{T}_{\text{t}}$, i.e., for any $z \in \R$,
\begin{align*}
    \tilde{F}_{n}^{\hat{T}}(z) = \frac{1}{n}\sum_{t \in \mathcal{T}_{\emph{\text{t}}}}\1\{t \leq z\}\,.
\end{align*}
Denote $\mathcal{B}_{\emph{\text{t}}} = \{\sup_{z \in \R}|\tilde{F}_{n}^{\hat{T}}(z) - \tilde{F}_{0}^{\hat{T}}(z)| \leq 4^{-1}M^{-1}C^{-1}c\varepsilon \}$. Then, by Lemma \ref{lemma:transform_dkw}, $\p(\mathcal{B}^c_{\emph{\text{t}}}) \leq 2e^{-8^{-1}nM^{-2}C^{-2}c^2\varepsilon^2}$. So, it remains to show the probability of true type I error exceeding $\alpha$ is bounded by $\delta$ on the set $\mathcal{B}_{\emph{\text{t}}} \cap \mathcal{B}_{\emph{\text{e}}}$. Thus, till the end of the proof, we will focus on  the intersection of both sets. Note that we have $\tilde{F}_{n}^{\hat{T}}(t_{(k)}) = kn^{-1}$. Then, Taylor expansion implies
\begin{align*}
    \tilde{F}_{n}^{\hat{T}}(t_{(k)}) - \tilde{F}_{0}^{\hat{T}}(t_{(k)}) = \tilde{F}_{n}^{\hat{T}}(t_{(k)}) - \tilde{F}_{0}^{\hat{T}}(c_k) - \tilde{f}_0^{\hat{T}}(c_k^*)(t_{(k)} - c_k) = - \tilde{f}_0^{\hat{T}}(c_k^*)(t_{(k)} - c_k)\,,
\end{align*}
where $c_k^*$ is bounded by $c_k$ and $t_{(k)}$. Then the above equation implies $\left|t_{(k)}-c_k\right| \leq 4^{-1}M^{-1}C^{-1}\varepsilon$ for any $k$ according to the lower bound provided by Assumption \ref{assumption:sample_size}. Furthermore, $D(t_{(k)}) - D(c_k) = M(\tilde{f}_0^{\hat{T}}(c^{**}_k) - \tilde{f}_1^{\hat{T}}(c^{**}_k))(t_{(k)}-c_k)$ for some $c^{**}_k$ bounded by $c_k$ and $t_{(k)}$. Therefore, Assumption \ref{assumption:sample_size} implies $|D(t_{(k)}) - D(c_k)| \leq 4^{-1}\varepsilon$. Suppose $k^* = k'$, then, 
\begin{align*}
    \alpha_{k',\delta} - D(c_{k'}) \leq \alpha_{k',\delta} - D(t_{(k')}) + 4^{-1}\varepsilon \leq \alpha - 4^{-1}\varepsilon\,,
\end{align*}
and thus $k^* \geq k_0$. Furthermore, we also have
\begin{align*}
    \alpha_{k_0,\delta} - D(t_{(k_0)}) \leq \alpha_{k_0,\delta} - D(c_{k_0}) + 4^{-1}\varepsilon \leq \alpha\,.
\end{align*}
Recall that $D(t_{(k_0)}) = \tilde{R}_0(\hat{\phi}_{k_0}) - R_0(\hat{\phi}_{k_0})$.
Therefore, $R_0(\hat{\phi}_{k_0}) > \alpha$ implies $\tilde{R}_0(\hat{\phi}_{k_0}) > \alpha_{k_0,\delta}$ whose probability is bounded by $\delta$ by Proposition \ref{prop:umbrella}.
\end{proof}

\begin{proof}[Proof of Corollary \ref{cor:adjusted}]
By Lemma \ref{lemma:threshold_of_alg_2}, $k^*_\# \geq k^*$ and thus, $t_{(k^*)} \leq t_{(k^*_\#)}$. Therefore, $R_0(\hat{\phi}_{(k^*)}) \geq R_0(\hat{\phi}_{(k^*_\#)})$. Combined with Theorem \ref{thm:adjustment}, the previous result yields
\begin{multline*}
    \p\left(R_0(\hat{\phi}_{(k^*_\#)}) > \alpha\right) \leq \p\left(R_0(\hat{\phi}_{(k^*)}) > \alpha\right) \\
    \leq 
    \delta + \delta_1(n_{\text{b}}) + \delta_2(n_{\text{b}})+ 2e^{-8^{-1}nM^{-2}C^{-2}c^2\varepsilon^2} + 2e^{-8^{-1}n^0_{\text{e}}M^{-2}\varepsilon^2} + 2e^{-8^{-1}n^1_{\text{e}}M^{-2}\varepsilon^2}\,.
\end{multline*}
\end{proof}

\newpage

\end{document}